\documentclass[11pt]{article}
\usepackage{graphicx}
\usepackage{amsmath}
\usepackage{amssymb}
\usepackage{latexsym}
\usepackage{multirow}
\usepackage{algorithm}
\usepackage{csvsimple}
\usepackage{booktabs}
\usepackage[noend]{algpseudocode}
\newcommand{\com}[1]{\hfill{\scriptsize\(\triangleright$ #1}}

\usepackage{amsmath, amsfonts,amssymb, amsthm, euscript,makeidx,color,mathrsfs}

\newtheorem{assumption}{Assumption}
\def\qed{ \ \vrule width.2cm height.2cm depth0cm\smallskip}

\newcommand{\eps}{\varepsilon}

\newcommand{\ba}{\begin{array}}
\newcommand{\ea}{\end{array}}
\newcommand{\be}{\begin{equation}}
\newcommand{\ee}{\end{equation}}
\newcommand{\bea}{\begin{eqnarray}}
\newcommand{\eea}{\end{eqnarray}}
\newcommand{\beaa}{\begin{eqnarray*}}
\newcommand{\eeaa}{\end{eqnarray*}}

\def\neg{\negthinspace}

\def\dbE{\mathbb{E}}
\def\dbF{\mathbb{F}}

\def\dbR{\mathbb{R}}

\def\a{\alpha}
\def\g{\gamma}
\def\d{\delta}
\def\e{\varepsilon}

\def\l{\lambda}

\def\si{\sigma}

\def\f{\varphi}
\def\th{\theta}

\def\f{\phi}

\def\D{\Delta}
\def\G{\Gamma}

\def\O{\Omega}

\def\G{\Gamma}
\def\D{\Delta}

\def\O{\Omega}
\def\cA{{\cal A}}

\def\cF{{\cal F}}

\def\cH{{\cal H}}

\def\cN{{\cal N}}

\def\cP{{\cal P}}

\def\cU{{\cal U}}
\def\cV{{\cal V}}
\def\cW{{\cal W}}
\def\cX{{\cal X}}

\def\hE{\mathbb{E}}
\def\hF{\mathbb{F}}

\def\hI{\mathbb{I}}

\def\hL{\mathbb{L}}

\def\hP{\mathbb{P}}

\def\hR{\mathbb{R}}

\def\hT{\mathbb{T}}

\def\no{\noindent}

\def\ss{\smallskip}
\def\ms{\medskip}
\def\bs{\bigskip}
\def\q{\quad}
\def\qq{\qquad}

\def\pa{\partial}
\def\cd{\cdot}
\def\cds{\cdots}

\def\td{\nabla}

\def\bE{{\bf E}}

\def\tr{\hbox{\rm tr}}

\def\qed{ \hfill \vrule width.25cm height.25cm depth0cm\smallskip}

\newcommand{\basa}{\begin{assumption}}
\newcommand{\easa}{\end{assumption}}

\newcommand{\bas}{\begin{assum}}
\newcommand{\eas}{\end{assum}}

\def\pa{\partial}

 \def\cd{\cdot}
\def\cds{\cdots}

\def\tr{\hbox{\rm tr$\,$}}

\def\dis{\displaystyle}

\def\1{{\bf 1}}

\def\:{\!:\!}
\def\reff#1{{\rm(\ref{#1})}}
\def \proof{{\noindent \bf Proof\quad}}

at 9pt

\newcommand{\norm}[1]{\ensuremath{ \left\Vert #1 \right\Vert }}

\def\prehp(#1,#2){\ensuremath{  #1 \cdot #2 }}

\begin{document}

\newtheorem{thm}{Theorem}[section]
\newtheorem{lem}[thm]{Lemma}
\newtheorem{cor}[thm]{Corollary}
\newtheorem{prop}[thm]{Proposition}
\newtheorem{rem}[thm]{Remark}
\newtheorem{eg}[thm]{Example}
\newtheorem{defn}[thm]{Definition}
\newtheorem{assum}[thm]{Assumption}

\renewcommand {\theequation}{\arabic{section}.\arabic{equation}}
\def\thesection{\arabic{section}}

\title{\bf  Learning to Solve Stochastic Controls with Unknown Drifts and Running Rewards: Theory, Algorithms and Convergence} %for Continuous Time Stochastic Optimization with Model Uncertainty}

\author{
Jin Ma\thanks{\noindent Department of
Mathematics, University of Southern California, Los Angeles, CA 90089; email: jinma@usc.edu. This author is supported in part by NSF grant  \#DMS-2510403.},
~ ~ Gaozhan Wang\thanks{ \noindent Department of
Mathematics, University of Southern California, Los Angeles, CA 90089;
email: gaozhanw@usc.edu.  } ~ ~ Jianfeng Zhang \thanks{ \noindent Department of
Mathematics, University of Southern California, Los Angeles, CA 90089;
email: jianfenz@usc.edu. This author is supported in part by NSF grant  \#DMS-2510403. } ~ and ~ Xun Yu Zhou \thanks{ \noindent Department of Industrial Engineering and Operations Research, Columbia University, New York, NY 10027; email: xz2574@columbia.edu. This author is supported in part by the Nie Center for Intelligent Asset Management at Columbia. }}

%\date{}
\date{\today}
\maketitle
%{\color{blue}
\begin{abstract}
We study continuous-time and possibly high-dimensional stochastic control problems where drift coefficients and running reward functions are unknown. Due to these missing model primitives we take the exploratory, reinforcement learning (RL) framework of Wang,
Zariphopoulou, and Zhou \cite{WZZ} with relaxed controls and entropy regularization.
The objective is to develop theoretically grounded,  efficient and scalable RL algorithms to learn both the optimal value functions (which also solves the exploratory HJB equation) and optimal exploratory feedback control policies.
When the diffusion coefficients do not contain control, we employ probabilistic representations of both the optimal value function and its gradient based on an auxiliary state process depending only on the diffusion part of the original dynamics. With a delicate analysis on some properly defined mappings and their fixed points, this leads to the introduction of our policy iteration algorithms and their convergence.
We demonstrate the performance of our algorithms through various numerical examples. Finally, we study a special control-dependent diffusion case where probability  representation of the Hessian is called for.
%In this paper we continue investigating efficient numerical algorithm for entropy-regularized HJB equation studied in our previous work
%\cite{MWZ1}, focusing on the issue of model uncertainty. More precisely, unlike the
%%despite of the strong theoretical convergence, the
%policy iteration algorithm (PIA) in \cite{MWZ1}, which implicitly depends on the derivative of the true solution of the recursive PDE, as well as the knowledge of the system parameter via the Gibbs form, our new algorithm approximates both the solution to the HJB equation and its derivative simultaneously through their probabilistic representation and an auxiliary state process depending only on the diffusion coefficient. Consequently, our algorithm has a much stronger implementability, without requiring any knowledge of the drift coefficient. In the finite horizon case, we prove the convergence of the algorithm, along with its rate of convergence, and we perform various numerical experiments via neural networks without specifying the drift. We also present an unexpected but intriguing discovery of {\it $\l$-smile} on relationship between the convergence and the temperature parameter $\l$. We shall also extend our algorithm to the infinite horizon case with volatility control.
%These hidden dependence cause serious issues in implementation, since the usual regression on the solution to a PDE by no means imply the satisfactory approximation of its derivatives, and the system parameters are supposed to be unknown in a true model uncertainty environment.
 \end{abstract}

\no{\bf Keywords.}
Stochastic control, reinforcement learning, entropy regularization, exploratory HJB equation, model-free algorithms.
%\ms

%\no{\it 2020 AMS Mathematics subject classification:}   93E35, 60H30, 35Q93.}

\bs

%\eject
\section{Introduction}
\label{sect-introduction}

Stochastic control problems are prevalent in everyday applications. Classical continuous-time stochastic control theory is {\it model-based}, namely, it specifies a system dynamics, typically a controlled stochastic differential equation (SDE), along with a given running reward function and, when the time horizon is finite, a terminal reward/payoff function. It has developed into a mature theory in the past 60 years or so, with its key pillars including maximum principle, dynamics programming, and linear--quadratic controls \cite{FS,YZ}.

There is, however, a fundamental flaw with this model-based approach. On one hand, some of the dynamics coefficients, especially the drifts, are notoriously hard or outright impossible to estimate to a workable accuracy.  Moreover, in many applications one only observes a reward {\it signal} once an action is applied, instead of knowing the functional form of a running reward. On the other hand, optimal controls and value functions are often very sensitive to those model parameters/coefficients. The conflict between the intrinsic rigidity of a model and the inherent model uncertainty or inaccessibility leads to erroneous, irrelevant and even misleading solutions.

This is where reinforcement learning (RL) comes to the rescue. Model-free RL by-passes estimation of model parameters and seeks optimal controls {\it directly} based on observed, simulated, or generated data. Because the mid-step of model estimation is skipped, RL solutions are naturally
more swift in responding to the (ever) changing environment and more robust.

For a very long time, RL study had almost exclusively focused on discrete-time Markov decision processes (MDPs) \cite{SB}, even though most of the important real-life applications are continuous time (with continuous state/action spaces) by nature; e.g. autonomous driving, robot navigation and high-frequency trading. Wang,
Zariphopoulou, and Zhou \cite{WZZ} are the first to present a continuous-time RL formulation, which employs (randomized) relaxed controls to characterize exploration and an entropy regularizer to
capture the exploration--exploitation tradeoff central to RL, and to obtain a theoretically optimal control policy that is a Gibbs sampler via analyzing the associated exploratory Hamilton--Jacobi--Bellman (HJB) equation.  Since then, there has been an upsurge of interest in this exploratory approach for continuous-time RL, including extensions to many types of stochastic control problems (e.g. \cite{CDY2026,DDL2025,SSZ2025, WY2025}), as well as various other related problems  (e.g. \cite{BHYZ2026,CGZ2025,DFX,GHY2026,PZZ2026}).  See also the survey papers by Hambly, Xu, and Yang  \cite{HXY}, Hu and Lauri\`{e}re \cite{HL2024}, and the references therein. Notably, the subsequent ``trilogy" \cite{JZ1,JZ2,JZ3} by Jia and Zhou builds
the theoretical foundation of model-free, data-driven continuous-time RL algorithms within the exploratory framework of \cite{WZZ}.\footnote{Here and henceforth, by ``model-free" we mean the model primitives are unknown, but there is still a basic underlying dynamics {\it structure} such as a Markov chain or an It\^o SDE.} The key technical thrust therein is a martingale theory that naturally leads to online/offline algorithms for policy evaluation, policy gradient, and $q$-learning (a continuous-time counterpart of the $Q$-learning for MDPs).

The above trilogy, however, does not address the important questions of convergence of the various algorithms devised and a regret analysis of the learned policies. Tang and Zhou \cite{Tang-Zhou} tackle these questions and establish model-free convergence and sublinear regret bounds for $q$-learning, based on the backward stochastic differential equation (BSDE) and stochastic approximation (SA) theories. A limitation of \cite{Tang-Zhou} is its strong and rather complex assumptions arising primarily from the SA technique employed. Meanwhile, \cite{HJZMV,HJZ,HZ}
derive convergence and sublinear regret bounds for martingale-based policy gradient algorithms,  in the more specific model-free settings of LQ controls and mean--variance portfolio selection, again leveraging the SA theory.

This paper studies stochastic controls with state dynamics governed by the It\^o diffusion type SDEs. The objective is to design RL algorithms and prove their convergence when the drift coefficients and the running reward functions are unknown. The specific approach we develop requires the diffusion coefficients and terminal reward functions (in the case of a finite time horizon) to be known and given, which is relatively reasonable because the former are easier to estimate (compared to the drifts) and the latter are often specified in many applications (e.g. a payoff of a stock option).

Due to the unknown drifts and running rewards, we work in the exploratory realm of \cite{WZZ}, and aim to learn the optimal critic--actor pair $(u^*,\pi^*)$, namely the optimal value function and optimal (randomized) feedback policies, both functions of the time--state pair $(t,x)$. %In the classical model-based setting,
Theoretically, this pair can be approximated  through the so-called {\it policy iteration algorithm} (PIA). PIA is based on a very simple {\it coupled} relation between optimal critic and actor: the former is the conditional expectation of the total reward functional of the latter, and the latter is the maximizer (or soft maximizer) of the Hamiltonian which is a function of the gradient/Hessian of the former. This relation naturally implies an iterative scheme to approximate the two simultaneously. PIA type of algorithms along with their convergence have been extensively studied in the classical model-based setting (where exploration is not required), in both discrete-time and continuous-time; see the literature review in \cite{MWZ1}, the predecessor of this paper.

In the exploratory setting with entropy regularization, Wang and Zhou \cite{WZ}, for a mean--variance portfolio choice problem, show that a PIA converges in just two iterations. This is because the problem therein is essentially an LQ problem, for which the optimal Gibbs sampler reduces to Gaussian and thus highly tractable. See also Giegrich,
Reisinger, and Zhang \cite{GRZ2024} for related convergence results in the LQ setting. Huang, Wang,
and Zhou \cite{Huang2023} study a general infinite horizon
 model with control-independent diffusion coefficients and establish convergence of their PIA.
 Tran, Wang, and Zhang \cite{TWZ} consider a similar problem and investigate two cases depending on whether or not control enters into the diffusion term. In both cases, with different additional assumptions, they establish the PIA convergence. The methods of these two papers \cite{Huang2023,TWZ} are both partial differential equation (PDE)-based by analyzing the underlying exploratory HJB equations. By contrast, Ma, Wang and Zhang \cite{MWZ1} employ
 purely probabilistic method, primarily  probabilistic
representation formulae for the value function and its derivatives, to design PIAs and derive their convergence. A recent work by Huang, Yu, and Zhang \cite{HYZ2026} extends the
probabilistic method and proves the PIA convergence to a time-inconsistent problem with entropy regularization.

%The aforementioned papers \cite{Huang2023,TWZ,MWZ1} are all model-based, assuming oracle access to all the model primitives. They share two essential pitfalls. First, the policy iteration involves {\it derivatives} of the value function. While these papers have all proved that the value function iterates converge, numerically the derivatives of the iterates most likely will not converge to those of the true optimal value function. Second, their algorithms are not implementable when some of the coefficients are missing as in a model-free setting. See more detailed discussions on these two issues at the end of \S 2.1.

The aforementioned papers \cite{Huang2023,TWZ,MWZ1} are all
model-based, assuming oracle access to all the model primitives.
They share two essential pitfalls. First, policy iteration involves
{\it derivatives} of the value function. While these papers
establish convergence of the value function iterates, accurate
numerical approximation of the value function alone does not
guarantee accurate approximation of its derivatives. Second, the
aforementioned model-based PIAs are not directly implementable when
some of the coefficients are missing, as in a model-free setting.
See more detailed discussions on these two issues at the end
of \S~2.1.

This paper endeavors to overcome the two pitfalls, in the setting of unknown drifts and running rewards, by stepping up the probabilistic analysis of \cite{MWZ1}. The main part of the paper assumes that the diffusion coefficient does not depend on control, as in \cite{MWZ1}, in which case one needs only to consider the gradient (but not Hessian) of the value function in the analysis. Instead of the critic--actor pair $(u^*,\pi^*)$, we consider another pair $(v^*,w^*)$, where $v^*$ is the gradient of $u^*$ (in the spatial variable) and $w^*$ the Gibbs exponent in the expression of $\pi^*$. Delicate probabilistic representation and analysis reveal that this new pair also satisfies a coupled relation that happens to be {\it independent} of the drift and running reward functions. This in turn leads to an error analysis, as well as algorithms with theoretical guarantee on the convergence of both the value function and its gradient.\footnote{We note that a related difficulty arises when estimating the spatial gradient of a conditioning function; see Guo, Tang, and Xu \cite{GTX2026}.} The algorithms we develop are data-driven, and we define precisely what ``data" and ``data-driven" mean; see Remark \ref{rem-implementability} and Assumption \ref{data} for details.

As mentioned, the main results of the paper rely on the control-independence of the diffusion coefficients. For the very special case when the state space is one-dimensional, we extend our analysis and  algorithm to include the case when control enters into diffusions in infinite time horizon. This calls for an analysis on the Hessian. Despite a very restrictive setting, to the best of our knowledge this is the first data-driven algorithm with a rigorous convergence analysis for continuous-time {\it nonlinear} stochastic controls with unknown drifts and known yet control-{\it dependent} diffusions.

We will also present a few numerical examples to demonstrate the efficiency and scalability of our learning algorithms. In particular, we provide an example to compare with a now classical benchmark of Han, Jentzen and E \cite{E2018} for solving a class of high-dimensional PDEs based on deep BSDEs. The result  shows that our algorithm achieves a similar performance in terms of accuracy of the learned  solution to the exploratory HJB equation up to the state space dimension of 100 (which reaches the ceiling of our hardware capacity), even though \cite{E2018} is model-based and ours is model-free.

Finally, this paper also provides a probabilistic  numerical method to solve the exploratory HJB equations, which is a class of potentially high-dimensional, nonlinear parabolic or elliptic PDEs with certain unknown coefficients. Such PDEs with missing coefficients are termed {\it black-box} PDEs in a concurrent paper by
Jia et al. \cite{JOPZ}, which contains a comprehensive literature review and motivations of studying black-box PDEs and/or dealing with the curse of dimensionality. That paper studies a fully nonlinear (i.e. up to the Hessian) parabolic PDE whose coefficients are all unknown and develops model-free, data-driven algorithms.  Its key idea is to represent the gradient and Hessian via  the so-called zeroth-order derivative  estimators derived from perturbed Monte Carlo trajectories, which is fundamentally different from the representation in our paper. The PDE studied in \cite{JOPZ} consists of a linear operator which is the generator of a diffusion process plus a nonhomogeneous source term which is fully nonlinear up to the Hessian. Although it includes in form the exploratory  HJB
equation as a special case, the  assumption therein that the value of the nonlinear source term is known for
each given input excludes our setting. On the other hand, our equation clearly does not cover
the one in  \cite{JOPZ}. Therefore, the two papers are mutually exclusive and complementary to each other.

The rest of the paper is organized as follows. In \S\ref{sect-idea} we introduce the problem and highlight our main ideas. In \S\ref{sect-small} we carry out an analysis necessary for designing the main algorithm and its convergence when time duration is small. The analysis is extended to the general time horizon in \S\ref{sect-multi}. In \S\ref{sect-numerical} we present the algorithms along with numerical examples. In \S\ref{sect-volatility} we study a special case with control-dependent diffusion coefficients and one-dimensional state space, with a numerical example. Finally, \S\ref{conclusion} concludes. Some of the proofs are placed in Appendix.

%\bs

\section{Problem Formulation}
\label{sect-idea}
\setcounter{equation}{0}

\no{\bf Notations.} Here we list a few notations that will be used frequently in the paper.

\begin{itemize}
\item For $x, \tilde x\in \dbR^n$, $x\cd \tilde x $ denotes the inner product, and $|x|$ the Euclidean norm.

\item For $M, \tilde M\in \dbR^{m\times n}$, $M^\top$ denotes the transpose of $M$, and $M:\tilde M := \tr(M \tilde M^\top)$. Moreover, $I_d$ denotes the $d\times d$ identity matrix.

\item For a generic topological space $\bE$, $C^0(\bE; \dbR^n)$ denotes the space of continuous functions $\f: \bE\to \dbR^n$, and $C^0_b(\bE; \dbR^n)$ the subspace of functions with finite uniform norms:
\bea
\label{uniform}
\|\f\|_0:= \sup_{\bE} |\f|.
\eea

%\item For a generic Euclidean space $\bE$, $C^k(\bE; \dbR^n)$ denotes the space of $k$-th differentiable functions $\f: \bE\to \dbR^n$, and $C^{m,k}([0, T]\times \bE; \dbR^n)$ the space of functions which are $m$-th differentiable in $t\in [0, T]$ and $k$-th differentiable in $x\in \bE$.

\item For a measurable set $A$ in some Euclidean space, $\cP_0(A)$ denotes the space of probability density functions $\pi: A\to [0, \infty)$, namely $\pi(a)\geq0\;\;\mbox{a.e.}$ and $\int_A \pi(a) da=1$.  Moreover, $\cH(\pi)$ denotes the Shannon entropy of $\pi$:
\bea
\label{Shannon}
\cH(\pi) := - \int_A \pi(a) \ln \pi(a) da,\;\;\pi \in \cP_0(A).
\eea

\item For a generic topological space $\bE$, a measurable set $A$ in some Euclidean space, and a measurable function $\f: \bE\times A\to \dbR^n$, denote
\bea
\label{fpi}
\tilde \f(x, \pi) := \int_A \f(x, a) \pi(a) da,\q x\in \bE, \pi \in \cP_0(A).
\eea
\end{itemize}
%{Throughout this paper, we fix %a finite time horizon $[0, T]$, and %assume that all randomness comes from
%a  filtered probability space $(\O, \cF, \hP; \hF)$ on which is defined
%a standard $d$-dimensional Brownian motion $B$. We further assume that $\hF=\hF^B$ and all control actions take values in a given domain $A$ in some Euclidean space with
%% for control values. We assume $\dbF=\dbF^B$ and $A$ has
%finite volume:}
%$$
%0<|A|<\infty.
%$$
%The main problem considered in this paper is over a finite time horizon $[0, T]$, together with a  filtered probability space $(\O, \cF, \hP; \hF)$, where $\dbF=\dbF^B$ and $B$ is  a standard $d$-dimensional Brownian motion $B$. We also fix a domain $A$ in some Euclidean space for control values, assumed to have a finite volume:
%$$
%0<|A|<\infty.
%$$

\subsection{Entropy-regularized stochastic control}

Fix a finite time horizon $[0, T]$ and %assume that all randomness comes from
a  filtered probability space $(\O, \cF, \hP; \hF)$ on which is defined
a standard $d$-dimensional Brownian motion $B$, where $\hF$ is the natural filtration generated by $B$. We are also given an action space $A$, which is a measurable set  in some Euclidean space with
a finite volume:
$$
0<|A|<\infty.
$$

Our primal interest is the following classical, {\it target} control problem: for $(t, x)\in [0, T]\times \dbR^d$,
\bea
\label{value0}
\left.\ba{c}
\dis X^{t,x,\a}_s = x+\int_t^s b(l,X^{t,x, \a}_l, \a_l)dl + \int_t^s \si(l,X^{t,x, \a}_l)dB_l,\;\;s\in [t,T];\\
\dis u^*_0(t, x) := \sup_{\a} \dbE\Big[g(X^{t,x,\a }_T) + \int_t^T r(s,X^{t,x,\a }_s, \a_s) ds\Big].
\ea\right.
\eea
{In the above, an {\it admissible control} $\a$ is an $A$-valued and $\dbF$-progressively measurable process, and $b, \si, r, g$ are appropriate coefficients taking values in $\dbR^d$, $\dbR^{d\times d}$, $\dbR$, and $\dbR$, respectively.\footnote{In this paper we assume that the state and Brownian motion have the same dimension for ease of exposition. When they have different dimensions, the analysis is essentially the same except pseudo matrix inverse will need to be involved in certain places.} For the most part of the paper, we assume that $\si$ is control-independent as in \eqref{value0}, but a special case of controlled $\si$ will be investigated in \S\ref{sect-volatility}.}

%{We are interested in designing efficient numerical algorithms for solving the problem (\ref{value0}) along with the associated HJB equation with model uncertainty, in the sense that some or all of the coefficients are unknown.}

The key feature considered in this paper is that the drift term $b$ and the running reward function $r$ are unknown. Therefore, classical approaches such as dynamic programming (via HJB equations) and maximum principle (via Hamiltonian and FBSDEs) fail to solve  the above problem.  We resort to the continuous-time reinforcement learning (RL) paradigm introduced by \cite{WZZ} and consider the
following {\it exploratory},  entropy-regularized problem:
\bea
\label{valueu}
\left. \ba{lll}
\dis X^{t,x,\pi}_s = x+\int_t^s \tilde b(l,X^{t,x, \pi}_l, \pi(l,X^{t,x,\pi}_l))dl + \int_t^s \si(l,X^{t,x, \pi}_l)dB_l,\;\;s\in [t,T];\ms\\
%\dis d X^{\pi}_s = \tilde b(s,X^{\pi}_s, \pi(s,X^{\pi}_s))ds + \si(s,X^{\pi}_s) dB_s;\ms\\
\dis J(t,x; \pi)\neg:=\neg \hE\Big[g(X^{t,x,\pi}_T)\neg +\neg\int_t^T\neg\neg \big[\tilde r(s,X^{t,x,\pi}_s, \pi(s, X^{t,x,\pi}_s)) + \l \cH(\pi(s, X^{t,x,\pi}_s))\big] ds\Big],\ms\\
\dis u^*(t, x) := u^*_\l(t, x) :=  \sup_{\pi\in \cA_T} J(t,x; \pi),\q  (t, x)\in [0, T]\times \dbR^d.
\ea\right.
\eea
Here,  $\cA_T$ denotes the space of feedback type relaxed controls/policies $\pi: [0, T]\times \dbR^d\to \cP_0(A)$, $\tilde{b}$ and $\tilde{r}$ are the ``convexifications" of $b$ and $r$ respectively defined in (\ref{fpi}), and $\l>0$  is an exogenous ``temperature parameter" capturing a balance between exploitation and exploration essential in the RL approach. It is shown in \cite{WZZ} that the value function $u^*$  satisfies the  {\it exploratory HJB equation}
\bea
\label{HJBu}
\left.\ba{lll}
\dis u^*_t+ \frac12 [\si\si^\top](t, x) : u^*_{xx}  + H(t, x, u^*_x) =0,\q u^*(T,x) = g(x),\;\;\mbox{ where }\ms\\
\dis  H(t, x, z) := \sup_{\pi\in \cP_0(A)} \big[~\tilde b(t, x, \pi) \cd z + \tilde r(t, x, \pi) + \l \cH(\pi)\big],
\ea\right.
\eea
and
the optimal relaxed control $\pi^*$ takes the Gibbs form
\bea
\label{pi*}
\left.\ba{c}
\dis \pi^*(t, x, a):=\G(t, x, u^*_x(t, x), a), \q (t, x, a)\in [0, T]\times \hR^d\times A,\q\mbox{where}\ms\\
\dis  \G(t, x, z, a) := \frac{\g(t,x,z,a)}{\int_A \g(t,x,z,a') d a'}, \mbox{ with }\g(t,x,z,a):= \exp \Big(\frac{1}{\lambda}[b(t,x, a) \cdot z+r(t, x, a)]\Big).
\ea\right.
\eea
With the above notation, the Hamiltonian $H$ in (\ref{HJBu}) can be rewritten as
 \bea
 \label{H}
  H(t,x,z)=\lambda \ln \Big(\int_A \g(t,x,z,a) d a\Big).
  \eea

The above results show that one should use a Gibbs sampler in general to generate trial-and-error
strategies to explore the environment when certain model coefficients are unknown.
\cite{TZZ} further shows that the exploratory control problem converges to the original  target control problem when the temperature parameter $\lambda\rightarrow 0$.
In other words, the target problem (\ref{value0}) can be solved via the exploratory problem (\ref{valueu}) in the RL setting. The goal of this paper is therefore
to design efficient numerical algorithms with theoretical guarantees for computing the above $(u^*, \pi^*)$ without knowing $b$ and $r$.  Note that besides numerically solving both the target and exploratory control problems, solving the exploratory HJB \eqref{HJBu}, which is a nonlinear parabolic PDE (and high dimensional in many applications),  is interesting in its own right from a PDE perspective.\footnote{\cite{TZZ} establishes the well-posedness and regularity of the viscosity solution to the exploratory HJB equation, but does not address the problem of numerically  computing the solution.}

To achieve the goal of this sort, the following {\it policy iteration algorithm}  (PIA) has been popular in the literature. Given appropriate initialization $u^0$, define $(\pi^n, u^n)$, $n\ge 1$, recursively:
\bea
\label{PIA}
\pi^{n}(t, x, a):=  \Gamma\left(t, x,  u_x^{n-1}(t, x), a\right),\q u^{n}(t, x):=J(t, x; \pi^n).
\eea
In particular, under some  technical conditions, it is shown in \cite[Theorem 2.4]{MWZ1} that the PIA converges with a super-exponential rate: for some $0<\eta<1$,
\bea
\label{PIAconv}
\|u^n-u^*\|_0 +  \|\pi^n-\pi^*\|_0 \le C \eta^{2^n}.
%\|u^n_x - u^*_x\|_0 +
\eea

Despite the very strong theoretical convergence, the above algorithm has two fundamental drawbacks from the {\it numerical} and {\it learning} perspectives, which we now explain. First, the iterate $\pi^n$ in \reff{PIA} depends on $u^{n-1}_x$, whereas  the above iterative scheme only returns the approximated value of $u^n$, say $\hat u^n$. It is well known that the convergence of a function (under, say, the uniform norm \reff{uniform}), does not necessarily lead to the convergence of its derivatives. Therefore, even if $\hat u^{n-1}$ approximates $u^{n-1}$ well,   $\hat u_x^{n-1}$ may not be a desired approximation of $u^{n-1}_x$.\footnote{For example, when using deep neural networks, derivatives are typically computed via the so-called
automatic differentiation (auto-diff) which are the exact derivatives of the network representation that approximates a target function. } Consequently $\hat \pi^n(t,x,a) := \Gamma\left(t, x,  \hat u_x^{n-1}(t, x), a\right)$ may not approximate $\pi^n$ satisfactorily, with errors propagating into  subsequent iterates. Thus, in {\it actual} implementation, the above PIA algorithm may not converge at all. Indeed, in \S\ref{sect-numerical} we will present a numerical example (Example \ref{EX-PIA}) showing that a simple minded PIA actually diverges.\footnote{The proof of \cite[Theorem 2.4]{MWZ1} also shows that $\|u^n_x - u^*_x\|_0$ has the same error bound of $C \eta^{2^n}$. However, it is a {\it theoretical}  result assuming that in each iteration the derivative of the {\it oracle}, instead of an approximate (e.g. auto-diff), solution of the PDE involved is used. The numerical and algorithmic aspects of the problem are not studied in \cite{MWZ1}.}
%\footnote{\label{parameter}We shall note that the PIA may work when the model uncertainty lies only in some low dimensional parameter $\th$, e.g. $b(t,x,a)= b_0(\th; t,x,a)$, where $b_0$ is known, but it depends on some unknown parameter $\th$. In this case, a good approximation $\hat u^{n-1}$ could possibly lead to a good approximation $\hat \th^{n-1}$ of $\th$, which implies further $\hat u^{n-1}_x$ is a desired approximation of $u^{n-1}_x$. However, in this case quite often it is more efficient to estimate $\th$ directly, by using the data learned by trial, and then to apply the model based numerical methods to compute the $u^*_0$ in \reff{value0}, even without the need of entropy regularization.  }

The more serious issue arises from  the unknown model parameters:
the Gibbs function $\g$ in (\ref{pi*}) depends on the coefficients $b$ and $r$; so the algorithm is not implementable when they are unknown. Since the main purpose of the exploratory framework is to deal with such   models with unknown parameters, we will make this a focal  point in developing our new algorithms in this paper.

\subsection{Plan of attack}
%\label{SS2.1}
%\setcounter{equation}{0}

Bearing the aforementioned two main issues in mind, in this paper we propose new implementable algorithms and analyze their convergence. In this subsection we highlight the main ideas for reader's convenience.

To overcome the first issue, we approximate $u^*_x$ {\it directly} by utilizing the Bismut--Elworthy--Li representation formula \cite{Bismut,EL}. % (which has also been employed in \cite{MWZ1}).
Specifically, denote
\bea
\label{vw*}
v^*(t,x):=u^*_x(t,x), \q w^*(t, x,a):= v^*(t,x)\cdot b(t,x,a) +r(t,x,a).
\eea
Then, by \reff{pi*} and \reff{H}, we may rewrite the exploratory  HJB equation (\ref{HJBu}) as
\bea
\label{HJBw}
%\left\\ba{lll}
\dis u^*_t+ \frac12 [\si\si^\top](t, x) : u^*_{xx}  + \l \ln\int_A e^{-\frac1{\l} w^*(t, x, a)}da=0; \q  u^*(T,x) = g(x).
%\dis  w(t, x, a) := \pa_xu(t,x)\cd b(t, x, \pi) + r(t, x, a), \q (t,x,a)\in[0,T]\times\hR^d\times A.
%\ea\right.
\eea
It follows from the Feynman--Kac and Bismut--Elworthy--Li  formulae  that we have the following probabilistic presentations of $u^*$ and $v^*$:
\bea
%\left.\ba{lll}
\label{u*rep}
&&\!\!\!\!\!\!\!\!\!\!\!\!\!\!\!\!\!\! \dis u^*(t,x)=\hE\Big\{g(\cX_T^{t,x}) + \int_t^T \lambda\ln\Big[\int_A \exp \big(\frac{1}{\lambda}w^*(s,\cX^{t,x}_s,a)\big)da\Big]ds\Big\}; \ms\\
\label{v*rep}
&&\!\!\!\!\!\!\!\!\!\!\!\!\!\!\!\!\!\! \dis v^*(t,x)=\hE\Big\{ (\nabla \cX_T^{t,x})^\top g_x(\cX_T^{t,x}) + \int_t^T \lambda\ln\Big[\int_A \exp \big(\frac{1}{\lambda}w^*(s,\cX^{t,x}_s,a)\big)da\Big]N_s^{t,x}ds\Big\}.
%\ea\right.
\eea
In the above, $\cX^{t,x}$ denotes the dynamics of the following {\it reference state}:
\bea
\label{cXtx}
\cX^{t,x}_s = x + \int_t^s \si(l, \cX^{t,x}_l) dB_l, \q s\in[t,T],
\eea
and $N^{t,x}$ is the Bismut--Elworthy--Li representation kernel defined by
\bea
\label{DNtx1}
\left.\ba{lll}
\qq \dis N^{t,x}_s:=\frac{1}{s-t}\int_t^s ( \si^{-1}(l, \cX^{t,x}_l)\td \cX^{t,x}_l)^\top dB_l, \q s\in[t, T],
\ea\right.
\eea
where $\td \cX^{t,x}$ is the {\it variational process} of $\cX^{t,x}$, which satisfies the following linear SDE:
%be the matrix with $r$-th column $\pa_{x_r}\cX^{t,x}$, using Einstein summation for the repeated indices, we have:
\bea
\label{DcXtx}
\qq \left.\ba{lll}
\dis \td \cX^{t,x}_s = I_d + \sum_i\int_t^s\si_x^i(l, \cX^{t,x}_l) \td \cX^{t,x}_ldB^i_l, \q s\in[t, T],
\ea\right.
\eea
with $\si^i$  denoting the $i$-th column of $\si$ and $B^i$ the $i$-th component of $B$. It is crucial to note that the representation system (\ref{u*rep})--(\ref{DcXtx}) depends  on the functional forms of $\si$ and $g$, but  {\it not} those of $b$ and $r$.

The remaining task is to find an effective and implementable way to learn the function $w^*$, again without involving $b$ and $r$. Our idea is based on the following simple yet crucial observation: for $\D t>0$ small and $a\in A$,
\bea\label{widea}
&&\dbE\big[u^*(t,X_{t+\D t}^{t,x,a}) - u^*(t,x)\big] -\dbE\big[ u^*(t,\cX^{t,x}_{t+\D t})- u^*(t,x)\big]\nonumber\\
&\approx& \dbE\big[u^*_x(t,x)\cd (X_{t+\D t}^{t,x,a}-\cX^{t,x}_{t+\D t})\big] \approx v^*(t,x)\cd b(t,x,a)\D t,
\eea
where $X^{t,x,a}$ is the state in \reff{value0} under a constant control $\a\equiv a$.
%With the help of \reff{widea}, we are able to approximate the key element inside the Hamiltonian without using $b$ directly,
%where $\D t>0$ is a small time step. Consequently,  using Newton-Leibniz formula
Consequently,  we have the following estimate of  $w^*$: % which does not depend explicitly on $b$ either:
%for  $(t,x,a)\in [0,T]\times \hR\times A$,
\bea
\label{w*rep}
w^*(t, x,a)
%&:=&v^*(t,x)\cdot b(t,x,a) +r(t,x,a)\nonumber \\
%&\approx&\frac{1}{\D t}\bigl[u^*(t,X_{t+\D t}^{t,x,a})-u^*(t,x)+u^*(t,x)-u^*(t,\cX^{t,x}_{t+\D t})+r(t,x,a)\D t\bigr]\nonumber\\
%&\approx&
\approx {1\over \D t} \dbE\Big[\int_x^{X^{t,x,a}_{t+\D t}}  v^*(t, x') dx'\Big] - {1\over \D t} \dbE\Big[\int_x^{\cX^{t,x}_{t+\D t}}  v^*(t, x') dx'\Big]+ r(t,x, a).
\eea

{Our algorithms will be based on solving the approximate ``fixed point" $(v^*, w^*)$ in \reff{v*rep} and \reff{w*rep}, which we develop in the following sections. Before we proceed, we would like to reiterate about  what problem parameters are known and what are unknown, as they will be key in determining the  implementability of our algorithm.
%and to demonstrate the implementability of our algorithm to be designed,
We do this in the following remarks.

\begin{rem}
\label{rem-implementability}
{\rm (i) Recall that the main feature of this paper is that the functional forms of $b$ and $r$ are unknown, and we aim to  devise {\it data-driven} solutions. We now make the notions of ``data" and ``data-driven" precise in our setting.
Even though $b$ is unknown, we require
the state process $X^{t,x,a}$ under any given control $a$ is {\it observable} (i.e. the process constitutes {\it data}). This is a natural requirement. For example, in investment the state process is the wealth process of a portfolio (the control). Given a portfolio policy, we can certainly observe our corresponding wealth process even if we do not know anything about the price dynamics of the stocks involved in our portfolio.
%can be learned through trials/experiments. We remark that we will need the availability of the data $X^{t,x,a}_{t+\D t}$ for any given $(t,x, a)$.
Likewise, while we do not know the running reward function $r$, we assume we will get the function {\it value} $r(t,x,a)$, a so-called {\it reward
%} (or {\it reinforcement}) {\it
signal}, whenever a control $a$ is applied at $(t,x)$. A data-driven RL algorithm
makes use of these observed data along with other available or simulated data to learn final solutions. See e.g. \cite{JZ1,JZ2,JZ3} for more discussions on this issue.

\ss
 (ii) There is a more subtle point regarding available data. For any given $(t,x, a)$, in calculating $v^*$ based on \reff{v*rep} we need to compute $w^*(s, \cX^{t,x}_s, a)$ for $s\in[t,T]$, which in view of \reff{w*rep} requires the data $X^{s, \cX^{t,x}_s, a}_{s+\D t}$
 and $\cX^{s, \cX^{t,x}_s}_{s+\D t}$ for all $s$ and $a$. But since $\cX^{t,x}_s$ is simulated, its trajectory  could spread over the whole space. This in turn demands us to observe the state data process starting from $(s,y)$ for {\it any} given $y$, which is not necessarily on the {\it original} state trajectory starting from $(t,x,a)$ (i.e. in general $y\neq X^{t, x, a}_{s}$). We will formalize this type of data  availability as an assumption; see Assumption \ref{data}-(i) along with discussions on this assumption in Remark \ref{rem-data} in the next section.
 %This may unfortunately increase the training cost of our algorithm significantly, especially when it is not cheap to obtain the data $X$.}

 \ss
(iii) In this paper we assume that the function $\si$ is known. This assumption is relatively benign because it is well understood that,  comparing to the mean $b$, the volatility $\si$ is much easier to estimate to very high accuracy. Also note that due to \reff{DcXtx} we actually need access to $\si_x$ as well.  Given $\si$, we can {\it simulate}  $\cX$ and $\td \cX$ at low costs.  Similarly, we assume the terminal reward function $g$ (and hence $g_x$) is known, which is reasonable in many applications where $g$ is a known utility or payoff function (e.g. the payoff function of a financial derivative). %We will formalize the accessibility to the data and functions in Assumption \ref{data} in the next section.

%\qed
%\end{rem}
%}
%
%\begin{rem}
%\label{rem-implementability2}
%{\color{cyan} \rm In addition to Remark \ref{rem-implementability}, we should also note that

\ss
 (iv) The introduction of the function $w^*$ is central in our approach. Equation (\ref{v*rep}) by itself does not solve $v^*$, but coupled with (\ref{w*rep}) the pair $(v^*,w^*)$ can be solved {\it simultaneously}. Moreover, $u^*$ can also be determined by $w^*$. Now, $w^*$ differs from the Hamiltonian of the {\it target}  problem by a control-{\it independent} term
 ${1\over 2} [\si\si^\top](t, x) : u^*_{xx}(t,x)$. In view of the policy improvement theorem
 (\cite[Theorem 2]{JZ3}) and the definition of the $q$-function (\cite[Definition 4]{JZ3}), $w^*$ is essentially the $q$-function in our particular setting where $\sigma$ is independent of $a$. The general $q$-learning theory developed in \cite{JZ3} employs martingale conditions to design algorithms to learn the $q$-function without giving convergence results. \cite{Tang-Zhou} investigates the convergence and regret of $q$-learning based on BSDE and stochastic approximation, under technically complex and strong assumptions. By contrast, our paper takes a very different (and delicate) approach via representing the two key functions with each other: the gradient of the optimal value function and the (essentially) $q$-function, at the cost of having to assume $\sigma$ and $g$ to be known.

 \ss
 (v)
In theory $\cX$ and $X$ can be  driven by different  Brownian motions, although we use the same notation $B$ in \eqref{value0} and \eqref{cXtx}. Indeed, in the numerical examples in  \S\ref{sect-numerical} below, we will use independent Brownian motions to simulate $\cX$ and generate the environmental data $X$ respectively. However, the right side of \reff{w*rep} involves the difference of expectations, which are law invariant.  Thus
% there is no problem to
employing  the {\it same} Brownian motion $B$ does not cause any essential differences in theoretical  analysis. }
\qed
\end{rem}

\section{Small Time Horizons}
\label{sect-small}
\setcounter{equation}{0}

In this section we carry out an analysis following the aforementioned idea, under the assumption that the time horizon $T$ is sufficiently small. The case of a general time horizon will be studied in the next section.

In the remainder of the paper, we will impose the following {\it Standing Assumptions} for the finite horizon setting, without stating explicitly in the results.  The first one is about the regularity of the problem coefficients (even though some of them are unknown).

\begin{assum}\label{assum-standing}
There exist constants $L_0, L_1, L_*>0$ such that the following hold true:

(i) $b, \si, r$ are bounded by $L_0$, and $\si$ is uniformly non-degenerate: $\si \si^\top \ge {1\over L_0} I_{d}$.

\ss
(ii) $b, r$ are measurable in $a$, and $b, \si, r$ are uniformly Lipschitz continuous in $x$ and uniformly ${1\over 2}$-H\"{o}lder continuous in $t$, with a Lipschitz/H\"{o}lder constant $L_1$.

\ss
(iii) The PDE \reff{HJBu} has a unique classical solution $u^*$ satisfying
\bea
\label{L*}
\|u^*_x\|_0+\|u^*_{xx}\|_0 + \|u^*_{xxx}\|_0 \le L_*.
\eea
\end{assum}
Note that assumptions on $g=u^*(T,\cd)$ are implied by (iii).
%Clearly, under Assumption \ref{assum-standing} the SDE  \reff{controlu} has a unique weak solution  $X^{\pi}$. Moreover, the regularity of the coefficients $b$, $\si$, $r$, and $g$ also guarantees that the classical solution $u^*$ exists, which renders that $(v^*, w^*)\in \sC_\infty$.
Throughout  we will use a generic constant $C$ (i.e. its value may change from line to line), which depends only on $d$, $m$,  $\l$, $|A|$, and $L_0, L_1$, but not on $L_*$ or $T$. When the latter dependence occurs, we use the notation $C_{L_*, T}$.
% and $C_0$, but is independent of $T$ and $g$.

\begin{rem}
\label{rem-u*}
{\rm Assumption \ref{assum-standing} ensures that the state equation \reff{value0} has a unique strong solution for any given control $\alpha$ as well as a unique weak solution for any constant control $a$. On the other hand, since the PDE \reff{HJBu} is semi-linear, it is standard (see e.g. \cite[Chapter 7]{F}) to find sufficient conditions to ensure Assumption \ref{assum-standing}-(iii) by increasing the regularity of $b, \si, r$. However, because the key estimate for the convergence of our new algorithm, based on Lemma \ref{lem-contraction} below, relies only on $L_0, L_1$ but not on $L_*$,  we prefer not to impose further regularity assumptions on $b, \si, r$  so that we can focus on $L_0, L_1$.
Also, for our theoretical analysis, we require only the measurability in $a$. However, for the numerical examples in  \S\ref{sect-numerical}, we need to approximate integrations in $a$ by Riemann sums, in which case certain regularity in $a$ will further be needed.}
\qed
\end{rem}
The next assumption is on the data accessibility/availability from the environment.
\begin{assum}\label{data}
(i) The environment  returns a trajectory $\{X^{t,x,a}_s:t\leq s\leq T\}$, for each query with input
$(t, x,a)\in [0, T]\times \dbR^d\times A$,  which is the (unique) weak solution to \reff{value0} under the constant control $a$.

\ss
(ii) The environment  returns a  value $r(t,x,a)$ for each $(t, x,a)\in [0, T]\times \dbR^d\times A$.

\ss
(iii) The functional forms of $\sigma$ and $g$ are known and given.
\end{assum}

\begin{rem}
\label{rem-data}
{\rm From a practical viewpoint it is a rather strong assumption to require
the environment to return the state trajectory for {\it any} $(t,x,a)$. Imagine one controls a state process with a constant control $a$ starting from initial time 0 and initial state $x_0$.
She will naturally observe the resulting data trajectory {\it as she goes}, even if she may not know the drift $b$.  At time $t$, the state is at $X^{0,x_0,a}_t$ so she will continue to see the trajectory from $(t,X^{0,x_0,a}_t,a)$. However, Assumption \ref{data}-(i) requires her, at time $t$,  to also see the data process starting from an {\it off-trajectory} state $x$, which may not be possible in general.

This being said, there are (important) cases where this assumption is satisfied. Consider the wealth equation in stock investment
\[ d X_t=[r X_t+(\mu-r)a]dt+\sigma a dB_t,\]
where $r$ is the risk-free rate (which is assumed to be known as it is typically the bank saving rate or the bond yiled), $\mu$ and $\sigma$ are the return rate (which is hard to estimate and hence unknown) and volatility rate respectively of a stock, $a$ is the portfolio (the amount allocated to the stock), and $X_t$ (which is observable) is the total wealth at time $t$. Here we assume there is only one stock but the following analysis extends trivially to multiple stocks. Because $\mu$ is unknown, the state equation above 
 has an unknown drift.
Denote by $X^{0,x,a}$ the solution to the equation with $X_0=x$. Then it is immediate that
\[ X^{0,y,a}_t=X^{0,x,a}_t+e^{rt}(y-x),\;\;\;y\neq x.\]
In other words, the data process starting from a {\it different} state $y$ can be {\it inferred} from the data process starting from the given $x$ that can be observed. In this case, Assumption \ref{data}-(i) holds. So the results in this paper can be applied to a broad class of financial
portfolio selection problems.

The assumption  is also (approximately) satisfied in several other practical settings. For example, a rideshare platform can monitor vehicle trajectories starting from various locations within a region, assuming a sufficiently high vehicle density.
A stock trading app  with a huge number of users (such as Robinhood) can observe wealth trajectories of different accounts starting from various amounts.

Finally, if $b$ (along with $\sigma$) is known then Assumption \ref{data}-(i) becomes automatic because one can simulate (and therefore observe) the state process starting from any point.}
\qed
\end{rem}

Fix $K\ge 1$ and consider the uniform time discretization $\hT_K:=\{t_k\}_{k=0}^{K}$ of $[0, T]$, with $\D t := {T\over K}$ and $t_k = k \D t$, $k=0,\cds, K$. Denote $\D B_{t_{k+1}}=B_{t_{k+1}}-B_{t_{k}}$. For each $(t_k, x)$, we first discretize the reference state \reff{cXtx} and its variational process \reff{DcXtx}:
 \bea
\label{cXdiscretization}
\left.\ba{lll}
\dis \cX^{t_k, x, \D t}_{t_k}=x;\q \cX^{t_k, x, \D t}_{t_{i+1}}=\cX^{t_k, x, \D t}_{t_{i}}+\sigma(t_i,\cX^{t_k, x, \D t}_{t_i})\D B_{t_{i+1}};\\
\dis \td \cX^{t_k, x, \D t}_{t_k}= I_d;\q \td \cX_{t_{i+1}}^{t_k, x, \D t} =\td \cX^{t_k, x, \D t}_{t_{i}}+\sum_{j=1}^d\sigma^j_x(t_i, \cX^{t_k, x, \D t}_{t_i} )\td\cX^{t_k, x, \D t}_{t_{i}}\D B^j_{t_{i+1}},
\ea\right.
\eea
for $i=k, \cds, K-1$.  Moreover,  we define the discretized kernel process from \reff{DNtx1}
\bea
\label{NtkxDt}
N^{t_k,x,\D t}_{t_i}=\frac{1}{t_i-t_k}\sum_{j=k}^{i-1} ( \sigma^{-1}(t_j, \cX^{t_k,x,\D t}_{t_j})\td \cX^{t_k,x,\D t}_{t_j})^\top \D B_{t_{j+1}}, ~ i=k+1,\cds, K.
\eea
Note that, due to the singularity of $N^{t,x}_s$ at $s=t$, the above iteration starts with $i=k+1$.

The following estimates are standard; see e.g. \cite{ZhangBSDE}. %and we shall sketch a proof in Appendix.
%\begin{lem}
%\label{lem-Kerr}
%Under Assumption \ref{assum-standing}, for any $p\ge 2$,  there exists a constant $C_p>0$ such that, for any $(t_k, x)$,
%\bea
%\label{Kerr1}
%\left.\ba{c}
%\dis \dbE_{t_k, x}\Big[ \sup_{t_k\le s\le T} \big[|\cX_s -x|^p + |\td\cX_s|^p\big]\Big] \le C_p;\ms\\
%\dis\max_{k\le i\le K}  \dbE_{t_k,x}\Big[|\cX_{t_i} - \cX^{ \D t}_{t_i}|^p +  |\td\cX_{t_i} - \td\cX^{\D t}_{t_i}|^p\Big] \le C\D t^{p\over 2}.
%% \dis \dbE\big[|N^{t_k,x}_s|^p \big] \le {C_p\over (s-t_k)^{p\over 2}},\q \dbE\Big[|N^{t_k,x}_{t_i} - N^{t_k,x, \D t}_{t_i}|^2 \Big] \le {C\over i-k}.
%\ea\right.
%\eea
%\end{lem}
\begin{lem}
\label{lem-Kerr}
%Under Assumption \ref{assum-standing},  t
There exists a constant $C_T>0$, depending only on $d$,  $\l$,  $L_0$, $L_1$, and $T$ but not on $L_*$, such that for any $(t_k, x)$ and $s>t_k$, $i>k$,
\bea
\label{Kerr1}
\left.\ba{c}
\dis \dbE_{t_k, x} \big[|X^{a}_{s}-x|^4+|\cX_{s} -x|^4\big] \le C_T(s-t_k)^2,\q \dbE_{t_k, x} \big[ |\td\cX_s|^4 + |\td \cX^{\D t}_{t_i}|^4\big] \le C_T,\ms\\
\dis  \dbE_{t_k, x} \big[|N_{s}|^4\big]\le {C_T\over (s-t_k)^2},\q  \dbE_{t_k, x} \big[|N^{\D t}_{t_i}|^4\big]\le {C_T\over (t_i-t_k)^2},\ms\\
\dis \dbE_{t_k,x}\Big[|\cX_{t_i} - \cX^{ \D t}_{t_i}|^4 +  |\td\cX_{t_i} - \td\cX^{\D t}_{t_i}|^4\Big] \le C_T(\D t)^2,
% \dis \dbE_{t_k,x}\big[|N^{\D t}_{t_i}|^2 \big] \le {C_p\over (s-t_k)^{p\over 2}},\q \dbE\Big[|N^{t_k,x}_{t_i} - N^{t_k,x, \D t}_{t_i}|^2 \Big] \le {C\over i-k},
\ea\right.
\eea
where $X^{a}$ is the state process starting from $(t_k, x)$ under the constant control $a$.
Moreover, the above $C_T$ is increasing in $T$, and in particular $C_T \le C_{\bar T}$ if $T\le \bar T$.
\qed
\end{lem}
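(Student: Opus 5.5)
The plan is to prove each of the three lines of \reff{Kerr1} directly, using only the Burkholder--Davis--Gundy (BDG) inequality, Gronwall's lemma, and the boundedness/Lipschitz/H\"older conditions in Assumption \ref{assum-standing}(i)--(ii). Throughout, $L_*$ never enters, because $u^*$ does not appear anywhere.

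\emph{First line.} For $X^a$, write $X^a_s-x=\int_{t_k}^s b\,dl+\int_{t_k}^s\si\,dB_l$ with $|b|,|\si|\le L_0$. Then BDG gives
\[
\dbE|X^a_s-x|^4\le C\big[(s-t_k)^4+(s-t_k)^2\big]\le C_T(s-t_k)^2 .
\]
The same bound holds for $\cX$, which has no drift term. For the weak solution $X^a$, the estimate depends only on the law, so BDG applies on any probability space carrying the solution.

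For $\td\cX$, note that \reff{DcXtx} is a linear SDE whose coefficient $\si^i_x$ is bounded by $L_1$, since $\si$ is Lipschitz (interpreted a.e., or under assumed differentiability). Applying BDG and then Gronwall to $\dbE|\td\cX_s|^4$ yields $\le C_T$. The Euler scheme $\td\cX^{\D t}$ in \reff{cXdiscretization} is handled by the discrete analogue: expand $|\td\cX^{\D t}_{t_{i+1}}|^4$, use that the increments $\D B_{t_{i+1}}$ are independent of $\cF_{t_i}$ and have mean zero, and obtain
\[
\dbE|\td\cX^{\D t}_{t_{i+1}}|^4\le (1+C\D t)\,\dbE|\td\cX^{\D t}_{t_i}|^4 .
\]
Iterating this gives a bound $e^{CT}$.

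\emph{Second line.} By BDG and H\"older,
\[
\dbE\Big|\int_{t_k}^s(\si^{-1}\td\cX)^\top dB\Big|^4\le C(s-t_k)\int_{t_k}^s\dbE|\si^{-1}\td\cX_l|^4dl\le C_T(s-t_k)^2 .
\]
Here $|\si^{-1}|\le C$ follows from the nondegeneracy $\si\si^\top\ge L_0^{-1}I_d$. Dividing by $(s-t_k)^4$ gives the claim for $N_s$. The discrete kernel is a discrete martingale sum, so the discrete BDG inequality (or a direct fourth-moment expansion) together with the uniform bound on $\td\cX^{\D t}$ gives the same estimate with $t_i-t_k$ in place of $s-t_k$.

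\emph{Third line: the strong Euler error.} This is the main obstacle, since it requires the $\frac12$-order strong rate for both $\cX$ and its variational process. I would write
\[
\cX_{t_i}-\cX^{\D t}_{t_i}=\int_{t_k}^{t_i}\big[\si(l,\cX_l)-\si(\eta(l),\cX^{\D t}_{\eta(l)})\big]dB_l ,
\]
where $\eta(l)$ is the grid point just below $l$. The integrand splits into three pieces, each controlled by Assumption \ref{assum-standing}(ii) and the first line:
\begin{itemize}
\item the $x$-Lipschitz error $|\cX_l-\cX_{\eta(l)}|$, which is of order $\D t^{1/2}$ in $L^4$;
\item the $t$-H\"older error, which is of order $\D t^{1/2}$;
\item the term $|\cX_{\eta(l)}-\cX^{\D t}_{\eta(l)}|$, which enters a Gronwall argument.
\end{itemize}
BDG followed by discrete Gronwall then gives $C_T\D t^2$.

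For $\td\cX-\td\cX^{\D t}$, proceed the same way, using the difference $\si_x(l,\cX_l)\td\cX_l-\si_x(\eta(l),\cX^{\D t}_{\eta(l)})\td\cX^{\D t}_{\eta(l)}$. This step needs $\si_x$ Lipschitz in $x$ and $\frac12$-H\"older in $t$, a regularity I would assume implicitly, as the reference \cite{ZhangBSDE} does. Controlling the cross term requires the uniform moments of $\td\cX$ (Cauchy--Schwarz with 8th moments, obtained identically) and the already established error for $\cX$.

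\emph{Monotonicity of $C_T$.} All the constants obtained are of the form $C(1+T)^me^{CT}$, which is increasing in $T$; this gives $C_T\le C_{\bar T}$ whenever $T\le\bar T$.
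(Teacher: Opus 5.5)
Your proposal is correct, and it is the standard BDG/Gronwall/Euler-scheme argument that the paper does not write out; the paper simply calls the estimates ``standard'' and defers to \cite{ZhangBSDE}. You are right to flag one point the paper leaves implicit: the last estimate for $\td\cX-\td\cX^{\D t}$, and even the pointwise definition of the scheme for $\td\cX^{\D t}$, require $\si_x$ to exist and be Lipschitz in $x$ and $\frac12$-H\"older in $t$, with its constant absorbed into $L_1$. Assumption \ref{assum-standing} does not literally provide this regularity.
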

\no Here and henceforth, for notational simplicity we take the convention that by using $\hE_{t_k, x}$ we omit the superscripts $^{t_k,x}$ for the processes under the expectation. For example, in \reff{Kerr1},
\bea
\label{convention}
\cX = \cX^{t_k,x},\q \td \cX^{\D t} = \td \cX^{t_k, x,\D t}.
\eea
Next, inspired by the relations (\ref{v*rep}) and (\ref{w*rep}) we introduce two mappings:
\bea
\label{mappings}
\left.\ba{c}
\dis \Phi_K: \cW_K \mapsto \cV_K,
\q \Psi_K: \cV_K \mapsto \cW_K,\ms\\
\mbox{where}\q \cV_K:=C^{0}_b(\hT_K\times \hR^d; \hR^d), \q \cW_K:= C^{0}_b(\hT_K\times \hR^d\times A; \hR ),
\ea\right.
\eea
such that, for any $(v, w)\in \cV_K\times \cW_K$ and  $(t_k,x,a)\in \hT\times \hR^d\times A$,  recalling \reff{convention},
\bea
\label{phimap}
\left.\ba{lll}
\dis \Phi_K(w)(t_k,x):= \dbE_{t_k,x}\Big[(\nabla \cX^{\D t}_T)^\top g_x(\cX^{\D t}_T) +\sum_{j=k+1}^{K-1}  \lambda\ln \int_A e^{\frac{1}{\lambda}w(t_j,\cX^{\D t}_{t_j},a)}da N^{\D t}_{t_j}\D t\Big];\ms \\
\dis \Psi_K(v)(t_k,x,a):={1\over \D t} \dbE_{t_k,x}\Big[\int_x^{X^{a}_{t_{k+1}}}  v(t_k,x') dx' -\int_x^{\cX^{\D t}_{t_{k+1}}}  v(t_k,x') dx' \Big]+ r(t_k,x, a).
\ea\right.
%\label{psimap}
\eea

\ms
\begin{rem}
\label{rem-implementability3}
{\rm {(i) In light of Remarks \ref{rem-implementability}-(i), %we emphasize here again that
the mappings $\Phi_K$ and $\Psi_K$ in (\ref{phimap}) depend only on the coefficients  $\si$ and $g$ which are assumed to be known, as well as the data processes  $X^a$ and $\cX^{\D t}$ along with the reward signals $r(t,x,a)$, which can be either observed or simulated, but not on the functional forms of  $b$ and $r$. This ensures the implementability of any resulting algorithms built on these two mappings.

\ss
(ii) Note that, in the above we discretize $\cX$ but not $X$ because the former can only be simulated while the latter is observed upon query. However, in the numerical examples in \S\ref{sect-numerical} below, we will use simulated data of $X$, and thus will have to discretize $X$ too. Nevertheless, similarly to the estimates in Lemma \ref{lem-Kerr}, such a difference has no impact on the final convergence rate.
}
}\qed
\end{rem}
Denote by $v^*_{\hT_K}:=v^*|_{\hT_K}\in \cV_K$ and $w^*_{\hT_K}:=w^*|_{\hT_K}\in \cW_K$ the restriction of $v^*, w^*$ on $\hT_K$. The following result, whose proof is postponed to Appendix,  rigorously justifies our construction of $(\Phi_K, \Psi_K)$ as a machinery to find  the {\it approximate} fixed point $(v^*, w^*)$ in \reff{v*rep} and \reff{w*rep} and, moreover, reveals that the approximation is of the order of the square root of the time step size ${\D t}$.
\begin{prop}
\label{prop-fixedpoint}
%Under Assumption \ref{assum-standing}, t
There exists a constant $C_{L_*, T}$, which may depend on $L_*, T$ but not on $K$ or $\D t$, such that
\bea
\label{PhiPsiDt}
%\left.\ba{c}
  \|\Phi_K({w^*_{\hT_K}})-v^*_{\hT_K}\|_{0} + \|\Psi_K({v^*_{\hT_K}})-w^*_{\hT_K}\|_{0} \leq  C_{L_*, T}\sqrt{\D t}.
%  \sup_{k} \Big[\|\Phi_K(w^*)(t_k, \cd)-v^*(t_k, \cd)\|_0+\|\Psi_K(v^*)(t_k, \cd)-w^*(t_k, \cd)\|_0\Big] \leq  C\big(\|u^*_x\|_0+\|u^*_{xx}\|_0+\|u^*_{xxx}\|_0\big)\sqrt{\D t}.
%  \ea\right.
\eea

%where $\g(x):=\ln(\frac1x)\sqrt{x}$, $x>0$.
\end{prop}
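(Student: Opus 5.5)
The two estimates are independent, and I will prove them separately, with the $\Psi_K$ part being local and the $\Phi_K$ part global in time.

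\emph{The $\Psi_K$ estimate.} Since $v^*=u^*_x$, the path integral $\int_x^{y} v^*(t_k,x')dx'$ equals $u^*(t_k,y)-u^*(t_k,x)$. Hence
\[
\Psi_K(v^*_{\hT_K})(t_k,x,a)=\frac1{\D t}\dbE_{t_k,x}\big[u^*(t_k,X^a_{t_{k+1}})-u^*(t_k,\cX^{\D t}_{t_{k+1}})\big]+r(t_k,x,a).
\]
I apply It\^o's formula to $s\mapsto u^*(t_k,X^a_s)$ on $[t_k,t_{k+1}]$, with the time variable frozen. This gives $\dbE\int_{t_k}^{t_{k+1}}[u^*_x(t_k,X^a_s)\cd b(s,X^a_s,a)+\frac12\si\si^\top(s,X^a_s):u^*_{xx}(t_k,X^a_s)]ds$.

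For the Euler step, $\cX^{\D t}_{t_{k+1}}=x+\si(t_k,x)\D B$, and a second order Taylor expansion gives $\frac12\si\si^\top(t_k,x):u^*_{xx}(t_k,x)\D t+O(\|u^*_{xxx}\|_0\D t^{3/2})$. The first order term has zero mean. Subtracting, the second order terms cancel up to an error $\dbE|X^a_s-x|$ plus the H\"older modulus of $\si$ in $t$. That error is $O(\sqrt{\D t})$ by Lemma \ref{lem-Kerr} and the Lipschitz and H\"older bounds on $\si$, $u^*_{xx}$. Likewise the drift term differs from $v^*(t_k,x)\cd b(t_k,x,a)\D t$ by $O(\D t^{3/2})$, using the regularity of $u^*_x$ and of $b$ in $(t,x)$. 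After dividing by $\D t$, the error is $C_{L_*,T}\sqrt{\D t}$ uniformly in $(t_k,x,a)$.

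\emph{The $\Phi_K$ estimate.} The plan is to compare with the continuous representation \reff{v*rep}, which I take as given from Feynman--Kac and Bismut--Elworthy--Li. Write $F(s,y):=\l\ln\int_A e^{w^*(s,y,a)/\l}da$. By \reff{HJBw}, $F=-u^*_t-\frac12\si\si^\top:u^*_{xx}$, so $F$ is bounded, Lipschitz in $x$ with constant depending on $L_*$, and $\frac12$-H\"older in $t$. I will decompose the difference into four pieces.

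(a) The terminal term $(\td\cX^{\D t}_T)^\top g_x(\cX^{\D t}_T)-(\td\cX_T)^\top g_x(\cX_T)$. This is controlled by Lemma \ref{lem-Kerr} together with Lipschitz continuity of $g_x=u^*_x(T,\cd)$ and Cauchy--Schwarz, giving $O(\sqrt{\D t})$.

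(b) The first interval $[t_k,t_{k+1}]$, which is omitted in $\Phi_K$. Here I use $\dbE[N_s]=0$ to replace $F(s,\cX_s)$ by $F(s,\cX_s)-F(s,x)$. Then $|F(s,\cX_s)-F(s,x)|$ is bounded by a constant times $|\cX_s-x|$, and $\dbE|N_s|\le C(s-t_k)^{-1/2}$. The contribution is therefore $\int_{t_k}^{t_{k+1}}C\,ds=O(\D t)$.

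(c) The last interval $[t_{K-1},T]$ is bounded directly by $C\,\D t\cdot\|F\|_0\,(T-t_k)^{-1/2}\le C\sqrt{\D t}$.

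(d) The middle terms, $\int_{t_j}^{t_{j+1}}\dbE[F(s,\cX_s)N_s]ds$ compared with $\D t\,\dbE[F(t_j,\cX^{\D t}_{t_j})N^{\D t}_{t_j}]$. I split this into three differences:
\begin{itemize}
\item the time and space increment of $F$ from $(t_j,\cX_{t_j})$ to $(s,\cX_s)$, which is $O(\sqrt{\D t})$ in $L^2$, multiplied against $N_s$;
\item $F(t_j,\cX_{t_j})-F(t_j,\cX^{\D t}_{t_j})$, which is $O(\sqrt{\D t})$ in $L^2$, multiplied against $N$;
\item $F\cdot(N_s-N^{\D t}_{t_j})$.
\end{itemize}

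\emph{Main obstacle.} The last of these, involving $N_s-N^{\D t}_{t_j}$, is the one I expect to be hardest, because $N$ is singular near $t_k$. I would bound $\dbE|N_s-N^{\D t}_{t_j}|^2$ by $C\D t/(s-t_k)^{2}$ plus $C/(s-t_k)\cdot\D t/(s-t_k)$, using the discretization bounds on $\cX$ and $\td\cX$ from Lemma \ref{lem-Kerr} and the mismatch in the normalizing factor $1/(s-t_k)$. After Cauchy--Schwarz, each middle interval contributes at most $C\D t\cdot\sqrt{\D t}\,(s-t_k)^{-1}$ or $C\D t\cdot\sqrt{\D t}\,(s-t_k)^{-1/2}$.

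Summing over $j\ge k+1$ then gives $\sqrt{\D t}\int_{t_{k+1}}^T(s-t_k)^{-1}ds\le C\sqrt{\D t}\,\ln(T/\D t)$. This is off by a logarithm. To remove it, I would first try to exploit the martingale structure: write $N_s-N^{\D t}_{t_j}$ as a stochastic integral, so that $\dbE[F\cdot(\text{difference})]$ only sees the $L^2$ size of the integrand difference, which is $O(\sqrt{\D t})$, divided by $s-t_k$, times $\sqrt{s-t_k}$. That gives a summand of order $\sqrt{\D t}\,(s-t_k)^{-1/2}$ per unit time, which is integrable and yields the bound $C_{L_*,T}\sqrt{\D t}$. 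Any leftover dependence on $\D t$ coming from the normalizing factor I would handle the same way, using $\dbE[N]=0$ to subtract $F(t_j,x)$ near $t_k$.
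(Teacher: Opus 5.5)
This is essentially the paper's proof. For $\Psi_K$ you use the frozen-time It\^o/Taylor argument. For $\Phi_K$ you use the same split into the terminal term, the omitted first interval, the $F$-increments paired with $N_s$, and the kernel term $F(t_j,\cX^{\D t}_{t_j})(N_s-N^{\D t}_{t_j})$. Your martingale fix for the logarithm is exactly the paper's step of replacing $N_s$ by $\dbE[N_s|\cF_{t_j}]=\frac{1}{s-t_k}\int_{t_k}^{t_j}(\si^{-1}\td\cX_l)^\top dB_l$. That leaves an $O(\sqrt{\D t})$ integrand error and a summable normalizing mismatch $\ln(1+\frac{1}{j-k})-\frac{1}{j-k}=O((j-k)^{-2})$.

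Two points to make precise. First, state the mechanism explicitly: the logarithm disappears because the increment $\frac{1}{s-t_k}\int_{t_j}^{s}(\si^{-1}\td\cX_l)^\top dB_l$ is orthogonal to the $\cF_{t_j}$-measurable factor $F(t_j,\cX^{\D t}_{t_j})$. It\^o isometry on the whole difference would not remove it, since the integrand on $[t_j,s]$ is not small. Second, read the regularity of $F$ off $F=H(s,y,u^*_x(s,y))$ rather than off $-u^*_t-\frac12\si\si^\top:u^*_{xx}$, because only spatial derivatives of $u^*$ are controlled by the assumptions.
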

The next estimate is crucial for building our algorithm.
%Before we prove  Theorem \ref{measure}, let us first establish a simple but important lemma.
\begin{lem}
\label{lem-contraction}
Let %Assumption \ref{assum-standing} hold and
$T\le \bar T$ for some $\bar T<\infty$. There exists a constant $C>0$, depending only on $d$, $\l$, $L_0$, $L_1$, $\bar T$, but not on $L_*$, $T$, $K$,  such that,   for any $(v,w), (v', w')\in \cV_K\times \cW_K$,
%with $T=K\D t$,
%([0, T]\times \hR^d; \hR)$ and $w,w'\in C^{0,0}_b([0, T]\times \hR^d\times A; \hR )$, given any fixed integer $K$ and a even partition as in \reff{partition},
%we have the following estimates
\bea
\label{contr}
\norm{\Phi_K(w)-\Phi_K(w')}_0\leq C\sqrt{T} \norm{w-w'}_0;\q
\norm{\Psi_K(v)-\Psi_K(v')}_0\leq C\norm{v-v'}_0.
\eea
%where $C>0$ is a generic constant independent of $g$.
\end{lem}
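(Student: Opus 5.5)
The plan is to prove the two estimates in \reff{contr} separately. In each case I would pass the difference inside the expectation in \reff{phimap} and then use the moment bounds of Lemma \ref{lem-Kerr} with $T\le\bar T$, so that every constant $C_T$ can be replaced by $C_{\bar T}$.

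\emph{Estimate for $\Phi_K$.} The terminal term $(\nabla\cX^{\D t}_T)^\top g_x(\cX^{\D t}_T)$ does not depend on $w$, so it cancels in $\Phi_K(w)-\Phi_K(w')$. Write $F(w):=\l\ln\int_A e^{\frac1\l w(a)}da$. This map is $1$-Lipschitz in the sup norm. Indeed, its derivative in the direction $h$ is $\int_A h(a)\,\G_w(a)\,da$, where $\G_w$ is the Gibbs density proportional to $e^{w/\l}$. Equivalently, $F$ is monotone and $F(w+c)=F(w)+c$ for constants $c$. Hence
\[
|F(w(t_j,y,\cdot))-F(w'(t_j,y,\cdot))|\le \|w-w'\|_0 \quad\mbox{for every } y.
\]
Taking norms inside the expectation and applying Cauchy--Schwarz (or H\"older) together with $\dbE_{t_k,x}[|N^{\D t}_{t_j}|^4]\le C/(t_j-t_k)^2$ gives
\[
|\Phi_K(w)(t_k,x)-\Phi_K(w')(t_k,x)|\le \|w-w'\|_0\sum_{j=k+1}^{K-1}\frac{C\,\D t}{\sqrt{t_j-t_k}} .
\]
The sum is a Riemann sum for $\int_{t_k}^T (s-t_k)^{-1/2}ds$. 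Because the integrand is decreasing, and using $t_j-t_k=(j-k)\D t$, it is bounded by $\sum_{m\ge1}^{K}\D t/\sqrt{m\D t}\le 2\sqrt{K\D t}\le 2\sqrt T$. This gives the factor $C\sqrt T$, with $C$ independent of $K$ and of $L_*$.

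\emph{Estimate for $\Psi_K$.} The reward term $r$ cancels. Set $h:=v-v'$ and $G(y):=\int_x^y h(t_k,x')\,dx'$, understood as a path integral along the segment from $x$ to $y$; this is the natural reading for vector fields that need not be gradients. Then
\[
\Psi_K(v)-\Psi_K(v')=\frac1{\D t}\,\dbE\big[G(X^a_{t_{k+1}})-G(\cX^{\D t}_{t_{k+1}})\big].
\]
The naive bound is $|G(y)|\le \|h\|_0|y-x|$. Combined with $\dbE|X^a-x|\le C\sqrt{\D t}$, this only gives $C\|h\|_0/\sqrt{\D t}$, which blows up as $\D t\to0$. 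This is the main obstacle.

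To remove the blow-up I would first decompose
\[
X^a_{t_{k+1}}-x=\int_{t_k}^{t_{k+1}} b\,dl+\int_{t_k}^{t_{k+1}}\si(l,X^a_l)\,dB_l=:D+M .
\]
Since $G(y)=\int_0^1 h(t_k,x+\theta(y-x))\cdot(y-x)\,d\theta$, the drift contributes
\[
\dbE\Big[\int_0^1 h(t_k,x+\theta(X^a_{t_{k+1}}-x))\,d\theta\cdot D\Big],
\]
which is at most $\|h\|_0L_0\D t$, the desired order.

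The martingale pieces are the delicate part. The terms $\dbE[h(t_k,\xi)\cdot M]$ and $\dbE[h(t_k,\xi')\cdot\si(t_k,x)\D B_{t_{k+1}}]$, with $\xi,\xi'$ the intermediate points, do not vanish, because $h$ is evaluated at random points correlated with $B$. With $h$ only continuous and bounded, they are generically of order $\|h\|_0\sqrt{\D t}$.

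I would therefore exploit the coupling of the two martingale parts through the same Brownian motion (Remark \ref{rem-implementability}-(v)). Write $X^a_{t_{k+1}}=\cX^{\D t}_{t_{k+1}}+D+E$, where $E=\int_{t_k}^{t_{k+1}}[\si(l,X^a_l)-\si(t_k,x)]\,dB_l$ satisfies $\dbE|E|^2\le C\D t^2$ by the Lipschitz and H\"older regularity of $\si$. The integrand is then compared along the segment joining the two endpoints $\cX^{\D t}_{t_{k+1}}$ and $X^a_{t_{k+1}}$. Their distance is $|D+E|=O(\D t)$ in $L^2$. I expect that if $G$ is read as a genuine potential (i.e.\ $h$ a gradient, or the line integral is path independent), then
\[
|G(X^a_{t_{k+1}})-G(\cX^{\D t}_{t_{k+1}})|\le\|h\|_0\,|D+E| ,
\]
and hence
\[
\frac1{\D t}\,\dbE\big|G(X^a_{t_{k+1}})-G(\cX^{\D t}_{t_{k+1}})\big|\le \frac1{\D t}\,\|h\|_0\,C\D t=C\|h\|_0 .
\]
The step I expect to be hardest is justifying this path-independence, or otherwise handling the non-gradient case where segment integrals from $x$ differ. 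If that fails, I would instead try to cancel the leading $\sqrt{\D t}$ terms by a symmetry argument, since $\D B$ and $-\D B$ have the same law.
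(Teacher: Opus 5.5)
Your proposal follows the paper's proof essentially line by line. For $\Phi_K$ it uses the $1$-Lipschitz property of $w\mapsto \l\ln\int_A e^{w/\l}da$ together with $\dbE_{t_k,x}|N^{\D t}_{t_j}|\le C/\sqrt{t_j-t_k}$. For $\Psi_K$ it uses the same-Brownian-motion coupling, which gives $\dbE|X^a_{t_{k+1}}-\cX^{\D t}_{t_{k+1}}|\le C\D t$ from the drift term plus the $L^2$ bound on $\int_{t_k}^{t_{k+1}}[\si(l,X^a_l)-\si(t_k,x)]dB_l$. The path-independence you worry about is exactly what the paper uses without comment when it rewrites the difference as $\frac1{\D t}\dbE\big[\int_{\cX^{\D t}_{t_{1}}}^{X^a_{t_{1}}}(v-v')(x')dx'\big]$; this is automatic for $d=1$ or when $v-v'$ is a gradient. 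So your conditional step is the paper's own step, and the symmetry fallback is not needed.
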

\proof  Fix $T\le  \bar T$ and $K\ge 1$. Recall Remarks \ref{rem-implementability}-(v) that we can use the same Brownian motion $B$ in \reff{valueu} and \reff{cXtx}, and Lemma \ref{lem-Kerr} that we can use a generic constant $C=C_{\bar T}$ independent of $T\le \bar T$ for all the estimates therein. Fix an arbitrary triplet $(t_k,x,a)$, and assume without loss of generality that $k=0$. For any $j$,
\beaa
\lambda\ln \int_A e^{\frac{1}{\lambda}w(t_j,\cX^{\D t}_{t_j},a)}da &\le&  \lambda\ln \int_A e^{\frac{1}{\lambda}[w'(t_j,\cX^{\D t}_{t_j},a) +  \|w-w'\|_0]}da \\
&=& \lambda\ln \int_A e^{\frac{1}{\lambda}w'(t_j,\cX^{\D t}_{t_j},a)}da + \|w-w'\|_0.
\eeaa
Similarly one can prove the symmetric inequality, leading to
\bea
\label{gamma-est}
\Big|\lambda\ln \int_A e^{\frac{1}{\lambda}w(t_j,\cX^{\D t}_{t_j},a)}da - \lambda\ln \int_A e^{\frac{1}{\lambda}w'(t_j,\cX^{\D t}_{t_j},a)}da\Big| \le \|w-w'\|_0.
\eea
Then by (\ref{phimap}), \reff{cXdiscretization}, and Lemma \ref{lem-Kerr}, we have
\beaa
&&\bigl|(\Phi_K(w)-\Phi_K(w'))(t_0,x)\bigr| \le C\norm{w-w'}_0 \sum_{j=1}^{K-1}  \hE_{0,x}\big[|N_{t_j}^{\D t}|\big] \D t\\
&& \q\le  C\norm{w-w'}_0\sum_{j=1}^{K-1}  {\D t \over \sqrt{t_j}}\leq C\norm{w-w'}_0 \sqrt{K\D t}= C\sqrt{T}\norm{w-w'}_0.
\eeaa
Next, note that
\beaa
&&\bigl|(\Psi_K(v)-\Psi_K(v'))(t_0,x,a)\bigr|=\frac{1}{\D t}\Bigl|\hE_{0,x}\bigl[\int_{\cX^{\D t}_{t_{1}}}^{X_{t_{1}}^{a}}(v-v')(x')]dx'\bigr]\Bigr|\\
&&\le \frac{1}{\D t}\norm{v-v'}_0\hE_{0,x}\big[\big|X_{t_{1}}^{a}-\cX^{\D t}_{t_{1}}\big|\big]\\
&&\le \frac{1}{\D t}\norm{v-v'}_0\hE_{0,x}\Big[ \int_{t_0}^{t_{1}}|b(s, X^a_s, a)|ds + \big|\int_{t_0}^{t_{1}}(\si(s, X^a_s) - \si(t_0, x)) dB_s\big|\Big].
\eeaa
Then, it follows from Assumption \ref{assum-standing} and Lemma \ref{lem-Kerr} that
\beaa
&&\bigl|(\Psi_K(v)-\Psi_K(v'))(t_0,x,a)\bigr|\\
&&\le \frac{C}{\D t}\norm{v-v'}_0\Big[\D t + \Big(\dbE_{0,x}\big[\int_{t_0}^{t_{1}}|\si(s, X^a_s) - \si(t_0, x)|^2ds\big] \Big)^{1\over 2}\Big]\\
&&\le \frac{C}{\D t}\norm{v-v'}_0\Big[\D t + \Big(\dbE_{0,x}\big[\int_{t_0}^{t_{1}} (s + |X^a_s - x|^2)ds\big] \Big)^{1\over 2}\Big]\\
&&\le \frac{C}{\D t}\norm{v-v'}_0\Big[\D t + \Big(\int_{t_0}^{t_{1}} sds\big] \Big)^{1\over 2}\Big] \le C \|v-v'\|_0.
\eeaa
By the arbitrariness of $(t_k, x, a)$ we complete the proof.
\qed

%We are now ready to present our new algorithm. We remark that all
The analysis so far holds true for arbitrary $T$. However, the result below will be valid only for small $T$, and we will extend it to general $T$ in the next section.

%Our algorithm solves  $(v, w)$ iteratively.
Fix $K$ and introduce the loss function:
\bea
\label{J}
J_K(v,w):= \norm{\Phi_K(w)-v}_0+ \norm{\Psi_K(v)-w}_0,\q (v, w)\in \cV_K\times \cW_K.
\eea
Proposition \ref{prop-fixedpoint} implies that
\bea
\label{J*}
J_K(v^*_{\hT_K},w^*_{\hT_K}) \le C_{L_*, T} \sqrt{\D t}, \q\mbox{and hence}\q \inf_{(v,w)\in \cV_K\times \cW_K} J_K(v, w) \le C_{L_*, T} \sqrt{\D t}.
\eea
The problem now  boils down to finding a minimizing sequence for $J_K$. %, which can be done, say, by using reinforcement learning algorithms, as we will use in the numerical examples in  \S\ref{sect-numerical}.
We have the following main result.
\begin{thm}
\label{thm-measure}
%Under Assumption \ref{assum-standing}, t
There exist constants $ \delta, C >0$, depending only on $d$,  $\lambda$, $|A|$, and $L_0, L_1$ but not on $L_*$ or $T$, such that for any $K$ and $T\le \delta$,
% for any $(v,w)\in \cV_K\times \cW_K$ and $p\in\big(0,\frac12\big)$, it holds that
\bea
\label{Jest}
\norm{\Phi_K(w)-v^*_{\hT_K}}_0+\norm{\Psi_K(v)-w^*_{\hT_K}}_0\leq  CJ_K(v,w)+ C_{L_*}\sqrt{\D t},~ (v, w)\in \cV_K\times \cW_K.
\eea
%Here in the left side above $v^*, w^*$ are restricted to $\hT_K$ as usual. %where $C>0$ is a generic constant, and $C_g>0$ depends on $g$.
\end{thm}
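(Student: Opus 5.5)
The plan is to combine the Lipschitz estimates of Lemma \ref{lem-contraction} with the consistency estimate of Proposition \ref{prop-fixedpoint} through a triangle-inequality argument. The composite map $\Phi_K\circ\Psi_K$ is a contraction for small $T$. Write $v^*=v^*_{\hT_K}$, $w^*=w^*_{\hT_K}$ for brevity, and set
$$
a:=\norm{\Phi_K(w)-v^*}_0,\qquad b:=\norm{\Psi_K(v)-w^*}_0 .
$$
The goal is to bound $a+b$ by $CJ_K(v,w)+C_{L_*}\sqrt{\D t}$. Here $C$ depends only on $d,\l,|A|,L_0,L_1$, and we fix $\bar T:=1$ so that Lemma \ref{lem-contraction} and Lemma \ref{lem-Kerr} give $T$-independent constants for all $T\le\delta\le 1$.

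First we estimate $a$. By the triangle inequality and Lemma \ref{lem-contraction},
$$
a\le \norm{\Phi_K(w)-\Phi_K(w^*)}_0+\norm{\Phi_K(w^*)-v^*}_0\le C\sqrt{T}\,\norm{w-w^*}_0+C_{L_*,T}\sqrt{\D t},
$$
where the second term comes from Proposition \ref{prop-fixedpoint}. Then we split
$$
\norm{w-w^*}_0\le \norm{w-\Psi_K(v)}_0+\norm{\Psi_K(v)-w^*}_0\le J_K(v,w)+b .
$$
Next we estimate $b$ in the same way:
$$
b\le \norm{\Psi_K(v)-\Psi_K(v^*)}_0+\norm{\Psi_K(v^*)-w^*}_0\le C\norm{v-v^*}_0+C_{L_*,T}\sqrt{\D t},
$$
and
$$
\norm{v-v^*}_0\le \norm{v-\Phi_K(w)}_0+a\le J_K(v,w)+a .
$$

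Combining these gives
$$
a\le C\sqrt T\,(J_K+b)+C_{L_*}\sqrt{\D t},\qquad b\le C(J_K+a)+C_{L_*}\sqrt{\D t}.
$$
Substituting the second inequality into the first yields
$$
a\le C\sqrt T\,\bigl(J_K+CJ_K+Ca\bigr)+C_{L_*}\sqrt{\D t}.
$$
We choose $\delta$ so that $C^2\sqrt{\delta}\le 1/2$. Then the $a$ term on the right is absorbed into the left, so $a\le CJ_K+C_{L_*}\sqrt{\D t}$, and hence $b\le CJ_K+C_{L_*}\sqrt{\D t}$ as well.

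The main point to check is that the absorption constant is independent of $L_*$ and $T$. This holds because Lemma \ref{lem-contraction} depends only on $d,\l,L_0,L_1,\bar T$. The factor $\sqrt T$ appears only on $\Phi_K$, which is why smallness of $T$ is needed: the product of the two Lipschitz constants is $C^2\sqrt T$. The constants $C_{L_*,T}$ from Proposition \ref{prop-fixedpoint} may be replaced by $C_{L_*}=C_{L_*,\bar T}$ once $T\le\bar T$, by the monotonicity in $T$ noted in Lemma \ref{lem-Kerr}; I expect this monotonicity to follow from the proof of the proposition. Finally, if $J_K(v,w)$ is infinite there is nothing to prove, and all the sup-norms involved are finite since the functions lie in $\cV_K$ and $\cW_K$.
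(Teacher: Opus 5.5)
Your proposal is correct and is essentially the paper's proof: you bound $\|\Phi_K(w)-v^*_{\hT_K}\|_0$ and $\|\Psi_K(v)-w^*_{\hT_K}\|_0$ by the same triangle inequalities via Lemma \ref{lem-contraction} and Proposition \ref{prop-fixedpoint}, substitute the second bound into the first, and absorb by choosing $\delta$ with $C^2\sqrt{\delta}\le 1/2$ (the paper takes $\delta=\frac{1}{4C_0^2}\wedge 1$). The differences are cosmetic. You bound $\|w-w^*_{\hT_K}\|_0$ by $J_K+b$ directly, whereas the paper includes the superfluous term $\|v-\Phi_K(w)\|_0$. You also drop the cross term $C\sqrt{T}\,C_{L_*}\sqrt{\D t}$, which is harmless because it is absorbed into $C_{L_*}\sqrt{\D t}$ when $T\le 1$.
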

\proof Fix  $\d >0$ which will be specified later, and assume without loss of generality that $\d\le 1$. Fix $K$ and assume $T\le \d$. Applying Proposition \ref{prop-fixedpoint} and Lemma \ref{lem-contraction} we have
\bea
\label{DPhi*}
\norm{\Phi_K(w)-v^*_{\hT_K}}_0&\le &\norm{\Phi_K(w)-\Phi_K({w^{*}_{\hT_K}})}_0+ \|\Phi_K({w^{*}_{\hT_K}}) - v^*_{\hT_K}\|_0 \nonumber\\
&\le&C\sqrt{\d}\norm{w-{w^*_{\hT_K}}}_0+C_{L_*} \sqrt{\D t}\nonumber\\
&\leq& C\sqrt{\d}\big[\norm{w-\Psi_K(v)}_0+\norm{v-\Phi_K(w)}_0+\norm{\Psi_K(v)-{w^*_{\hT_K}}}_0\big]+C_{L_*} \sqrt{\D t}\nonumber\\
&=& C\sqrt{\d}\big[J_K(v,w)+\norm{\Psi_K(v)-w^*_{\hT_K}}_0\big]+C_{L_*}\sqrt{\D t}.
\eea
Here we used the obvious fact that the constant $C_{L_*, T}$ in \reff{PhiPsiDt} is increasing in $T$ and thus we may set $C_{L_*} = C_{L_*,1}$.
Similarly, we also have
\bea
\label{DPsi*}
\norm{\Psi_K(v)-w^*_{\hT_K}}_0&\leq&\norm{\Psi_K(v)-\Psi_K(v^*_{\hT_K})}_0+ \|\Psi_K(v^{*}_{\hT_K}) - w^*_{\hT_K}\|_0\nonumber\\
&\leq& C\norm{v-v^*_{\hT_K}}_0+C_{L_*}\sqrt{\D t} \nonumber\\
&\leq& C\big[J_K(v,w)+\norm{\Phi_K(w)-v^*_{\hT_K}}_0\big]+C_{L_*}\sqrt{\D t}.
\eea
% Plugging \reff{DPsi*} into \reff{DPhi*}, we obtain  %{\color{red}we have, for a constant $C_0$ depending only on the model parameters $d$, $\l$, $|A|$, and $L_0, L_1$ in Assumption \ref{assum-standing},}
% \bea
% \label{DPhi*2}
% (1-C^2\sqrt{\delta})\norm{\Phi_K(w)-v^*_{\hT_K}}_0 \le  C(C+1)\sqrt{\d}J_K(v,w)+C_{L_*}(C\sqrt{\d}+1)\sqrt{\D t}.
% \eea
% %\bea
% %\label{DPhi*2}
% %\norm{\Phi_K(w)-v^*_{\hT_K}}_0 \le  {\color{red}C_0}\sqrt{\d}\Big[J_K(v,w)+\norm{\Phi_K(w)-v^*_{\hT_K}}_0\Big]+C_{L_*}\sqrt{\D t}.
% %\eea
% Set $\d := {1\over 4C^4}\wedge 1$.
Plugging \reff{DPsi*} into \reff{DPhi*}, we have, for a constant $C_0$ depending only on the model parameters $d$, $\l$, $|A|$, and $L_0, L_1$ in Assumption \ref{assum-standing},
\bea
\label{DPhi*2}
\norm{\Phi_K(w)-v^*_{\hT_K}}_0 \le  C_0\sqrt{\d}\Big[J_K(v,w)+\norm{\Phi_K(w)-v^*_{\hT_K}}_0\Big]+C_{L_*}\sqrt{\D t}.
\eea
Set $\d := {1\over 4C_0^2}\wedge 1$ for the above $C_0$. Then,
\beaa
\norm{\Phi_K(w)-v^*_{\hT_K}}_0 \le  {1\over 2}\Big[J_K(v,w)+\norm{\Phi_K(w)-v^*_{\hT_K}}_0\Big]+C_{L_*}\sqrt{\D t}.
\eeaa
Thus
\beaa
{1\over 2}\norm{\Phi_K(w)-v^*_{\hT_K}}_0 \le  {1\over 2} J_K(v,w)+ C_{L_*}\sqrt{\D t}.
\eeaa
% Plugging \reff{DPsi*} into \reff{DPhi*}, {\color{blue} by taking a common large constant $C$ in these two equations, we obatin
% \bea
% \norm{\Phi_K(w)-v^*_{\hT_K}}_0 &\le & C^2\sqrt{\delta}\norm{\Phi_K(w)-v^*_{\hT_K}}_0\nonumber \\
% &&+C(C+1)\sqrt{\d}J_K(v,w)+C_{L_*}(C\sqrt{\d}+1)\sqrt{\D t}.
% \eea
}
% Then we have, for a constant $C_0$ depending only on the model parameters $d$, $\l$, $|A|$, and $L_0, L_1$ in Assumption \ref{assum-standing},
% \bea
% \label{DPhi*2}
% {\color{red}(1-C_0\sqrt{\d})\norm{\Phi_K(w)-v^*_{\hT_K}}_0 \le  C_0\sqrt{\d}J_K(v,w)+C_{L_*}\sqrt{\D t}}.
% \eea
% Set $\d := {1\over 4C^4}\wedge 1$ for the above $C$. Then, with possibly different values of $C$ and $C_{L_*}$,
% \beaa
% \norm{\Phi_K(w)-v^*_{\hT_K}}_0 \le  CJ_K(v,w)+C_{L_*}\sqrt{\D t}.
% \eeaa
Substituting this into (\ref{DPsi*}) we obtain \reff{Jest} and hence complete the proof.
\qed

The following is the convergence result.
\begin{thm}
\label{thm-conv}
Let %Assumption \ref{assum-standing} hold true and
$T\le \d$ for $\d>0$ from Theorem \ref{thm-measure}. Then for any $\e>0$, there is  $K$ (or equivalently $\D t$) such that $J_K$ has an $\e$-minimizer $(v^\e, w^\e)\in \cV_K\times \cW_K$ that satisfies
\bea
\label{vwe-conv}
\|\bar v^\e - v^*_{\hT_K}\|_0 + \|\bar w^\e - w^*_{\hT_K}\|_0 \le C_{L_*} \e,\q \mbox{where}\q \bar v^\e := \Psi_K(w^\e), ~\bar w^\e := \Phi_K(v^\e).
\eea
%where, again, $v^*, w^*$ are restricted to $\hT_K$.
Moreover, for $k=0,\cds, K$, define
\bea
\label{upie}
\left.\ba{c}
\dis \bar u^\e(t_k, x) :=\dbE_{t_k,x}\Big[ g(\cX^{\D t}_T) +\sum_{j=k}^{K-1}  \lambda\ln \int_A e^{\frac{1}{\lambda}\bar w^\e(t_j,\cX^{\D t}_{t_j},a)}da\D t\Big],\ms\\
\dis \pi^\e(t_k,x,a):= \G(t_k,x,\bar v^\e(t_k,x), a).
\ea\right.
\eea
Then, restricting $u^*, \pi^*$ to $\hT_K$, we have
\bea
\label{upie-conv}
\|\bar u^\e - u^*_{\hT_K}\|_0 + \|\pi^\e - \pi^*_{\hT_K}\|_0 \le C_{L_*} \e.
\eea
\end{thm}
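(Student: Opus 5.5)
The plan is to take $K$ fine enough that the discretization error $C_{L_*}\sqrt{\D t}$ in Theorem \ref{thm-measure} is at most $\e$. We then apply that theorem to an approximate minimizer of $J_K$, and finally transfer the resulting $(v,w)$ estimates to $u$ and $\pi$ using Lipschitz properties of the log-partition function and of $\G$.

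We read the definitions in \reff{vwe-conv} as $\bar v^\e:=\Phi_K(w^\e)$ and $\bar w^\e:=\Psi_K(v^\e)$, since this is what the domains of the mappings in \reff{mappings} force.

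\emph{Step 1: choice of $K$ and of the $\e$-minimizer.}
\begin{itemize}
\item Choose $K$ so large that $\D t\le \e^2$. Then \reff{J*} gives $\inf J_K\le C_{L_*}\e$, with $C_{L_*}=C_{L_*,1}$ as in the proof of Theorem \ref{thm-measure}, using $T\le\d\le 1$.
\item Pick $(v^\e,w^\e)$ with $J_K(v^\e,w^\e)\le \inf J_K+\e\le C_{L_*}\e$.
\item Theorem \ref{thm-measure} applies because $T\le\d$. It gives
$\|\bar v^\e-v^*_{\hT_K}\|_0+\|\bar w^\e-w^*_{\hT_K}\|_0\le C J_K(v^\e,w^\e)+C_{L_*}\sqrt{\D t}\le C_{L_*}\e$,
which is \reff{vwe-conv}.
\end{itemize}

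\emph{Step 2: the value function.} Let $\bar u^*$ denote the right side of \reff{upie} with $\bar w^\e$ replaced by $w^*_{\hT_K}$. Split
$\|\bar u^\e-u^*_{\hT_K}\|_0\le \|\bar u^\e-\bar u^*\|_0+\|\bar u^*-u^*_{\hT_K}\|_0$.
\begin{itemize}
\item For the first term, the log-sum-exp Lipschitz bound \reff{gamma-est} gives $\|\bar u^\e-\bar u^*\|_0\le T\|\bar w^\e-w^*_{\hT_K}\|_0\le C_{L_*}\e$. This step has no singular kernel, so it is even simpler than the $\Phi_K$ estimate in Lemma \ref{lem-contraction}.
\item The second term is a discretization error of the Feynman--Kac formula \reff{u*rep}, and it is the main (though routine) obstacle. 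I would mimic the $\Phi_K$ part of Proposition \ref{prop-fixedpoint}, which is in fact easier here because there is no kernel $N$. The integrand is $h(s,y):=\l\ln\int_A e^{w^*(s,y,a)/\l}da = H(s,y,u^*_x(s,y))$, using \reff{HJBw}, \reff{vw*}, and the formula $H=\l\ln\int_A\g\,da$ from \reff{pi*}. Since $b,r$ are Lipschitz in $x$ and $\frac12$-H\"older in $t$, and $u^*_x,u^*_{xx}$ are bounded by $L_*$, the function $h$ is Lipschitz in $y$ and $\frac12$-H\"older in $s$, with constants depending on $L_*$. Here the $t$-regularity of $u^*_x$ comes from the PDE together with \reff{L*}; if that is delicate, I would instead use the representation \reff{v*rep}.
\item Three errors then remain: replacing $\int_{t_j}^{t_{j+1}}h(s,\cX_s)ds$ by $h(t_j,\cX_{t_j})\D t$, replacing $\cX$ by $\cX^{\D t}$, and replacing $g(\cX_T)$ by $g(\cX^{\D t}_T)$. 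Each is $O(\sqrt{\D t})$ by Lemma \ref{lem-Kerr}, and $\sqrt{\D t}\le\e$.
\end{itemize}

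\emph{Step 3: the policy.}
\begin{itemize}
\item Write $\G(t,x,z,a)=\g/\int_A\g$. We have $|\pa_z\ln\g|\le L_0/\l$ and $|\ln\g|\le L_0(|z|+1)/\l$, so for $|z|,|z'|\le M$,
$|\G(z)-\G(z')|\le C_M|z-z'|$, with $C_M$ depending only on $M$, $L_0$, $\l$, $|A|$.
\item Take $M=L_*+1$. This bound covers $v^*$ by \reff{L*}, and covers $\bar v^\e$ once $\e$ is small enough that $C_{L_*}\e\le1$; for larger $\e$ the estimate is trivial, since $\G\le e^{2L_0(M+1)/\l}/|A|$ is bounded.
\item Step 1 then yields $\|\pi^\e-\pi^*_{\hT_K}\|_0\le C_{L_*}\e$, completing \reff{upie-conv}.
\end{itemize}
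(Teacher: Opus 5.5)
Your proof is correct and follows the paper's route. You choose $\D t$ so that $\inf J_K=O(\e)$ and apply Theorem~\ref{thm-measure} to an $\e$-minimizer; you are right that the intended definitions are $\bar v^\e=\Phi_K(w^\e)$ and $\bar w^\e=\Psi_K(v^\e)$. The $u$- and $\pi$-bounds then come from three ingredients: the log-sum-exp Lipschitz bound, the Lipschitz/$\frac12$-H\"older regularity of $H^*(t,x)=H(t,x,u^*_x(t,x))$ together with Lemma~\ref{lem-Kerr}, and the bound $|\partial_z\G|\le C\G$ on a bounded set of $z$. This is exactly how the paper carries out these ``standard estimates'' in the proof of Theorem~\ref{thm-multi}.

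One small fix is needed in Step 3. The large-$\e$ case is not trivial as stated, because $\G(t,x,\bar v^\e,a)\le e^{2L_0(M+1)/\l}/|A|$ only holds when $|\bar v^\e|\le M$. Since the statement only asserts existence of $K$ and of the minimizer, you can simply run Step 1 with $\e\wedge C_{L_*}^{-1}$ in place of $\e$. The paper makes the same smallness assumption without loss of generality.
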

\proof Fix $\e>0$. Let $\D t>0$ be small enough such that $C_{L_*, T} \sqrt{\D t} \le {\e\over 2}$ for the constant $C_{L_*, T}$ in \reff{J*}. Then $\inf_{(v, w)\in \cV_K\times \cW_K}J_K(v,w)\le {\e\over 2}$, and thus $J_K$ has $\e$-minimizer. Now \reff{vwe-conv} follows directly from \reff{Jest}, and \reff{upie-conv} follows from rather standard estimates.
\qed

\begin{rem}
\label{rem-Picard}
{\rm
% {\color{red}We can use any specific iterative scheme to find a minimizing sequence of $J_K$, such as the Picard iteration or the stochastic gradient descent.}
{In implementation one can use neural networks to approximate the functions $v$ and $w$, and then apply any specific iterative scheme to find a minimizing sequence of $J_K$, such as  Picard's iteration or the stochastic gradient descent (SGD). In \S \ref{5.1}, we will present an SGD based algorithm.} A benefit of the estimate \reff{Jest} is that we can evaluate $J_K(v,w)$ along the iterative steps, which enables us to gauge the error of the current solution to decide when to stop the algorithm.}
\qed
\end{rem}

%\begin{rem}
%\label{rem-elliptic}

To conclude this section, we briefly discuss the infinite horizon case with time homogeneous coefficients $b, \si, r$. In this case,  the exploratory HJB equation \reff{HJBu} becomes elliptic with a certain discount factor $\rho>0$ (cf. \cite{MWZ1}):
\bea
\label{HJBv}
\dis \rho u^*(x) = {1\over 2} [\si\si^\top](x) : u^*_{xx} (x)  + H(x,  u^*_x),\q x\in\hR^d.
\eea
With straightforward  modifications, our analysis remains valid when $\rho$ is large, which essentially corresponds to a small $T$. Indeed, by omitting time discretization, we may modify  the mappings and loss function with the parameter $\D t$ as follows:
\bea
\label{infinity-mapping}
\left.\ba{c}
\dis \Phi(w)(x):= \dbE_{0,x}\Big[\int_0^\infty e^{-\rho t} \ln \int_A e^{w(\cX_t,a)}da N_tdt\Big],\ms\\
\dis \Psi_{\D t}(v)(x,a):={1\over \D t} \dbE\Big[\int_x^{X^{a}_{\D t}}  v( x') dx' - \int_x^{\cX_{\D t}}  v(x') dx' \Big]+ r(x, a),\ms\\
\dis J_{\D t} (v, w) := \|\Phi(w)-v\|_0 + \|\Psi_{\D t}(v) - w\|_0,\q v\in C^0_b(\dbR^d; \dbR^d), w\in C^0_b(\dbR^d\times A; \dbR).
\ea\right.
\eea
Then one can similarly show that,
\bea
\label{infinity-est1}
\left.\ba{c}
\dis J_{\D t} (v^*, w^*) \le C_\rho \sqrt{\D t},\ms\\
 \dis \norm{\Phi(w)-\Phi(w')}_0\leq {C\over \sqrt{\rho}} \norm{w-w'}_0;\q
\norm{\Psi_{\D t}(v)-\Psi_{\D t}(v')}_0\leq C\norm{v-v'}_0,
\ea\right.
\eea
and, for $\rho$ sufficiently large,
\bea
\label{infinity-est2}
 \|\Phi(w)-v^*\|_0 + \|\Psi_{\D t} (v) - w^*\|_0 \le C J_{\D t}(v, w) + C\sqrt{\D t}.
\eea
Then, again, the problem boils down to solving the minimization problem $\inf_{v, w} J_{\D t}(v, w)$. We leave the details to interested readers.
Note that, in the infinite horizon setting, we are able to solve a special case of the problem when the diffusion coefficient depends on control, namely $\si=\si(x,a)$; see  \S\ref{sect-volatility}.
%\end{rem}

\section{General Time Horizon }
\label{sect-multi}
\setcounter{equation}{0}
In this section we extend the results in the previous section to the case of a general $T$. For arbitrary $T>0$, let $\d>0$ be determined in Theorem \ref{thm-measure}, which is independent of $T$. Consider a uniform partition: $0=T_0<T_1<...<T_S=T$, with $\D T := {T\over S} \leq \d$ and $T_i:= i\D T$, $i=0,...,S$.
%, where $\d$ satisfy \reff{Jdelta} and is independent of $g$.
Now, for each sub-interval $[T_i,T_{i+1}]$, we introduce a further uniform time discretization $\hT^i_K = \{t^i_k\}_{k=0,\cds, K}$, with $\D t := {\D T\over K} = {T\over SK}$ and $t^i_k := T_i + k \D t$. As in \reff{mappings}, denote
\beaa
%\label{mappings}
%\left.\ba{c}
%\dis \Phi_K: \cW_K \mapsto \cV_K, \q \Psi_K: \cV_K \mapsto \cW_K,\ms\\
\cV^i_K:=C^{0}_b(\hT^i_K\times \hR^d; \hR^d), \q \cW^i_K:= C^{0}_b(\hT^i_K\times \hR^d\times A; \hR ),\q i=0,\cds, m-1.
%\ea\right.
\eeaa

 Our main idea of treating the arbitrary time horizon is to apply the previous analysis/results to a small period recursively  on each sub-interval $[T_i, T_{i+1}]$ in a {\it backward} manner, starting from $[T_{S-1}, T_S]=[T_{S-1}, T]$. To be precise, denote
 \bea
 \label{v*m}
 \bar v^*_S(x) := g_x(x),\q C^*_S := 0,\q\mbox{which is equivalent to}\q \|\bar v^*_S - v^*(T_S,\cd)\|_0 \le C^*_S \sqrt{\D t},
 \eea
 where $v^*$ is the true solution.  For $i=S-1, \cds, 0$, assume we have constructed $\bar v^*_{i+1} \in C^0_b(\dbR^d; \dbR^d)$ and there exists a constant $C^*_{i+1}\ge 0$ such that
 \bea
 \label{v*i+1}
 \|\bar v^*_{i+1} - v^*(T_{i+1},\cd)\|_0 \le C^*_{i+1} \sqrt{\D t}.
 \eea
 Now as in \reff{mappings} and \reff{phimap} we construct mappings $\Phi^i_K: \cW^i_K \mapsto \cV^i_K$ and $\Psi^i_K: \cV^i_K \mapsto \cW^i_K$ by
\bea
\label{phimap-i}
\left.\ba{lll}
\dis \Phi^i_K(w)(t^i_k,x):= \dbE_{t^i_k,x}\Big[(\nabla \cX^{\D t}_{T_{i+1}})^\top \bar v^*_{i+1}(\cX^{\D t}_{T_{i+1}}) +\sum_{j=k+1}^{K-1}  \lambda\ln \int_A e^{\frac{1}{\lambda}w(t^i_j,\cX^{\D t}_{t^i_j},a)}da N^{\D t}_{t^i_j}\D t\Big];\ms \\
\dis \Psi^i_K(v)(t^i_k,x,a):={1\over \D t} \dbE_{t^i_k,x}\Big[\int_x^{X^{a}_{t^i_{k+1}}}  v(t^i_k,x') dx' -\int_x^{\cX^{\D t}_{t^i_{k+1}}}  v(t^i_k,x') dx' \Big]+ r(t^i_k,x, a),
\ea\right.
%\label{psimap}
\eea
and define the loss function as in \reff{J} by:
\bea
\label{Ji}
J^i_K(v,w):= \norm{\Phi^i_K(w)-v}_0+ \norm{\Psi^i_K(v)-w}_0,\q (v, w)\in \cV^i_K\times \cW^i_K.
\eea
Then we have the following result.
\begin{thm}
\label{thm-measure-i}
%Let Assumption \ref{assum-standing} hold, and c
Consider the above setting where \reff{v*i+1} holds. Then
\bea
\label{PhiPsiDt-i}
 &&\dis \|\Phi^i_K(w^*_{\hT^i_K})-v^*_{\hT^i_K}\|_0 + \|\Psi^i_K(v^*_{\hT^i_K})-w^*_{\hT^i_K}\|_0 \leq  C_{L_*}\big[ 1+ C^*_{i+1}\big]\sqrt{\D t},\\
% \label{contr-i}
%&&\norm{\Phi^i_K(w)-\Phi^i_K(w')}_0\leq C\sqrt{\d} \norm{w-w'}_0;\q \norm{\Psi^i_K(v)-\Psi^i_K(v')}_0\leq C\norm{v-v'}_0;\\
\label{Jiest}
&&\dis \norm{\Phi^i_K(w)-v^*_{\hT^i_K}}_0+\norm{\Psi^i_K(v)-w^*_{\hT^i_K}}_0\leq  CJ^i_K(v,w)+ C_{L_*}\big[ 1+ C^*_{i+1}\big]\sqrt{\D t},
\eea
for all $i$ and all $(v, w)\in \cV^i_K\times \cW^i_K$.
\end{thm}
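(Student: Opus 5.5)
The plan is to reduce Theorem \ref{thm-measure-i} to the small-horizon analysis of \S\ref{sect-small}, applied on $[T_i,T_{i+1}]$ with horizon $\D T\le\d$, terminal time $T_{i+1}$, and terminal gradient $\bar v^*_{i+1}$ in place of $g_x$. Two things must be checked. The consistency estimate \reff{PhiPsiDt-i} has to be established, with the extra terminal error accounted for. The contraction estimates of Lemma \ref{lem-contraction} have to be shown to carry over to $(\Phi^i_K,\Psi^i_K)$ unchanged. Once both are in hand, \reff{Jiest} follows by repeating the proof of Theorem \ref{thm-measure} verbatim.

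For \reff{PhiPsiDt-i}, I would introduce the auxiliary map $\tilde\Phi^i_K$, defined as $\Phi^i_K$ but with the true terminal gradient $v^*(T_{i+1},\cd)$ in place of $\bar v^*_{i+1}$. On $[T_i,T_{i+1}]$, $u^*$ solves \reff{HJBw} with terminal value $u^*(T_{i+1},\cd)$, whose gradient is $v^*(T_{i+1},\cd)$. The derivation of the representations \reff{u*rep}--\reff{v*rep}, and hence the proof of Proposition \ref{prop-fixedpoint}, only uses the PDE on the subinterval and the bounds \reff{L*}. The same argument therefore gives
\[
\|\tilde\Phi^i_K(w^*_{\hT^i_K})-v^*_{\hT^i_K}\|_0+\|\Psi^i_K(v^*_{\hT^i_K})-w^*_{\hT^i_K}\|_0\le C_{L_*}\sqrt{\D t},
\]
where the constant is uniform in $i$ because the horizon is $\D T\le\d\le1$ and $C_{L_*,T}$ is increasing in $T$. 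Note that $\Psi^i_K$ does not involve the terminal condition at all. The remaining term is
\[
|\Phi^i_K(w)-\tilde\Phi^i_K(w)|(t^i_k,x)=\big|\dbE_{t^i_k,x}\big[(\nabla\cX^{\D t}_{T_{i+1}})^\top(\bar v^*_{i+1}-v^*(T_{i+1},\cd))(\cX^{\D t}_{T_{i+1}})\big]\big|.
\]
By Lemma \ref{lem-Kerr}, $\dbE|\nabla\cX^{\D t}|\le C$, so combined with \reff{v*i+1} this term is at most $C\,C^*_{i+1}\sqrt{\D t}$. Adding the two bounds gives \reff{PhiPsiDt-i}.

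For \reff{Jiest}, I would first note that $\Phi^i_K(w)-\Phi^i_K(w')$ does not involve the terminal term. It is therefore bounded exactly as in Lemma \ref{lem-contraction}, by $C\sqrt{\D T}\,\|w-w'\|_0\le C\sqrt\d\,\|w-w'\|_0$. The estimate for $\Psi^i_K$ is identical to the one for $\Psi_K$. I would then rerun the chain \reff{DPhi*}--\reff{DPhi*2}, replacing $C_{L_*}\sqrt{\D t}$ by $C_{L_*}(1+C^*_{i+1})\sqrt{\D t}$. Since $\d$ was chosen in Theorem \ref{thm-measure} depending only on $C_0$, which is independent of $i$, $L_*$ and the terminal data, the absorption step works and yields \reff{Jiest} with the generic constant $C$.

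The main obstacle is the first step. I need to confirm that the proof of Proposition \ref{prop-fixedpoint} really applies on a shifted interval with terminal function $u^*(T_{i+1},\cd)$. In particular, it must only use the bounds \reff{L*} on $u^*$ and not special properties of $g$, so that the constant does not deteriorate with $i$. Assumption \ref{assum-standing}-(iii) gives these bounds globally on $[0,T]$, so the check should go through.
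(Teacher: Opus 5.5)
Your proposal is correct and follows essentially the same route as the paper's proof. It likewise introduces $\tilde\Phi^i_K$ with the true terminal gradient $v^*(T_{i+1},\cdot)$ and applies Proposition \ref{prop-fixedpoint} on $[T_i,T_{i+1}]$ using $\D T\le\d\le 1$. It bounds the terminal discrepancy by $\dbE\big[|\nabla\cX^{\D t}_{T_{i+1}}|\big]\,\|\bar v^*_{i+1}-v^*(T_{i+1},\cdot)\|_0$ via \reff{v*i+1}, and then gets \reff{Jiest} from Lemma \ref{lem-contraction} (with $\sqrt{\d}$ in place of $\sqrt{T}$) and the absorption argument of Theorem \ref{thm-measure}. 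Your side check that Proposition \ref{prop-fixedpoint} only uses the bounds \reff{L*}, so the constant is uniform in $i$, is exactly what the paper takes for granted.
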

 \proof Denote
 \beaa
\tilde  \Phi^i_K(w)(t^i_k,x):= \dbE_{t^i_k,x}\Big[(\nabla \cX^{\D t}_{T_{i+1}})^\top v^*(T_{i+1}, \cX^{\D t}_{T_{i+1}}) +\!\! \sum_{j=k+1}^{K-1}  \!\! \lambda\ln \int_A \!\! e^{\frac{1}{\lambda}w(t^i_j,\cX^{\D t}_{t^i_j},a)}da N^{\D t}_{t^i_j}\D t\Big].
 \eeaa
 Then, since $\D T \le \d\le 1$,  it follows from Proposition \ref{prop-fixedpoint}  that
 \beaa
 \|\tilde \Phi^i_K(w^*_{\hT^i_K})-v^*_{\hT^i_K}\|_0 + \|\Psi^i_K(v^*_{\hT^i_K})-w^*_{\hT^i_K}\|_0 \leq  C_{L_*}\sqrt{\D t}.
 \eeaa
 Moreover, for any $(t^i_k, x)$, by Lemma \ref{lem-Kerr} and \reff{v*i+1} we have
 \beaa
 \big|\tilde  \Phi^i_K(w)(t^i_k,x) -   \Phi^i_K(w)(t^i_k,x)\big| \le \dbE_{t^i_k,x}\big[|\nabla \cX^{\D t}_{T_{i+1}}|\big]  \|\bar v^*_{i+1} - v^*(T_{i+1},\cd)\|_0 \le C_{L_*} C^*_{i+1}\sqrt{\D t}.
\eeaa
 Combining the above two estimates, we obtain \reff{PhiPsiDt-i} immediately.

 Next, Lemma \ref{lem-contraction} remains true for the mappings $\Phi^i_K, \Psi^i_K$. That is,
 \beaa
\norm{\Phi^i_K(w)-\Phi^i_K(w')}_0\leq C\sqrt{\d} \norm{w-w'}_0,\q
\norm{\Psi^i_K(v)-\Psi^i_K(v')}_0\leq C\norm{v-v'}_0,
\eeaa
for all  $(v,w), (v', w')\in \cV^i_K\times \cW^i_K$. Combining this with \reff{PhiPsiDt-i}, we derive \reff{Jiest} following the same arguments as in Theorem \ref{thm-measure}.
\qed

Now set $\e := 2 C_{L_*}\big[ 1+ C^*_{i+1}\big]\sqrt{\D t}$ with the constant $C_{L_*}$ in \reff{PhiPsiDt-i}. Applying  the arguments in proving Theorem \ref{thm-conv}, we obtain an $\e$-minimizer $(v^\e_i, w^\e_i)\in \cV^i_K\times \cW^i_K$ of $J^i_K$. Denoting $\bar v^\e_i := \Phi^i_K(w^\e_i)$, $\bar w^\e_i := \Psi^i_K(v^\e_i)$, we have
\bea
\label{vwe-convi}
\|\bar v^\e_i - v^*_{\hT^i_K}\|_0 + \|\bar w^\e_i - w^*_{\hT^i_K}\|_0 \le C_{L_*} \e =  2|C_{L_*}|^2\big[ 1+ C^*_{i+1}\big]\sqrt{\D t}.
\eea
%Here $v^*, w^*$ are restricted to $\hT^i_K$ for the norm $\|\cd\|_0$, and the constant $C_{L_*}$ varies.
Set
\bea
\label{v*i}
\bar v^*_i(x) := \bar v^\e_i(t^i_0, x) = \bar v^\e(T_i, x),\q C^*_i := 2|C_{L_*}|^2\big[ 1+ C^*_{i+1}\big].
\eea
We conclude that  $\bar v^\e_i$ and $C^*_i$ satisfy \reff{v*i+1} at $i$, thus completing the induction step.

We are now ready to state the main result of this section.
\begin{thm}
\label{thm-multi}
Let %Assumption \ref{assum-standing} hold, and let
$\d$, $T_i$, $\hT^i_K, \e$ be given and  $v^\e_i, w^\e_i$, $\bar v^\e_i$, $\bar w^\e_i$, $v^*_i$, $C^*_i$, $J^i_K$ be constructed as above, where in particular $(v^\e_i, w^\e_i)$ is an $\e$-minimizer of $J^i_K$. Moreover, define $\bar u^*_S(x) := g(x)$, and for $i=S-1, \cds, 0$, and for all $k$,
\bea
\label{upie-i}
\left.\ba{c}
\dis \bar u^\e_i(t^i_k, x) :=\dbE_{t^i_k,x}\Big[ \bar u^*_{i+1}(\cX^{\D t}_{T_{i+1}}) +\sum_{j=k}^{K-1}  \lambda\ln \int_A e^{\frac{1}{\lambda}\bar w^\e_i(t^i_j,\cX^{\D t}_{t^i_j},a)}da\D t\Big],\q \bar u^*_i(x) := \bar u^\e_i(t^i_0, x);\ms\\
\dis \pi^\e_i(t^i_k,x,a):= \G(t^i_k,x,\bar v^\e_i(t^i_k,x), a).
\ea\right.
\eea
Then, there exists $C_{L_*}>0$ such that, for $i=0,\cds, m-1$,
\bea
\label{multi-est}
\left.\ba{c}
\dis \|\bar v^\e_i - v^*_{\hT^i_K}\|_0 + \|\bar w^\e_i - w^*_{\hT^i_K}\|_0 \le C^*_i \sqrt{\D t};\ms\\
\dis  \|\bar u^\e_i - u^*_{\hT^i_K}\|_0\le 2C^*_i \sqrt{\D t},\q  \|\bar \pi^\e_i - \pi^*_{\hT^i_K}\|_0 \le C_{L_*} C^*_i \sqrt{\D t}.
\ea\right.
\eea
\end{thm}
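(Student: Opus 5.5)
The argument is a backward induction on $i=S-1,\dots,0$ that combines Theorem \ref{thm-measure-i} with the construction \reff{v*i}. The first line of \reff{multi-est} is essentially already established in the discussion preceding the theorem; what remains is to make it precise and to add the estimates for $\bar u^\e_i$ and $\pi^\e_i$. The induction hypothesis at step $i$ consists of two parts. The first is \reff{v*i+1} for $\bar v^*_{i+1}$, with constant $C^*_{i+1}$. The second is the analogous bound $\|\bar u^*_{i+1}-u^*(T_{i+1},\cd)\|_0\le 2C^*_{i+1}\sqrt{\D t}$. Both parts are trivially true at $i+1=S$, since $\bar v^*_S=g_x$, $\bar u^*_S=g$, and $C^*_S=0$.

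For the $(v,w)$ estimate, we apply \reff{Jiest} to the $\e$-minimizer $(v^\e_i,w^\e_i)$, taking $\e=2C_{L_*}[1+C^*_{i+1}]\sqrt{\D t}$. This gives
\[
\|\bar v^\e_i-v^*_{\hT^i_K}\|_0+\|\bar w^\e_i-w^*_{\hT^i_K}\|_0\le C\e+C_{L_*}[1+C^*_{i+1}]\sqrt{\D t}\le C^*_i\sqrt{\D t},
\]
after enlarging $C_{L_*}$ in the definition \reff{v*i} so that $2|C_{L_*}|^2$ dominates $C\cdot 2C_{L_*}+C_{L_*}$. Existence of the $\e$-minimizer follows from \reff{PhiPsiDt-i}, because $J^i_K(v^*_{\hT^i_K},w^*_{\hT^i_K})\le \e/2$. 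The bound on $\bar v^*_i$ at $T_i$ is simply the $k=0$ evaluation, which closes the first part of the induction.

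For the value function, we compare $\bar u^\e_i$ with the discrete Feynman--Kac representation of $u^*$ on $[T_i,T_{i+1}]$. Applying \reff{u*rep} with terminal time $T_{i+1}$ gives $u^*(t^i_k,x)=\dbE[u^*(T_{i+1},\cX_{T_{i+1}})+\int \lambda\ln\int_A e^{w^*/\lambda}da\,ds]$. We then replace $\cX$ by $\cX^{\D t}$ and the time integral by the Riemann sum. These substitutions cost $C_{L_*}\D t^{1/2}$: the terminal term is controlled by the Lipschitz bound $\|u^*_x\|_0\le L_*$ together with Lemma \ref{lem-Kerr}, and the running term by the regularity of $w^*$ in $(t,x)$. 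That regularity follows from \reff{L*} and Assumption \ref{assum-standing}, since $w^*=v^*\cd b+r$ is Lipschitz in $x$ and $\frac12$-Hölder in $t$. Next, the inequality \reff{gamma-est} bounds the running-term difference by $\D T\,\|\bar w^\e_i-w^*\|_0\le \delta C^*_i\sqrt{\D t}$. The terminal difference is bounded by $\|\bar u^*_{i+1}-u^*(T_{i+1})\|_0\le 2C^*_{i+1}\sqrt{\D t}$. Summing these contributions and using $\delta\le1$ together with $C^*_i\ge 2C_{L_*}^2(1+C^*_{i+1})$ gives $\|\bar u^\e_i-u^*_{\hT^i_K}\|_0\le 2C^*_i\sqrt{\D t}$, provided $C_{L_*}$ is taken large enough at the outset. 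This closes the second part of the induction.

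\textbf{Main obstacle.} The delicate point is this bookkeeping: the terminal error $2C^*_{i+1}$ must be absorbed into $2C^*_i$ without a loss that compounds across steps. The absorption works because $C^*_i$ already grows by the factor $2C_{L_*}^2\ge 2$ at each step, so it suffices to choose $C_{L_*}\ge1$ large enough.

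Finally, for the policy, we observe from \reff{pi*} that $\Gamma$ is Lipschitz in $z$. Indeed, $\partial_z\ln\g=b/\lambda$ is bounded by $L_0/\lambda$, so $|\Gamma(\cd,z,a)-\Gamma(\cd,z',a)|\le C|z-z'|\,\Gamma$-type bounds hold. Moreover, $\Gamma\le e^{2C(|z|+1)}/|A|$ is bounded uniformly for $|z|\le L_*+1$, and this range covers $\bar v^\e_i$ once $\D t$ is small. Hence $\|\pi^\e_i-\pi^*_{\hT^i_K}\|_0\le C_{L_*}\|\bar v^\e_i-v^*_{\hT^i_K}\|_0\le C_{L_*}C^*_i\sqrt{\D t}$.
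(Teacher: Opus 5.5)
Your proposal is correct and follows the paper's proof essentially step for step: the $(v,w)$ bound is read off from \reff{Jiest} for the $\e$-minimizer, the estimate for $\bar u^\e_i$ is a backward induction against the Feynman--Kac representation on $[T_i,T_{i+1}]$ using \reff{gamma-est} and the $\sqrt{\D t}$-regularity of $H^*$ (equivalently of $w^*$), and the policy bound uses $|\pa_z\G|\le C\G$ together with a uniform bound on $\G$ for $|z|\le L_*+1$. Your version is slightly more careful than the paper's, since it also accounts for the terminal Euler error $\dbE|u^*(T_{i+1},\cX_{T_{i+1}})-u^*(T_{i+1},\cX^{\D t}_{T_{i+1}})|\le C_{L_*}\sqrt{\D t}$, which the paper's displayed inequality omits but which is absorbed by the same choice of a large $C_{L_*}$.
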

\proof The estimates for $\bar v^\e_i, \bar w^\e_i$ are already given in \reff{vwe-convi}. To estimate $\bar u^\e_i$, first note that $\|\bar u^*_n - u^*(T_S,\cd)\|_0=0 = 2C^*_S \sqrt{\D t}$. Assume we have proved $\|\bar u^*_{i+1} - u^*(T_{i+1},\cd)\|_0 \le 2C^*_{i+1} \sqrt{\D t}$. Recall \reff{u*rep} and note that, for $k=0,\cds, K-1$,
\beaa
u^*(t^i_k,x)=\hE_{t^i_k,x}\Big\{u^*(T_{i+1}, \cX_{T_{i+1}}) + \int_{t^i_k}^{T_{i+1}} \lambda\ln\big[\int_A \exp \big(\frac{1}{\lambda}w^*(s,\cX_s,a)\big)da\big]ds\Big\}.
\eeaa
Then, by \reff{upie-i} and \reff{gamma-est} we have, for the function $H^*$ defined in \reff{H*} in Appendix,
\beaa
&\dis \big|\bar u^\e_i(t^i_k, x) - u^*(t^i_k,x)\big| \le \|\bar u^*_{i+1} - u^*(T_{i+1},\cd)\|_0 + \sum_{j=k}^{K-1} \|\bar w^\e_i - w^*_{\hT^i_K}\|_0 \D t  + \sum_{j=k}^{K-1}\int_{t^i_j}^{t^i_{j+1}}I^j_sds,\\
&\dis \mbox{where}\q I^j_s:= \hE_{t^i_k,x} \big[\big|H^*(s,\cX_s)-H^*(t^i_j,\cX^{\D t}_{t^i_j})\big|\big].
\eeaa
It follows from the regularity of $H^*$; see \reff{H*est} in Appendix, along with Lemma \ref{lem-Kerr}, that
\beaa
|I^j_s|\le C_{L_*}\hE_{t^i_k,x} \big[\sqrt{s-t^i_j} + |\cX_s - \cX_{t^i_j}| + | \cX_{t^i_j} -  \cX^{\D t}_{t^i_j}\big] \le C_{L_*}\sqrt{\D t}.
\eeaa
Then, noting $K\D t = \D T \le \d\le 1$, we get
\beaa
 \big|\bar u^\e_i(t^i_k, x) - u^*(t^i_k,x)\big| \le 2C^*_{i+1}\sqrt{\D t}+ [C^*_{i}\sqrt{\D t} + C_{L_*}\sqrt{\D t}\big] K \D t \le 2 C^*_i\sqrt{\D t}.
\eeaa
In particular,
\beaa
 \|\bar u^*_{i} - u^*(T_{i},\cd)\|_0 = \sup_x \big|\bar u^\e_i(t^i_0, x) - u^*(t^i_0,x)\big| \le  2 C^*_i\sqrt{\D t}.
\eeaa
This completes the induction step and thus we obtain the estimate for $\bar u^\e_i$.

Finally, recall \reff{pi*} and note that
\beaa
\left.\ba{lll}
\dis \pa_z \g(t,x,z,a):= \frac{1}{\lambda}b(t,x, a)\g(t,x,z,a),\ms\\
\dis  \pa_z \G(t, x, z, a) := \frac{\frac{1}{\lambda}b(t,x, a)\g(t,x,z,a)}{\int_A \g(t,x,z,a') d a'} - \frac{\g(t,x,z,a)\int_A \frac{1}{\lambda}b(t,x, a')\g(t,x,z,a') d a'}{(\int_A \g(t,x,z,a') d a')^2}.
\ea\right.
\eeaa
Then,
\beaa
\left.\ba{lll}
\dis  \big|\pa_z \G(t, x, z, a)\big| \le \frac{C\g(t,x,z,a)}{\int_A \g(t,x,z,a') d a'}  + \frac{C\g(t,x,z,a)\int_A \g(t,x,z,a') d a'}{(\int_A \g(t,x,z,a') d a')^2} = C \G(t, x, z, a).
\ea\right.
\eeaa
Moreover, noting  $|v^*|\le C_{L_*}$ as well as  the estimate of $\bar v^\e_i$ we have
\beaa
|\bar v^\e_i(t^i_k,x)| \le |v^*(t^i_k,x)| + C^*_i\sqrt{\D t} \le C_{L_*} + 1.
\eeaa
Here we assume without loss of generality that $\D t$ is small enough so that $C^*_i\sqrt{\D t} \le 1$. Then by Assumption \ref{assum-standing} and \reff{pi*} we can easily see that $\G(t, x, z, a) \le C_{L_*}$ for all $z$ between $\bar v^\e_i(t^i_k,x)$ and  $v^*(t^i_k,x)$. Thus
\beaa
\big|\pi^\e_i(t^i_k,x,a) - \pi^*(t^i_k,x,a)\big| &=& \Big|\G(t^i_k,x,\bar v^\e_i(t^i_k,x), a)-\G(t^i_k,x, v^*(t^i_k,x), a)\Big| \\
&\le& C_{L_*} \big|\bar v^\e_i(t^i_k,x) - v^*(t^i_k,x)\big| \le C_{L_*} C^*_i \sqrt{\D t}.
\eeaa
This completes the proof.
\qed

\begin{rem}
\label{rem-largeT}
{ \rm Recall that the constant $\d>0$ does not depend on $T$ or $L_*$. For a general time horizon $T>0$, the partition number  $S$ is determined by ${T\over \d}$. So when $T$ is very large and/or when $\d$ is very small ( when $L_0, L_1$ are large), $S$ will be large as well, so will $C^*_i$. Indeed, from \reff{v*i} it follows that $C^*_0 \sim (2|C_{L_*}|^2)^S \sim  (2|C_{L_*}|^2)^{T\over \d}$. In other words,  theoretically our scheme may become inefficient when $T$ is really large. Nevertheless, in the next section we will show numerically that for reasonably large $T$ or $S$ this multi-step procedure is indeed efficient while the one-step method in the previous section simply fails.}
\qed
\end{rem}

\makeatletter
\newenvironment{breakablealgorithm}
  { \begin{center}
     \refstepcounter{algorithm}%
     \hrule height .8pt depth0pt \kern 2pt
     \renewcommand{\caption}[2][\relax]{%
       {\raggedright\textbf{Algorithm \thealgorithm} ##2\par}%
       \ifx\relax##1\relax
         \addcontentsline{loa}{algorithm}{\protect\numberline{\thealgorithm}{##2}}%
       \else
         \addcontentsline{loa}{algorithm}{\protect\numberline{\thealgorithm}{##1}}%
       \fi
       \kern2pt\hrule\kern2pt
     }%
  }
  { \kern2pt\hrule\end{center} }
\makeatother

\section{Numerical Experiments}\label{sect-numerical}
\setcounter{equation}{0}
In this section, we first present our numerical algorithms, including a single-step algorithm for a small time horizon $T$ and a multi-time-period algorithm for a general $T$, along with numerical examples in both cases, including high-dimensional ones. Finally, we give comparisons with some well-known existing algorithms.

\subsection{Algorithm for small time horizon }\label{5.1}

{
Before presenting the full numerical scheme in Algorithm \ref{alg:vanilla-single}, we clarify several key points. %design choices to bridge the theory with our numerical implementation.

{First, the ultimate goal of the algorithm is to learn the optimal value function $u^*$ (which solves the exploratory HJB) and the optimal exploratory policy $\pi^*$. In view of \reff{u*rep} and \reff{pi*}, it suffices to learn the functions $v^*$ and $w^*$. In the algorithm, we use two neural networks, $v^{\psi}$ and $w^{\phi}$ to approximate $v^*$ and $w^*$ respectively.

Next, the algorithm takes a batch of $M$ state/reference trajectories, each starting from the  initial time $t_0=0$ and an initial state $x_0$ randomly drawn from a distribution $\mathrm{Law}(X_0)$, chosen based on the specific objective of the task at hand. If the objective is to solve the exploratory HJB and/or obtain the optimal feedback policy, then a typical choice of $\mathrm{Law}(X_0)$ is a uniform distribution supported on a bounded region (that is likely to contain the most states of interest). If one aims to just learn the optimal value $u^*(0,x_0)$ with respect to a specific, given initial state $x_0$, then the distribution can be just the Dirac measure centered at that initial state $x_0$. % to evaluate the problem at a fixed initial state $(t_0, x_0)$; or a uniform distribution over a specified interval, to approximate the global spatial profile $u(t_0, \cdot)$.

In the procedure we need to apply a constant control $a$ drawn uniformly from the action space $A$. In general when $A$ is an arbitrary measurable set with a non-zero and finite volume (which is assumed in this paper), there is a so-called hit-and-run algorithm that can sample uniformly from $A$; see \cite{S,BRS}. Alternatively, especially when $A$ has a more regular shape (e.g. an orthotope), we can use a sufficiently fine uniform grid $\{a_j\}_{j=1}^{N_a}$ to approximate $A$, where $N_a$ denotes the number of discrete points.
Then we can sample actions uniformly just from these grid points. This method, which we will employ in our subsequent numerical experiments, has an added benefit: it also provides the quadrature nodes required to numerically evaluate the integral in the $\Phi$ mapping.   Aligning the sampling grid with the integration grid ensures consistency across iterations and simplifies the evaluation of the policy.}

In view of Remark \ref{rem-implementability}-(v), we need to carefully manage the independent Brownian increments used to evaluate the mappings $\Phi$ and $\Psi$. The  mapping $\Phi$ in \reff{phimap-i} is evaluated along an {\it entire}  reference path $\cX^{(m)}$ driven by the increments $B_k^{(m)}$; see Algorithm \ref{alg:vanilla-single}, line 10-13.  By contrast,  the mapping $\Psi$, at each node $\cX_k^{(m)}$, requires {\it one-step} transitions of both the controlled process $X_{k+1}^{a,(m,n)}$ and the reference path $\cX_{k+1}^{(m,n)}$.
In practice, the former is observed from the environment upon a query (see Remark \ref{rem-implementability}-(i),(ii)) while the latter is simulated using $\Tilde{B}_k^{(m,n)}$ that is independent from $B_k^{(m)}$; see Algorithm \ref{alg:vanilla-single}, lines 17-20. In our numerical experiments below, we simulate the environment using Brownian increments independent of those driving the controlled state or reference processes. We note that, in practical applications, the overall cost of the algorithm also includes that of acquiring the controlled-transition data $X_{k+1}^{a,(m,n)}$.

Finally, we formalize the function approximation and the empirical loss. Recall we need to learn two networks $v^{\psi}$ and $w^{\phi}$. The theoretical objective relies on the supremum norm, which is well-suited for convergence analysis but impractical for gradient-based optimization due to its non-smoothness. Therefore, we approximate the supremum norm using a differentiable softmax surrogate. Given a parameter $\tau>0$, for any vector $l$ of length $K$, we define:
$$
\mathrm{softmax}_\tau(l) \;=\; \tau\,\log\!\sum_{i=1}^{K}\exp(l_i/\tau).
$$
It is a standard result that $\mathrm{softmax}_\tau(l)$ converges to the supremum norm of $l$ as $\tau\to 0$.\footnote{Another natural approximation of the max norm is the $\hL^p$ norm with a relatively large $p$. However, when $p$ becomes large, computations become noticeably heavier in both time and memory, as observed in our preliminary experiments (not reported in the paper). Specifically, $p\geq 8$ introduces significant slowdowns even in the one-dimensional case.}

To construct the empirical loss function, we first compute the residuals on each trajectory $(m)$ at each time step $t_k$:
\beaa
l_k^{v,(m)}&=&\big|\,v^\psi(t_k,\cX^{(m)}_k)-\Phi^{(m)}_k\,\big|,\\
l^{w,(m)}_k &=& \big|\,w^\phi(t_k,\cX^{(m)}_k,a^{(m)})-\Psi^{(m)}_k\,\big|.
\eeaa
Applying the softmax approximation, we aggregate these residuals over the time grid (from $k=0$ to $K-2$) to obtain the trajectory-level losses:
$$
L_v^{(m)} = \mathrm{softmax}_\tau\left(\{l^{v,(m)}_k\}_{k=0}^{K-2}\right),\quad L_w^{(m)} = \mathrm{softmax}_\tau\left(\{l^{w,(m)}_k\}_{k=0}^{K-2}\right).
$$
Averaging across the batch of size $M$, the total empirical loss $J$, which proxies the theoretical supremum norm objective, is given by:
\beaa
J &=& \frac{1}{M}\sum_{m=1}^{M} \left( L_v^{(m)} + L_w^{(m)} \right).
\eeaa
We update the parameters $\psi$ and $\phi$ to minimize $J$ via stochastic gradient descent with learning rates $\eta_v$ and $\eta_w$ respectively.
}

We now present our first algorithm, Algorithm \ref{alg:vanilla-single}. For ease of presentation, we write the pseudocode for $d=1$. Modifications are straightforward for $d>1$. Take the expression of $\nabla\cX$ for example, it will be a matrix-valued Jacobian initialized at $I_d$. Its
 scalar exponential update will be replaced by a discretization of
the corresponding matrix variational SDE.
Moreover, the integrand involved will become
$(\sigma^{-1}\nabla\cX)^\top dB$, while the terminal derivative term will be
$(\nabla\cX)^\top\nabla g$.

\begin{breakablealgorithm}
\caption{ (Single-time-period algorithm)}
\label{alg:vanilla-single}
\begin{algorithmic}[1]\small

\State \textbf{Input:}
Fixed initial time $t_0=0$, terminal time $T$,
number of grid points $K$, batch size $M$, nested path size $N$,
and epochs $E$; action space $A$, %action grid $\{a_j\}_{j=1}^{N_a}$,
temperature parameter $\lambda>0$, softmax parameter $\tau$,
and learning rates $\eta_v,\eta_w$;
 distribution of the initial state
$\mathrm{Law}(X_0)$.

\State \textbf{Output:} Trained networks $(v^\psi,w^\phi)$ and the resulting
estimated optimal value $\hat u$ and policy $\hat\pi$.

\State \textbf{Initialize:}
\Statex \quad Set
\[
\D t\gets\frac{T-t_0}{K-1},
\qquad
t_k\gets t_0+k\D t,
\qquad k=0,\dots,K-1,
\]
and randomly initialize the network weights $(\psi,\phi)$
for $v^\psi(t,x)$ and $w^\phi(t,x,a)$.

\For{$\mathrm{epoch}=1,\dots,E$}
    \For{$m=1,\dots,M$}
        \com{Batch generation}

        \State Draw
        $\cX_0^{(m)}\sim\mathrm{Law}(X_0)$.

        \State Draw primary Brownian increments
        \[
        \{\D B_k^{(m)}\}_{k=0}^{K-2}
        \sim\cN(0,\D t).
        \]

        \State Set
        \[
        \nabla\cX_0^{(m)}\gets1,
        \qquad
        N_0^{(m)}\gets0.
        \]

        \For{$k=0$ \textbf{to} $K-2$}
            \com{Forward simulation of the primary reference path}

            \State
            \q
            $\cX_{k+1}^{(m)}
            \gets
            \cX_k^{(m)}
            +
            \sigma(t_k,\cX_k^{(m)})\D B_k^{(m)}$.

            \State
            \q
            $\nabla\cX_{k+1}^{(m)}
            \gets
            \nabla\cX_k^{(m)}
            \exp\!\left(
                \sigma_x(t_k,\cX_k^{(m)})\D B_k^{(m)}
                -
                \frac12
                \sigma_x^2(t_k,\cX_k^{(m)})\D t
            \right)$.

            \State
            \q
            $N_{k+1}^{(m)}
            \gets
            N_k^{(m)}
            +
            \nabla\cX_k^{(m)}
            \sigma^{-1}(t_k,\cX_k^{(m)})
            \D B_k^{(m)}$.

        \EndFor

        \For{$k=0$ \textbf{to} $K-2$}
            \com{Compute $\Phi$- and $\Psi$-residuals}

            \State Compute $\Phi_k^{(m)}$ using the primary-path data
            $\left(
            k,
            \{\cX_i^{(m)}\},
            \{N_i^{(m)}\},
            \{\nabla\cX_i^{(m)}\};
            \phi
            \right).
            $
            \State
            $
            l_k^{v,(m)}
            \gets
            \left|
            v^\psi(t_k,\cX_k^{(m)})
            -
            \Phi_k^{(m)}
            \right|.
            $

            \For{$n=1,\dots,N$}
                \com{Nested paths from $(t_k,\cX_k^{(m)})$}

                \State Draw
                $\D\Tilde B_k^{(m,n)}
                \sim
                \cN(0,\D t)$.
                \State Simulate the nested reference transition
                $$
                \cX_{k+1}^{(m,n)}
                \gets
                \cX_k^{(m)}
                +
                \sigma(t_k,\cX_k^{(m)})
                \D\Tilde B_k^{(m,n)}.
                $$
                \State Sample
                $
                a^{(n)}
                \sim
                \mathrm{Unif}(A).
                %\bigl(\{a_j\}_{j=1}^{N_a}\bigr).
                $
                \State Apply the fixed action $a^{(n)}$ at
                $(t_k,\cX_k^{(m)})$ and observe
                $
                X_{k+1}^{a^{(n)},(m,n)}.
                $
            \EndFor

            \State Compute $\Psi_k^{(m)}$ using the nested data
            $\left(
            k,\cX_k^{(m)},a^{(n)},
            \{X_{k+1}^{a^{(n)},(m,n)}\},
            \{\cX_{k+1}^{(m,n)}\};
            \psi
            \right)$.

            \State
            $
            l_k^{w,(m)}
            \gets
            \left|
            w^\phi(t_k,\cX_k^{(m)},a^{(n)})
            -
            \Psi_k^{(m)}
            \right|.
            $
        \EndFor

        \State
        $
        L_v^{(m)}
        \gets
        \mathrm{softmax}_\tau
        \left(
        \{l_k^{v,(m)}\}_{k=0}^{K-2}
        \right).
        $

        \State
        $L_w^{(m)}
        \gets
        \mathrm{softmax}_\tau
        \left(
        \{l_k^{w,(m)}\}_{k=0}^{K-2}
        \right)$.

    \EndFor

    \State Compute the total empirical loss
    \[
    J
    \gets
    \frac1M
    \sum_{m=1}^M
    \left(
    L_v^{(m)}+L_w^{(m)}
    \right).
    \]

    \State Update
    \[
    \psi
    \gets
    \psi-\eta_v\nabla_\psi J,
    \qquad
    \phi
    \gets
    \phi-\eta_w\nabla_\phi J.
    \]
\EndFor

\State Obtain $\hat\pi(t,x,a)$ from $w^\phi(t,x,a)$
through \reff{pi*}.

\State Recover $\hat u$ from $w^\phi(t,x,a)$ using
independent paths through \reff{u*rep}.

\State \textbf{Return:}
$(v^\psi,w^\phi,\hat u,\hat\pi)$.

\end{algorithmic}
\end{breakablealgorithm}

We now present numerical results for applying Algorithm \ref{alg:vanilla-single}.
%\begin{rem}
\rm{Unless otherwise stated, all the test results reported are based on 50 independent runs on a local machine with an NVIDIA GeForce RTX 3080 Ti GPU (12 GB GDDR6X, 384-bit memory interface). For detailed neural network specifications, hyperparameters, and the complete codebase, please refer to our GitHub repository https://github.com/GaozhanWang/MWZZ-RL-2026.} %\qed
%\end{rem}

We start with two one-dimensional stochastic control examples.
\begin{eg}\label{EX-base} {\rm Consider a control problem with the following coefficients and control set:
\beaa
b(t,x,a) = x+a,\quad \sigma(t,x)=1, \quad a \in [0,1],
\eeaa
along with the reward functions
\beaa
r(t,x,a)=(2t+2ax+1)e^{-(t^2 + x^2 + 1)},\quad g(x) = e^{-(T^2 + x^2 - 1)}.
\eeaa
%Because the dynamics coefficients are linear, the standing Assumptions \ref{assum-standing} and \ref{data}-(i) are satisfied.
%Unlike the usual case, where people define the running reward and then try to solve for the value function, for testing purposes, we set up the value function first as
%$$
%u^*(t,x) = e^{-(t^2 + x^2 + 1)}.
%$$
%Then we find the corresponding running reward under the given dynamics that satisfies the exploratory HJB equation \reff{HJBu} is given as
%$$r(t,x,a)=(2t+2ax)u^*.$$
Given those coefficients, setting  $\lambda =1$, we can solve the exploratory HJB equation \reff{HJBu} analytically to get the {\it ground-truth} optimal value function
as well as the corresponding optimal policy
$$
u^*(t,x) = e^{-(t^2 + x^2 + 1)};\qq \pi^*(t,x,a)=\frac{1}{|A|}=1,\q  a\in[0,1].
$$
Now we pretend that we do not know the analytical forms of $b$ and $r$, and apply Algorithm 1 to solve the problem numerically. First consider the problem starting from
$(t_0,x_0)=(0,0.1)$, in which case we choose $ \mathrm{Law(X_0)}= \mathrm{Dirac}(x_0)$. In implementation, we set  $T=0.4, \D t = 0.02$. %All of our following reported results are based on 50 independent runs on each example. We
Denote %the true value and the predicted value as $u^*(t_0,x_0),\hat{u}(t_0,x_0)$ respectively, and denote
the relative error to be
$$RE=\Big|\frac{u^*(t_0,x_0)-\hat{u}(t_0,x_0)}{u^*(t_0,x_0)}\Big|,$$
where $u^*(t_0,x_0),\hat{u}(t_0,x_0)$ are the true (initial) value and the learned one respectively. This error is motivated by a scenario where one is concerned with only the optimal value corresponding to the particular initial pair $(t_0,x_0)$.
% The following table reports the testing results based on 50 independent runs on a local machine with an NVIDIA GeForce RTX 3080 Ti GPU (12\,GB GDDR6X, 384-bit memory interface).
% \begin{table}[h]
%   \centering

%   \label{tab:num-summary-grid}
%   \begin{tabular}{|c|c|c|c|c|}
%     \hline
%     \text{$u^*(t_0,x_0)$} &
%     \text{Mean $\hat{u}(t_0,x_0)$} &
%     \text{Mean RE} &
%     \text{STD RE} &
%     \text{Mean runtime (s)} \\
%     \hline
%     0.3642 & 0.3546 & 2.6397\% & 0.4459\% & 38.12 \\
%     \hline
%     % 1.2345 & 1.2410 & 0.009 & 0.0001 & 11.80 \\
%     % \hline
%     % 0.5432 & 0.5498 & 0.015 & 0.0003 & 18.07 \\
%     % \hline
%     % --- Add more rows as needed ---
%   \end{tabular}
%   \caption{Numerical results for EX1.1.}
% \end{table}
% %0.4 11 64 32 96 40
\begin{table}[H]
  \centering
  \begin{tabular}{|c|c|c|c|c|c|}
    \hline
    \text{ Time horizon}&
    \text{$u^*(t_0,x_0)$} &
    \text{Mean $\hat{u}(t_0,x_0)$} &
    \text{Mean RE} &
    \text{STD RE} &
    \text{Mean runtime (s)} \\
    \hline
    $T$=0.4&  0.3642 & 0.3546 & 2.6397\% & 0.4459\% & 38.12 \\
    \hline
    % T=0.5&0.3642 & 0.3395 & 6.7679\% & 0.4714\% & 49.97 \\
    % \hline
    % 1.2345 & 1.2410 & 0.009 & 0.0001 & 11.80 \\
    % \hline
    % 0.5432 & 0.5498 & 0.015 & 0.0003 & 18.07 \\
    % \hline
    % --- Add more rows as needed ---
  \end{tabular}
  \caption{Numerical results with Algorithm
  \ref{alg:vanilla-single} on Example \ref{EX-base}.}
\end{table}
%0.4 11 64 32 96 40
%0.5 11 64 32 96 40

Applying Algorithm 1 with the choice of $ \mathrm{Law(X_0)}=\mathrm{Dirac}(x_0)$, we not only can learn the particular value $\hat{u}(t_0,x_0)$ but also the {\it entire} optimal value function $\hat u$ along with the optimal policy $\hat\pi$. At below we %Figure \ref{fig:pi_surface}
provide visualizations of the thus learned optimal value function and optimal policy versus the oracle ones in the following two figures. To be more precise, Figure \ref{fig:u_surface} shows the learned value
function surface and its relative error with respect to the oracle
solution.\footnote{Due to our GPU limit, we display only the
portion of the surface over $t\in[0,0.3]$ for illustration. All the figures in this paper are provided in vector format; hence
although scaled down to fit the page layout, they can be
enlarged without loss of resolution to inspect finer details.}
The three plots in Figure \ref{fig:pi_surface} compare the learned
policy $\hat{\pi}(t,\cdot,\cdot)$ with the true optimal policy
$\pi^*(t,\cdot,\cdot)$ at the time slices $t=0,0.2,0.4$, respectively.

\begin{figure}[H]
    \centering
    \includegraphics[width=1.05\textwidth]{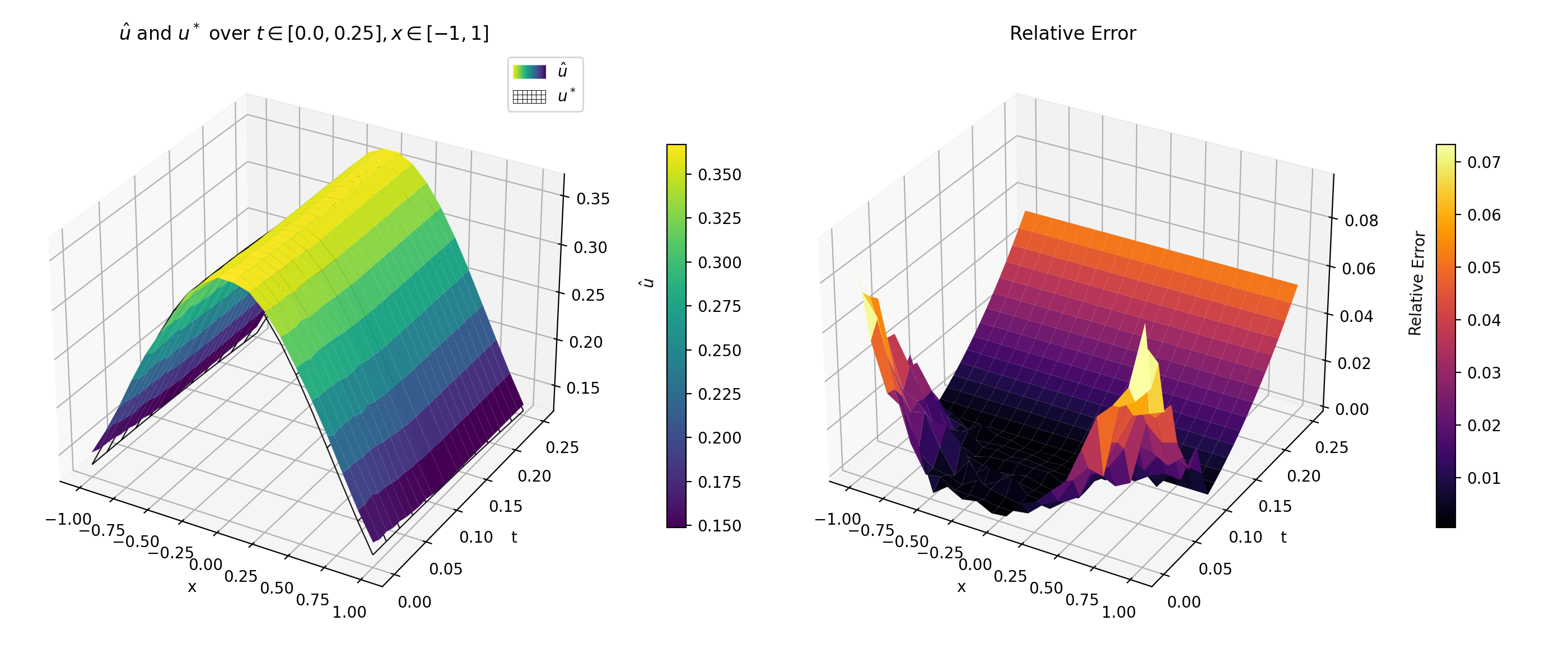}
    \caption{$\hat{u}$ vs $u^*$}
    \label{fig:u_surface}
\end{figure}
\begin{figure}[H]
    \centering
    \includegraphics[scale=0.30]{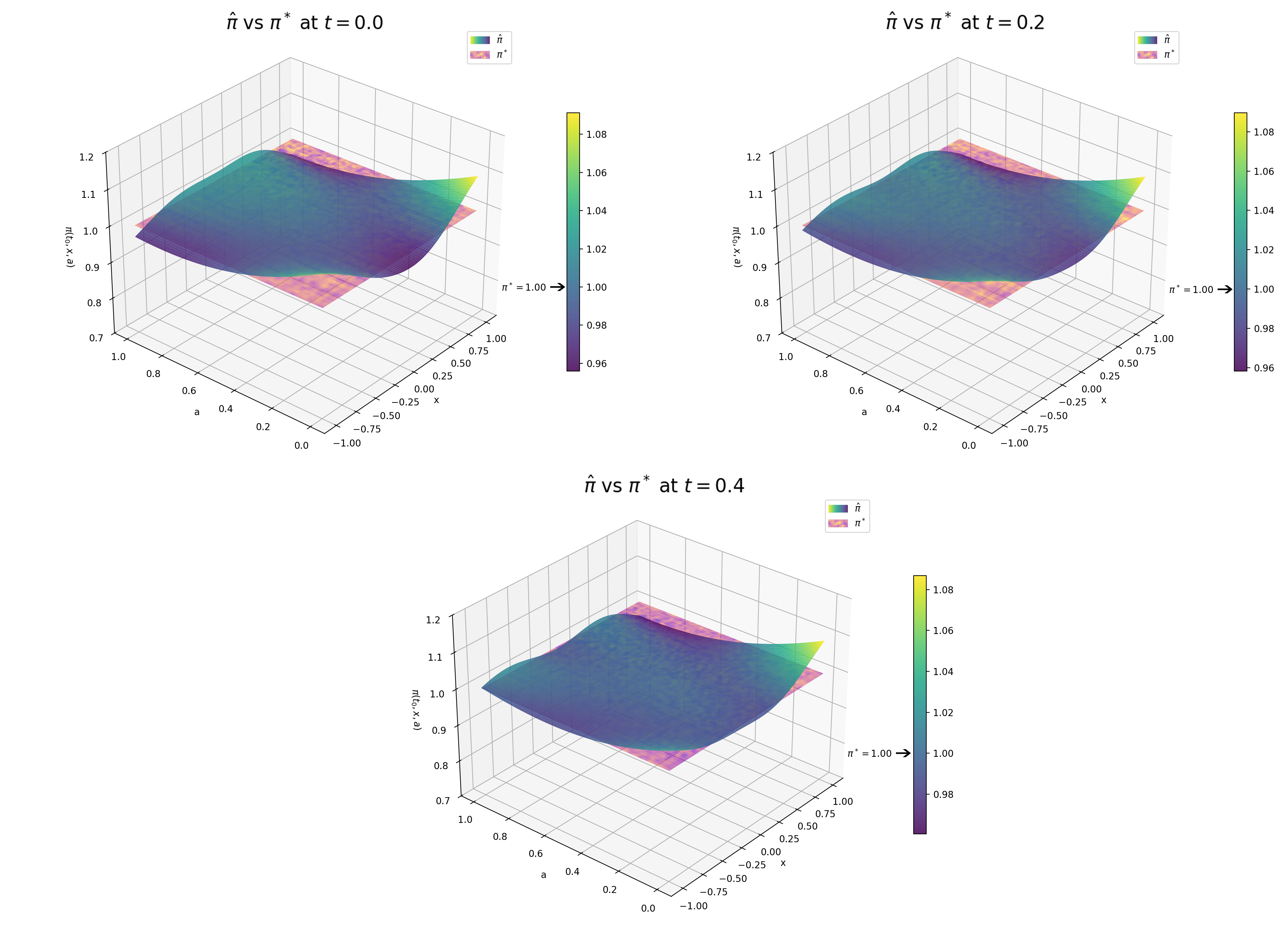}
    \caption{$\hat{\pi}$  vs $\pi^*$}
    \label{fig:pi_surface}
\end{figure}
We observe in Figure \ref{fig:u_surface} that the relative errors in the optimal value function remain small in a neighborhood of the given initial point, $(t_0,x_0)=(0.0,0.1)$. As state variable $x$ moves away from this initial condition $x_0$, the errors gradually increase. This behavior is natural
because in this example $\hat u$ is learned based on state/reference trajectories all initialized at $(t_0,x_0)=(0.0,0.1)$. Consequently, the training data are concentrated along trajectories emanating from this point, leading to higher accuracy in its vicinity and weaker generalization in more distant regions. For applications where the control problem is strictly tied to a specific initial time and state, this localized sampling approach provides a highly computationally efficient solution and avoids the unnecessary overhead of exploring the broader state space.

If the goal of the learning is to obtain the {\it global} optimal value function (or equivalently to solve the exporatory HJB) and/or the optimal feedback policy, then we need to randomize the initial state $x_0$ starting from $t_0=0$ according to some non-Dirac distribution $\mathrm{Law}(X_0)$ for generating training trajectories/data as discussed earlier.
For illustration and simplicity, we  take $\mathrm{Law}(X_0)$ to be a uniform distribution over an interval $[x^{\min}_0, x^{\max}_0]=[-1,1]$ for the current example. The corresponding test results (run with a new set of  independent random seeds) are presented in Figure \ref{fig:u_curve}.
\begin{figure}[H]
    \centering
    \includegraphics[scale=0.5]{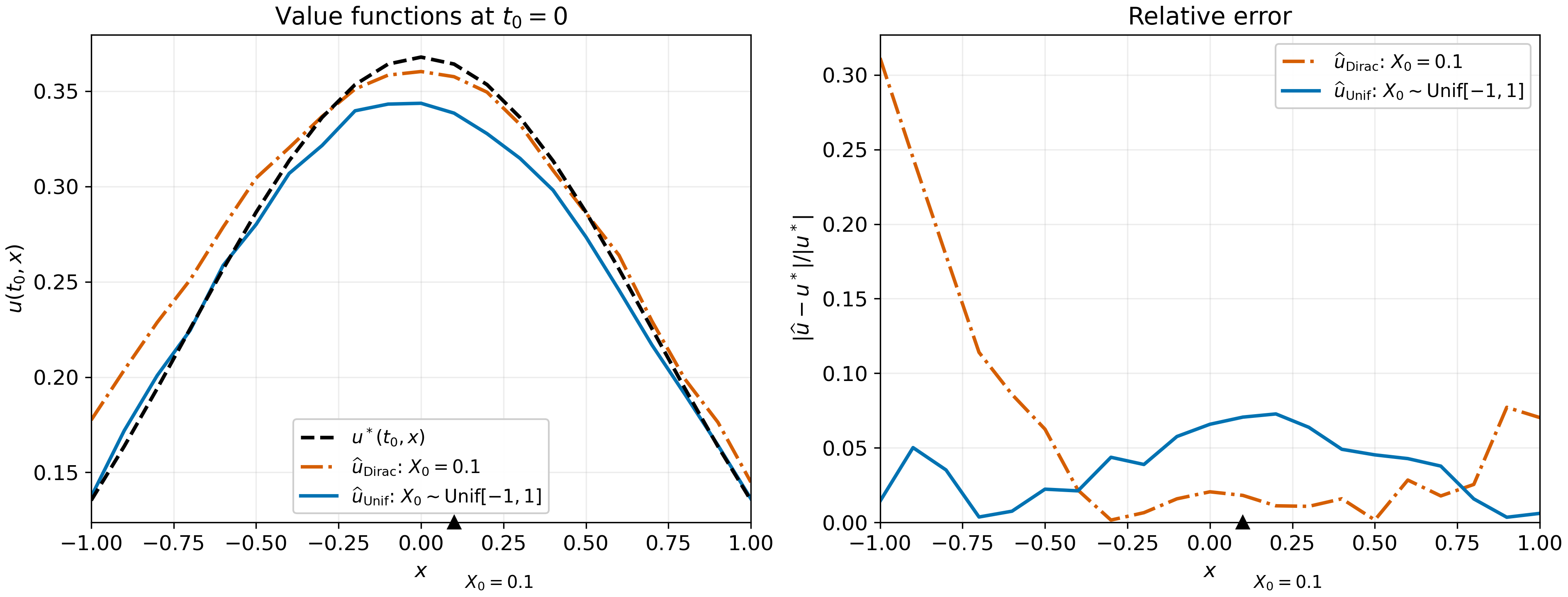}
    \caption{$\hat{u}(0,\cdot)$  vs $u^*(0,\cdot)$}
    \label{fig:u_curve}
\end{figure}

While randomizing initial state naturally demands a higher computational cost during training, it successfully mitigates the local concentration of trajectories. Indeed, Figure \ref{fig:u_curve} shows that the uniform accuracy of the approximation is now significantly improved across the target state space.

} \end{eg}
\begin{eg}\label{EX-tri} {\rm The next example involves trigonometric type functions with a non-constant volatility. Specifically,
\beaa
b(t,x,a) = -2\cos(t+x)-\frac{1}{4}\sin(2t+2x)-1+a,\quad \sigma(t,x)=2+\sin(t+x), \quad a \in [0,1],
\eeaa
with
%and the true value function and terminal are given as
$$
r(t,x,a)=a\sin(t+x)+2\cos(t+x), \q g(x) = \cos(T+x).
$$
Setting $\lambda =1$, the theoretical optimal value function is $u^*(t,x) = \cos(t+x)$. We set $(t_0,x_0)=(0,0),T=0.1,\D t = 0.02$, and obtain the results below:
% 0.1，5，64，64，96，40
\begin{table}[H]
  \centering
  \begin{tabular}{|c|c|c|c|c|c|}
    \hline
    \text{Time horizon}&
    \text{$u^*(t_0,x_0)$} &
    \text{Mean $\hat{u}(t_0,x_0)$} &
    \text{Mean RE} &
    \text{STD RE} &
    \text{Mean runtime (s)} \\
    \hline
    $T$=0.1 & 1.0000 &  0.9737 & 2.6222\% & 0.0423\% & 29.15 \\
    \hline
    % 1.2345 & 1.2410 & 0.009 & 0.0001 & 11.80 \\
    % \hline
    % 0.5432 & 0.5498 & 0.015 & 0.0003 & 18.07 \\
    % \hline
    % --- Add more rows as needed ---
  \end{tabular}
  \caption{Numerical results with Algorithm \ref{alg:vanilla-single} on Example \ref{EX-tri}.}
\end{table}

% 0.1，1，6，64，64，96

Note that in this example as both the model parameters and the truth value functions are trigonometric, we deliberately avoid using \texttt{tanh} as the neural network activation function—even though it is common—in order to test the generality of our algorithm’s behavior.
}
\end{eg}

We now use the previous two examples to compare Algorithm \ref{alg:vanilla-single} with the classical model-based method which assumes all the model parameters to be known and applies PIA to directly approximate the optimal value function. Importantly, as discussed earlier PIA takes auto-diff to directly calculate the gradient of the approximated value function across iterations.

\begin{eg}\label{EX-PIA} {\rm (PIA Comparison) We find that the model-based PIA is numerically unstable: with GPU acceleration, it either terminates quickly in a few seconds but stops at a poor estimation, or it fails to converge within a reasonable runtime.  In the tables below, we only report results with ``converged" values and, in the latter case, we cap the policy iteration count at 1000 to prevent endless runs. As a result, we do not report runtimes here.

Tables \ref{tab:num-summary-grid} and \ref{tab:num-summary-grid2} indicate the poor performance of the PIA scheme, which generally does not converge to the true solution.

\begin{table}[htbp]
  \centering
  \begin{tabular}{|c|c|c|c|c|}
    \hline
    \text{Time Horizon}&
    \text{$u^*(t_0,x_0)$} &
    \text{Mean $\hat{u}(t_0,x_0)$} &
    \text{Mean RE} &
    \text{STD RE} \\
    \hline
    T=0.4&0.3642 & 2.191 & 495.6\% & 6.5301\% \\
    \hline
    % 1.2345 & 1.2410 & 0.009 & 0.0001 & 11.80 \\
    % \hline
    % 0.5432 & 0.5498 & 0.015 & 0.0003 & 18.07 \\
    % \hline
    % --- Add more rows as needed ---
  \end{tabular}
  \caption{Numerical results with PIA for Example \ref{EX-base}.}
  \label{tab:num-summary-grid}
\end{table}

\begin{table}[H]
  \centering
  \begin{tabular}{|c|c|c|c|c|}
    \hline
    \text{Time Horizon}&
    \text{$u^*(t_0,x_0)$} &
    \text{Mean $\hat{u}(t_0,x_0)$} &
    \text{Mean RE} &
    \text{STD RE} \\
    \hline
    T=0.1&1.000 & 1.466 & 46.58\% & 1.4452\% \\
    \hline
    % 1.2345 & 1.2410 & 0.009 & 0.0001 & 11.80 \\
    % \hline
    % 0.5432 & 0.5498 & 0.015 & 0.0003 & 18.07 \\
    % \hline
    % --- Add more rows as needed ---
  \end{tabular}
  \caption{Numerical results with PIA for Example \ref{EX-tri}.}
  \label{tab:num-summary-grid2}
\end{table}

}
\end{eg}

\begin{eg}\label{EX-hdim}{\rm We now extend our numerical experiment to multi-dimensional cases. Take a scaled version of Example \ref{EX-base}, with the dynamics and the true value function given as
\beaa
b_i(t,\Vec{x},a) = \Vec{x}_i+a_i, \q \sigma(t,x)=\hI_d, \quad a \in [0,1]^d, \quad i=1,...,d,
\eeaa
\beaa
u^*(t,\Vec{x})=e^{-(t^2+\norm{\Vec{x}}^2_2/d+1)},\q \Vec{x}\in \hR^d,
\eeaa
along with the reward functions
\beaa
r(t,\Vec{x},a)=(2t+2a^\top\Vec{x})e^{-(t^2 + \norm{\Vec{x}}^2_2/d + 1)},\quad g(\Vec{x}) = e^{-(T^2 + \norm{\Vec{x}}^2_2/d - 1)}.
\eeaa
We test Algorithm \ref{alg:vanilla-single} on dimensions $d =1,5,10,20,50,100$ with $\Vec{x}_0 = (0.1, 0.1, \dots, 0.1)^\top \in \mathbb{R}^d$, $T= 0.1$, and report the results in Table \ref{tab:num-summary-grid3}.\footnote{Algorithm \ref{alg:vanilla-single} is written for $d=1$, but see the remarks immediately before the algorithm pseudocode for an explanation on multi-dimensional cases. Meanwhile, the dramatic runtime jump from dimension 50 to 100 is chiefly a hardware effect—our local GPU saturates its parallel resources (memory-bandwidth limits)—rather than a sudden increase in statistical or algorithmic complexity. Due to the heavy runtime, the results for the 100-dimensional case are based on only 2 independent runs.}
\begin{table}[H]
  \centering
  \begin{tabular}{|c|c|c|c|c|c|}
    \hline
    \text{Dimension} &
    \text{$u^*(t_0,x_0)$} &
    \text{Mean $\hat{u}(t_0,x_0)$} &
    \text{Mean RE} &
    \text{STD RE} &
    \text{Mean runtime (s)} \\
    \hline
    $d$=1&0.3642 & 0.3678 & 1.0021\% & 0.1476\% & 10.41 \\
    \hline

    $d$=5&0.3642 & 0.3351 & 7.9830\% & 0.0731\% & 10.75 \\
    \hline
 %0.1 5 64 32 96 20
    $d$=10&0.3642 & 0.3309 & 9.1297\% & 0.2205\% & 18.29  \\
    \hline
%0.1 6 96 96 96 40
     $d$=20&0.3642 & 0.3285 &  9.8023\% & 0.1234\% & 36.88  \\
    \hline
%0.1 6 96 128 96 40
     $d$=50&0.3642 & 0.3274 & 10.1823\% & 0.0828\% &  103.31 \\
    \hline
%0.1 5 128 128 96 20
    $d$=100&0.3642 & 0.3266& 10.3283\% & 0.0057\% & 17201.05\\
    \hline
%0.1 5 128 128 96 20
    % --- Add more rows as needed ---
  \end{tabular}
  \caption{Numerical results for Algorithm \ref{alg:vanilla-single} on Example \ref{EX-hdim}.}
\label{tab:num-summary-grid3}
\end{table}
}
\end{eg}

We now compare the above result with a state-of-the-art benchmark -- a {\it model-based} method for high-dimensional PDEs in \cite{E2018}.
%\begin{eg}\label{EX7}
Under the same setting of Example \ref{EX-hdim} but assuming oracle knowledge of the model parameters, we apply the algorithm in \cite{E2018} to obtain results in the following table.
\begin{table}[H]
  \centering
  \begin{tabular}{|c|c|c|c|c|c|}
    \hline
    \text{Dimension} &
    \text{$u^*(t_0,x_0)$} &
    \text{Mean $\hat{u}(t_0,x_0)$} &
    \text{Mean RE} &
    \text{STD RE} &
    \text{Mean runtime (s)} \\
    \hline
    $d$=5&0.3642& 0.3443 & 5.4619\% & 0.0314\% &  4.84 \\
    \hline
%0.1 6 96 96 96 40
     $d$=10&0.3642 & 0.3317 &  8.9098\% & 0.0371\% & 6.36  \\
    \hline
    $d$=20&0.3642 & 0.3308 & 9.1290\% & 0.0138\% &  19.75 \\
    \hline
%0.1 6 96 128 96 40
     $d$=50&0.3642 & 0.3295 & 9.5215\% & 0.0149\% &  40.55 \\
    \hline
    $d$=100&0.3642 & 0.3280 & 9.9925\% & 0.0880\% &  64.66 \\
    \hline
    \end{tabular}
%0.1 5 128 128 96 20
  \caption{Numerical results with algorithm in \cite{E2018} for Example \ref{EX-hdim}.}
  \label{tab:num-ex7}
\end{table}
% Comparing Tables \ref{tab:num-summary-grid3} and \ref{tab:num-ex7}, we see that our method, even in the absence of the knowledge of some key parameters, achieves comparable accuracy as the model-based benchmark in roughly double or three times the runtime. %, holding all other factors the same.
% {\color{red}(I am running a d=100 case for fair comparison. If the runtime does not leap as in our algorithm, we may want to change the presentation to avoid misleading.)
{
A comparison of the results in Tables \ref{tab:num-summary-grid3} and \ref{tab:num-ex7} is in order. First of all, due to the missing key model coefficients our algorithm needs to carry out a significantly larger number of sampling and computations that inevitably increase the runtime. However,
% highlights both the efficacy and the computational trade-offs of our proposed method.
in relatively lower dimensions ($d \le 50$), our algorithm, aided by GPU acceleration, achieves near-identical accuracy to the model-based approach within a reasonable computational timeframe (about 2-3 times longer). Crucially, unlike CPU execution, where computational time scales roughly linearly with sample size and dimension, GPU parallelization allows for sub-linear time scaling, maintaining high speed until hardware limits are reached. As the dimension increases to $d=100$, we encounter exactly this hardware bottleneck under our current local computing power. While we anticipate that our algorithm can maintain comparable accuracy and execution speed given abundant GPU resources, it is important to explicitly acknowledge its inherent computational demands. Specifically, for any fixed dimension, our approach requires a significantly larger GPU memory footprint than the model-based baseline.
}

%\end{eg}

\subsection{Algorithm for general time horizon}
We now extend Algorithm~\ref{alg:vanilla-single} to the general setting when $T$ is arbitrary. For all the previously reported examples, we will show that the performance of Algorithm~\ref{alg:vanilla-single} deteriorates rapidly as $T$ increases (see Example \ref{EX-multi} below). In particular, it happens when $T>0.4$ in Example \ref{EX-base} and $T>0.1$ in Example \ref{EX-tri}.

{We first present the following (multi-period) algorithm based on our analysis in \S \ref{sect-multi} for a general time horizon $T$}.

\begin{breakablealgorithm}
\caption{(Multi-time-period algorithm)}
\label{alg:multi-train}
\begin{algorithmic}[1]\small

\State \textbf{Input:} Fixed initial time $t_0=0$ and segment partition
\[
t_0=T_0<T_1<\cdots<T_S=T;
\]
\Statex for each segment $j$, number of grid points $K_j$,
epochs $E_j$, and learning rates $\eta_{v,j},\eta_{w,j}$; batch size $M$, nested path size $N$,
action space $A$,
%action grid $\{a_l\}_{l=1}^{N_a}$,
temperature parameter $\lambda>0$, and softmax parameter $\tau$; distribution of the initial state
$\mathrm{Law}(X_0)$.

\State \textbf{Output:} Trained segment networks
$\{(v^{\psi_j},w^{\phi_j})\}_{j=0}^{S-1}$
and the resulting estimated optimal value $\hat u$
and policy $\hat\pi$.

\State \textbf{Initialize:} For each $j=0,\dots,S-1$, set
\[
\D t_j
\gets
\frac{T_{j+1}-T_j}{K_j-1},
\qquad
t_{j,k}
\gets
T_j+k\D t_j,
\qquad k=0,\dots,K_j-1,
\]
and randomly initialize the segment-network weights
$(\psi_j,\phi_j)$.

\For{$j=S-1,\dots,0$}
    \com{Backward induction over segments}

    \State Define the terminal derivative for segment $j$ by
    \[
    g_x^{(j)}(x)
    =
    \begin{cases}
        g_x(x),
        & j=S-1,\\[3pt]
        v^{\psi_{j+1}}(T_{j+1},x),
        & j<S-1.
    \end{cases}
    \]

    \For{$\mathrm{epoch}=1,\dots,E_j$}

        \State \textbf{Sample initial states for the current segment:}

        \For{$m=1,\dots,M$}
            \If{$j=0$}
                \State Draw
                $\cX_{j,0}^{(m)}
                \sim
                \mathrm{Law}(X_0)$.

            \Else
                \State Draw
                $\cX_{j,0}^{(m)}
                \sim
                \mathrm{Law}
                \left(
                \cX_{T_j}^{t_0,X_0}
                \right)$.
                \com{Empirical law obtained from forward simulation}
            \EndIf
        \EndFor
        \State Execute the primary-path simulation, nested-path simulation, and residual computation as in
        Algorithm~\ref{alg:vanilla-single}, restricted to the interval $[T_j,T_{j+1}]$ with time step $\D t_j$. Replace the terminal derivative target in Algorithm~\ref{alg:vanilla-single} by
        $
        g_x^{(j)}
        \left(
        \cX_{j,K_j-1}^{(m)}
        \right).
        $
        \State Aggregate the batch residuals to obtain
        the segment loss $J_j$.

        \State Update
        \[
        \psi_j
        \gets
        \psi_j-\eta_{v,j}\nabla_{\psi_j}J_j,
        \qquad
        \phi_j
        \gets
        \phi_j-\eta_{w,j}\nabla_{\phi_j}J_j.
        \]
    \EndFor

    \State Store the trained segment networks
    $(v^{\psi_j},w^{\phi_j})$.
\EndFor

\State Construct $\hat\pi(t,x,a)$ by stitching
$\{w^{\phi_j}\}_{j=0}^{S-1}$ through \reff{pi*}.

\State Recover $\hat u$ using independent paths and
$\{w^{\phi_j}\}_{j=0}^{S-1}$ through \reff{u*rep}.

\State \textbf{Return:}
$\left(
\{v^{\psi_j},w^{\phi_j}\}_{j=0}^{S-1},
\hat u,\hat\pi
\right)$.

\end{algorithmic}
\end{breakablealgorithm}

{
There is a key design choice that distinguishes the multi-period scheme from the single-period one,  Algorithm \ref{alg:vanilla-single}, where the initial state distribution, $\mathrm{Law}(X_0)$, is typically {\it fixed} as, e.g.,  a Dirac or uniform distribution. In the multi-period setting, however,  one needs to utilize the learned value function of period $j$ as the terminal condition for period $j-1$. To incorporate this backward coupling, at the starting point of period $j$, we draw initial states from the empirical law of the forward reference paths $\cX^{t_0,x_0}_{t_j}$ (see line 12 of Algorithm \ref{alg:multi-train}). This mechanism is both natural and canonical, so as to match the distributional weighting inherent to the Feynman–Kac representation (\ref{u*rep}), thereby strictly aligning the training objective of the algorithm with the underlying theory.
}

\begin{eg}\label{EX-multi}
{\rm We now compare Algorithm~\ref{alg:vanilla-single} with Algorithm~\ref{alg:multi-train} using Examples \ref{EX-base} and \ref{EX-tri} with different $T$'s while keeping   all  other parameters (including  $\D t$ and the neural network hyperparameters) unchanged.  Algorithm~\ref{alg:vanilla-single} is trained on a single period, while Algorithm~\ref{alg:multi-train} is trained on multiple periods dividing the given single period.
% For instance, for Examples \ref{EX-base} with $T=0.2$, Algorithm~\ref{alg:vanilla-single} applies to the single segment $[0,0.2]$, whereas Algorithm~\ref{alg:multi-train} splits the horizon into two periods, $[0,0.1]$ and $[0.1,0.2]$, in implementing the algorithm.
The following two tables { show that Algorithm~\ref{alg:multi-train} clearly outperforms Algorithm~\ref{alg:vanilla-single}  in terms of learning accuracy. Also, the former reduces  runtime substantially  for larger $T$}.
\begin{table}[H]
\centering
\begin{tabular}{c|c}
\textbf{Case I} &\textbf{Case II}   \\
Example \ref{EX-base}&Example \ref{EX-base}\\
$T=0.6$&$T=0.8$\\
\text{Single time segment:} $[0,0.6]$&\text{Single time segment:} $[0,0.8]$\\
\text{Multi time segments:} $[0,0.2],[0.2,0.4],[0.4,0.6]$&\text{Multi time segments:} $[0,0.2],...,[0.6,0.8]$
\end{tabular}
\end{table}
\begin{table}[H]
\centering
\begin{tabular}{|c|c|c|c|c|c|}
    \hline
    \text{Time segments} &
    \text{$u^*(t_0,x_0)$} &
    \text{Mean $\hat{u}(t_0,x_0)$} &
    \text{Mean RE} &
    \text{STD RE} &
    \text{Mean runtime (s)} \\
    \hline
    Case I Single&0.3642 &0.3170  & 12.9657\% & 0.8409\% & 72.35 \\
    \hline

    \textbf{Case I Multi}&\textbf{0.3642} & \textbf{0.3542} & \textbf{2.7890\%} & \textbf{1.1365\%} &\textbf{ 121.98} \\
    \hline
    Case II Single&0.3642 &0.2717& 25.3914\% & 0.7311\% &   609.40\\
    \hline
    \textbf{Case II Multi}&\textbf{0.3642 }& \textbf{0.3676} & \textbf{1.3194\% }& \textbf{2.4641\% }& \textbf{147.29} \\

    \hline
    % --- Add more rows as needed ---
\end{tabular}
% 0.6，1，18，64，32，96，100 [-1,1]
% 0.6，3，6，64，64，96，100[-1,1]
% 0.8，4，6，64，64，96，100[-1,1]
% 0.8，1，24，64，64，96，100[-1,1]
\caption{Comparison of two algorithms on Example \ref{EX-base}.}
\label{singlevsmulti-base}
\end{table}

\begin{table}[H]
\centering
\begin{tabular}{c|c}
\textbf{Case III} &\textbf{Case IV}   \\
Example \ref{EX-tri}&Example \ref{EX-tri}\\
$T=0.2$&$T=0.3$\\
\text{Single time segment:} $[0,0.2]$&\text{Single time segment:} $[0,0.3]$\\
\text{Multiple time segments:} $[0,0.1],[0.1,0.2]$&\text{Multiple time segments:}$[0,0.1],[0.1,0.2],[0.2,0.3]$
\end{tabular}
\end{table}
\begin{table}[H]
\centering
\begin{tabular}{|c|c|c|c|c|c|}
    \hline
    \text{Time segment} &
    \text{$u^*(t_0,x_0)$} &
    \text{Mean $\hat{u}(t_0,x_0)$} &
    \text{Mean RE} &
    \text{STD RE} &
    \text{Mean runtime (s)} \\
    \hline
    Case III Single&1.0000 & 0.8771 & 12.2906\% & 0.0642\% & 54.60 \\
    \hline
    \textbf{Case III Multi}&\textbf{1.0000} & \textbf{0.9673 }& \textbf{3.2677\%} & \textbf{0.9841\%} & \textbf{87.19 }\\
    \hline
    Case IV Single&1.0000 &  0.7755& 22.4452\% & 1.161\% &  456.89\\
    \hline
    \textbf{Case IV Multi}&\textbf{1.0000} & \textbf{1.0267} & \textbf{ 2.8463\%} & \textbf{1.2735\%} & \textbf{135.78} \\

    \hline
\end{tabular}
\caption{Comparison of two algorithms on Example \ref{EX-tri}.}
\label{singlevsmulti-tri}
\end{table}
% 0.2，1，11，64，64，96
% 0.2，2，6，64，64，96，[-0.9,0.9]
% 0.3，3，6，64，64，96，[-0.9,0.9]
}
\end{eg}

\section{A Special Case with Controlled Diffusion Coefficients}
\label{sect-volatility}
The general problem considered so far excludes the case when control enters into the diffusion coefficients of the dynamics, for which the approach we develop fails.
In this section, we discuss a special case with controllable diffusion coefficients and one-dimensional state space in infinite time horizon and offer a numerical example. {The treatment is based on  the main idea of  \cite{MWZ1}.}

Recall the infinite horizon problem formulated at the end of \S \ref{sect-small} and consider only the one-dimensional case, i.e., $d=1$.   The entropy-regularized control problem with control-dependent diffusion is
\bea
\label{scalarV}
\left.\ba{lll}
\dis dX^\pi_t = \tilde b(X^\pi_t, \pi(t,X^\pi_t))dt + \sqrt{\widetilde {\si^2}(X^\pi_t, \pi(t, X^\pi_t))} dW_t;\ss\\
\dis J(x, \pi):= \hE\Big[\int_0^\infty e^{-\rho t} \big[\tilde r(X^\pi_t, \pi(t, X^\pi_t)) + \l \cH(\pi(t, X^\pi_t))\big]dt\Big|X^\pi_0=x\Big]; \ss\\
\dis u^*(x) := \sup_{\pi\in \cA} J(x, \pi).
\ea\right.
\eea
The corresponding exploratory HJB is
\bea
\label{scalarHJBv}
\left.\ba{c}
\dis \rho u^* = H(x,  u^*_x,  u^*_{xx}), \q x\in\hR,\q\mbox{where}\ss\\
\dis
H(x, z, q) := \sup_{\pi\in \cP_0(A)} \Big[{1\over 2} \widetilde {\si^2}(x, \pi) q  + \tilde b(x, \pi) z+ \tilde r(x, \pi) + \l \cH(\pi)\Big].
\ea\right.
\eea
The optimal feedback policy  has the Gibbs form:
\bea
\label{scalarHGamma}
\left.\ba{c}
\dis \pi^*(x,a) = \G(x, u^*_x, u^*_{xx}, a),\q \mbox{where}~\G(x, z, q, a) := \frac{\g(x,z, q, a)}{\int_A \g(x,z, q, a') d a'},\ss\\
\dis
\g(x,z, q, a):= \exp \Big(\frac{1}{\lambda}[{1\over 2}\si^2(x,a)q+b(x,a)z+r(x, a)]\Big).
\ea\right.
\eea
One can consequently rewrite \reff{scalarHJBv} as
\beaa
\dis \rho u^*(x)=\lambda\ln\int_A \exp{\Bigl\{\frac{1}{\lambda}\bigl[\frac{1}{2}\sigma^2(x,a)\partial_{xx}u^*(x)+b(x,a)\partial_xu^*(x)+r(x,a)\bigr]\Bigr\}}da.
\eeaa

{%To facilitate our discussion, in the rest of this section we shall make use of the following strengthened standing assumptions. In particular,
We make the following strengthened standing assumptions, including in particular the so-called {\it smallness} assumption on the dependence of the diffusion coefficient $\si$ on control which is used also in e.g. \cite{TWZ}.}
\begin{assum}
\label{assum-vol}There exist constants $L_*,\varepsilon>0$ such that the followings hold true:\\
\ss
(i) $d=1;b,r$ satisfy Assumption \ref{assum-standing};\\
\ss
(ii) %The volatility term has the following special structure
$\sigma^{2}(x,a)=\sigma_0^2(x)+\sigma^2_1(x,a),$ where $\sigma_0$ satisfies Assumption \ref{assum-standing} and $|\sigma_1^2|\leq \varepsilon$;\\
\ss
(iii) The true solution $u^*$ of the HJB equation  \reff{scalarHJBv} satisfies
\beaa
\norm{u^*}_0+\norm{u^*_x}_0+\norm{u^*_{xx}}_0\leq L_*.
\eeaa
\end{assum}

%\begin{rem}
 Throughout this section, we omit the time discretization analysis as it is completely parallel to the one in previous sections, focusing on the key ideas for designing the algorithm. As a result, in contrast to Assumption~\ref{assum-standing} we do not require the boundedness of $\norm{u_{xxx}^*}_0$ here, because the convergence analysis presented here does not involve time discretization errors. Moreover, the boundedness of $\|u^*\|_0$ is not an additional assumption, as it follows directly from the boundedness of the reward function $r$.
%\end{rem}

Under Assumption \ref{assum-vol}, the corresponding HJB equation becomes
\bea \label{scalarHJB2}
\dis \rho u^*(x)=\frac{1}{2}\sigma_0^2(x)u_{xx}^*(x)+\lambda\ln\int_A \exp{\Bigl\{\frac{1}{\lambda}\bigl[\frac{1}{2}\sigma_1^2(x,a)u_{xx}^*(x)+b(x,a)u_x^*(x)+r(x,a)\bigr]\Bigr\}}da.
\eea
Denote
\beaa
v^*(x):=u_x^*(x),\q \theta^*(x):=u_{xx}^*(x), \q w^*(x,a):=b(x,a)u_x^*(x)+r(x,a).
\eeaa
Following the same idea as in \S \ref{sect-idea} by applying the Feynman--Kac formula and Bismut--Elworthy--Li representation, we first have
\bea \label{u*repvol}
&&\dis \!\!\!\!\!\! \!\!\!\!\!\! \!\!\!\!\!\! u^*(x)=\dbE\biggl[\int_0^\infty e^{-\rho t}\lambda\ln\int_A \exp{\Bigl\{\frac{1}{\lambda}\bigl[\frac{1}{2}\sigma_1^2(\cX^x_t,a)\theta^*(\cX^x_t)+w^*(\cX^x_t,a)\bigr]\Bigr\}}dadt\biggr],\\
\label{v*repvol}
&&\dis \!\!\!\!\!\! \!\!\!\!\!\! \!\!\!\!\!\! v^*(x)=\dbE\biggl[\int_0^\infty e^{-\rho t}N_t^x\lambda\ln\int_A \exp{\Bigl\{\frac{1}{\lambda}\bigl[\frac{1}{2}\sigma_1^2(\cX^x_t,a)\theta^*(\cX^x_t)+w^*(\cX^x_t,a)
\bigr]\Bigr\}}dadt\biggr],
\eea
where $\cX^x$ is the dynamics of the reference state in infinite horizon:
\bea
\label{volref0}
\cX_t^x&:=&x+\int_0^t\sigma_0(\cX_s^x)dB_s, \q t>0.
\eea
The kernel term in the representation of $v^*$ in \reff{v*repvol} becomes
\bea\label{Ntvol}
\dis N_t^x=\frac{1}{t}\int_0^t\sigma_0^{-1}(\cX_s^x)\td \cX_s^{x}dB_s,\q \td \cX_t^x = 1 + \int_0^t \pa_x \si_0(\cX^x_s) \td \cX^x_s dB_s.
\eea
Mimicking  the estimate \reff{w*rep} in \S \ref{sect-idea}, we have
\bea\label{w*repvol}
\dis w^*(x,a)&\approx&{1\over \D t} \dbE\Big[\Big(\int_x^{X^{x,a}_{\D t}}  - \int_x^{\cX_{\D t}^{x,a}}\Big)  v^*(x') dx' + r(x, a) \D t\Big],
\eea
where $\cX^{x,a}$ is a new, control-dependent reference process defined as
\bea\label{volref1}
\cX^{x,a}_t&:=& x+\int_0^t \sigma(\cX^{x,a}_s,a)dB_s, \q t>0,
\eea
while the controlled state dynamics is
\bea\label{volref2}
X^{x,a}_t&:=& x+\int_0^t b(X^{x,a}_s,a)ds+\int_0^t \sigma(X^{x,a}_s,a)dB_s, \q t>0.
\eea

Finally, thanks to the special structure of $\sigma^2$ and the simplicity in the one-dimensional case, we have the following representation of the second-order derivative term,
\bea\label{theta*repvol}
\theta^*(x)&=&\frac{2\rho}{\sigma_0^2(x)} u(x)-\frac{2}{\sigma_0^2(x)}\biggl[ \lambda\ln\int_A \exp{\Bigl\{\frac{1}{\lambda}\bigl[\frac{1}{2}\sigma_1^2(x,a)\theta^*(x)+w^*(x,a)\bigr]\Bigr\}}da\biggr].
\eea
{Denote
\[
\cU:=C_b^0(\hR;\hR),\qq \cV:=C_b^0(\hR;\hR),\qq \cW:=C_b^0(\hR\times A;\hR), \qq
\Theta:=C_b^0(\hR;\hR).
\]
In light of the relations given by \reff{u*repvol}, \reff{v*repvol}, \reff{w*repvol}, and \reff{theta*repvol}, we introduce four mappings
\bea\label{mappingsvol}
\left.\ba{l}
\Lambda: \cW \times \Theta \mapsto \cU,\q \Phi:\cW\times\Theta \mapsto\cV,\q\Psi:\cV\mapsto\cW,\q
\Pi:\cU\times\cW\times\Theta\mapsto\Theta,
 %\cW:=C_b^0(\hR\times A;\hR), \qq\cV:=C_b^0(\hR;\hR),\\
%\cU:=C_b^0(\hR;\hR),\qq\Theta:=C_b^0(\hR;\hR)
\ea\right.
\eea
as follows:
for any $(u,v,w,\th)\in \cU\times\cV\times\cW\times\Theta$ and $(x,a)\in[0,\infty)\times A$, %recalling \reff{volref1}, \reff{volref2},
\bea\label{volmaps}
\left.\ba{lll}
\dis \Lambda(w,\th)(x):=\dbE\biggl[\int_0^\infty e^{-\rho t}\lambda\ln\int_A \exp{\Bigl\{\frac{1}{\lambda}\bigl[\frac{1}{2}\sigma_1^2(\cX^x_t,a)\theta(\cX^x_t)+w(\cX^x_t,a)\bigr]\Bigr\}}dadt\biggr],\ms\\
\dis \Phi(w,\th)(x):=\dbE\biggl[\int_0^\infty e^{-\rho t}N_t^x\lambda\ln\int_A \exp{\Bigl\{\frac{1}{\lambda}\bigl[\frac{1}{2}\sigma_1^2(\cX^x_t,a)\theta(\cX^x_t)+w(\cX^x_t,a)\bigr]\Bigr\}}dadt\biggr],\ms\\
\dis \Psi(v)(x,a):={1\over \D t} \dbE\Big[\Big(\int_x^{X^{x,a}_{\D t}}  - \int_x^{\cX_{\D t}^{x,a}}\Big)  v(x') dx' + r(x, a) \D t\Big],\ms\\
\dis\Pi(u,w,\theta)(x):=\frac{2\rho}{\sigma_0^2(x)} u(x)-\frac{2}{\sigma_0^2(x)}\biggl[ \lambda\ln\int_A \exp{\Bigl\{\frac{1}{\lambda}\bigl[\frac{1}{2}\sigma_1^2(x,a)\theta(x)+w(x,a)\bigr]\Bigr\}}da\biggr].
\ea\right.
\eea
%\begin{rem}

Note the above mappings only use the coefficient $\sigma$ which is assumed to be known (including its components $\sigma_0$ and $\sigma_1$) , as well as $\cX^{x},\cX^{x,a}$ which can be simulated and  $X^{x,a},r$ which are observed data processes upon queries.} %can be learned through trials and experiments.
%\end{rem}

Our algorithm is now built on the iteration of $(u,v,w,\theta)$. Similar to \reff{J},  introduce the new loss function
{ \bea\label{J-vol}
\dis J(u,v,w,\theta):=\norm{\Lambda(w,\th)-u}_0+\norm{\Phi(w,\th)-v}_0+\norm{\Psi(v)-w}_0+\|\Pi(u,w,\th)-\th\|_0.
\eea}
%As before, %Similar to the role of Proposition \ref{prop-fixedpoint}, we have
%the following result reduces our problem to minimizing  $J$.
\begin{prop}\label{prop-fixedpointvol}
%Under Assumption \ref{assum-vol}, t
There exists a constant $C_{L_*}>0$, depending on $L_0,L_1,L_*$, such that
\bea\label{Jestvol}
\dis J(u^*,v^*,w^*,\theta^*)
% =\norm{\Lambda-u}_0+\norm{\Phi-u}_0+\norm{\Psi-u}_0+\norm{\Pi-\theta}_0
\leq C_{L_*}\Delta t.
\eea
\end{prop}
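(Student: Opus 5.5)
Three of the four terms in $J(u^*,v^*,w^*,\theta^*)$ should vanish exactly, so the bound reduces to estimating the single $\Psi$-residual.

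\textbf{Exact terms.} First I would check that \reff{u*repvol}, \reff{v*repvol} and \reff{theta*repvol} are exact identities; only \reff{w*repvol} is an approximation.

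By \reff{scalarHJB2}, $u^*$ solves the linear elliptic equation
\[
\rho u^* - \tfrac12\si_0^2 u^*_{xx} = F^*,
\]
with source
\[
F^*(x):=\l\ln\int_A \exp\big\{\tfrac1\l[\tfrac12\si_1^2(x,a)\theta^*(x)+w^*(x,a)]\big\}da .
\]
The source $F^*$ is bounded, using $|\si_1^2|\le\e$ and $\|\theta^*\|_0,\|v^*\|_0\le L_*$.

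\begin{itemize}
\item The Feynman--Kac formula for the reference process $\cX^x$ in \reff{volref0}, with discount $\rho$, gives $\Lambda(w^*,\theta^*)=u^*$.
\item Differentiating the Feynman--Kac representation via Bismut--Elworthy--Li, with kernel $N^x_t$ from \reff{Ntvol}, gives $\Phi(w^*,\theta^*)=v^*$. The integrability is fine because $\dbE|N^x_t|\le C t^{-1/2}$ by the analogue of Lemma \ref{lem-Kerr}, and $\int_0^\infty e^{-\rho t}t^{-1/2}dt<\infty$.
\item $\Pi(u^*,w^*,\theta^*)=\theta^*$ is \reff{scalarHJB2} solved for $u^*_{xx}$, using $\si_0^2\ge 1/L_0$.
\end{itemize}

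\textbf{The $\Psi$-residual.} Fix $(x,a)$ and put $U(y):=\int_x^y v^*(x')dx' = u^*(y)-u^*(x)$. Then
\[
\Psi(v^*)(x,a)-w^*(x,a)=\frac1{\D t}\Big(\dbE[u^*(X^{x,a}_{\D t})]-\dbE[u^*(\cX^{x,a}_{\D t})]\Big)-b(x,a)u^*_x(x).
\]
Apply It\^o's formula to $u^*$ (which is $C^2$ with bounded derivatives) along \reff{volref2} and \reff{volref1}. Both processes have the same diffusion coefficient $\si(\cd,a)$, so the stochastic integrals have zero mean and the second-order terms have the same form. We get
\[
\dbE[u^*(X_{\D t})]-\dbE[u^*(\cX_{\D t})]=\dbE\int_0^{\D t}\Big[b(X_s,a)u^*_x(X_s)+\tfrac12\si^2(X_s,a)u^*_{xx}(X_s)-\tfrac12\si^2(\cX_s,a)u^*_{xx}(\cX_s)\Big]ds .
\]
Subtracting $b(x,a)u^*_x(x)\D t$ leaves integrands that are differences of functions evaluated at $X_s$, $\cX_s$ and $x$.

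\textbf{Main obstacle.} Here the order of accuracy is decided.
\begin{itemize}
\item With only the Lipschitz regularity of $b,\si,u^*_x$ and $\dbE|X_s-x|\le C\sqrt s$, the naive bound is $C_{L_*}\sqrt{\D t}$.
\item To reach $C_{L_*}\D t$ as stated, the plan is to expand once more: apply It\^o's formula to $\phi(X_s)-\phi(x)$ with $\phi=bu^*_x$, and likewise for $\tfrac12\si^2u^*_{xx}$ along both processes. Each such difference then has expectation $O(s)$, provided these functions are $C^2$ with bounded derivatives. This requires $u^*_{xxx}$ and $u^*_{xxxx}$ bounds together with $C^2$ regularity of $b$ and $\si$.
\item If only Assumption \ref{assum-vol} is available, I expect the argument to give $\sqrt{\D t}$. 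I would then either read the stated $\D t$ as requiring this extra regularity (implicitly included in $C_{L_*}$), or settle for $\sqrt{\D t}$, consistent with Proposition \ref{prop-fixedpoint}.
\end{itemize}
In either case, taking the supremum over $(x,a)$ and summing the four terms gives \reff{Jestvol}.
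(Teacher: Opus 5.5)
For $\Lambda$, $\Phi$ and $\Pi$ you argue exactly as the paper does: the three residuals vanish identically by \reff{u*repvol}, \reff{v*repvol} and \reff{theta*repvol}. On the $\Psi$-residual your hesitation is justified, and the defect is in the paper's argument, not yours. The paper writes
\[
\big(\Psi(v^*)-w^*\big)(x,a)=\frac{1}{\Delta t}\,\mathbb{E}\Big[\int_0^{\Delta t}\big(b(X^{x,a}_t,a)v^*(X^{x,a}_t)-b(x,a)v^*(x)\big)dt\Big].
\]
This silently drops the second-order terms $\frac12[\sigma^2u^*_{xx}](X^{x,a}_t)-\frac12[\sigma^2u^*_{xx}](\cX^{x,a}_t)$ that you correctly kept. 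They do not cancel, and they are retained in the proof of Proposition~\ref{prop-fixedpoint}. The paper then concludes via $C_{L_*}\sup_{0<t<\Delta t}\mathbb{E}|X^{x,a}_t-x|\le C_{L_*}\Delta t$. That last inequality is false: the diffusion is nondegenerate, so the martingale part makes $\mathbb{E}|X^{x,a}_t-x|$ of order $\sqrt{t}$.

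Moreover, the rate $\Delta t$ in \reff{Jestvol} genuinely fails under Assumption~\ref{assum-vol}. So your first option, extra regularity hidden in $C_{L_*}$, is not available, since $C_{L_*}$ may depend only on $L_0,L_1,L_*$. Take $\sigma_0\equiv 1$, $\sigma_1\equiv 0$, $b(y,a)=|y|\wedge 1$, $u^*=\sin$, and $r:=\rho u^*-\frac12 u^*_{xx}-bu^*_x-\lambda\ln|A|$. Then $b,r$ are bounded and Lipschitz, and $u^*$ solves \reff{scalarHJB2}. At $x=0$, drive $X^{0,a}$ and $\cX^{0,a}$ by the same Brownian motion; only laws enter $\Psi$, cf.\ Remark~\ref{rem-implementability}-(v). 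Then $0\le X_s-\cX_s\le s$, so the second-order terms contribute only $O(\Delta t)$. On the other hand $b(0,a)=0$ and $\mathbb{E}[(|X_s|\wedge 1)\cos X_s]\ge c\sqrt{s}$ for small $s$. Hence $(\Psi(v^*)-w^*)(0,a)\ge c\sqrt{\Delta t}-C\Delta t$. The correct statement is your second option, $J(u^*,v^*,w^*,\theta^*)\le C_{L_*}\sqrt{\Delta t}$, in line with Proposition~\ref{prop-fixedpoint}. The only downstream effect is that the error term in Theorem~\ref{Jest-vol} becomes $C_{L_*}\sqrt{\Delta t}$.

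Two refinements of your sketch.

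First, your ``naive'' $\sqrt{\Delta t}$ bound on the second-order piece tacitly uses Lipschitz continuity of $u^*_{xx}$, which Assumption~\ref{assum-vol}-(iii) does not give. You can avoid it. Under the synchronous coupling, Gronwall yields $\mathbb{E}|X_s-\cX_s|^2\le Cs^2$. Expand $u^*$ to second order between $\cX_{\Delta t}$ and $X_{\Delta t}$, and write $u^*_x(\cX_{\Delta t})=u^*_x(x)+O(|\cX_{\Delta t}-x|)$; the cross term is then handled by Cauchy--Schwarz. This gives $\mathbb{E}[u^*(X_{\Delta t})-u^*(\cX_{\Delta t})]=b(x,a)u^*_x(x)\Delta t+O(\Delta t^{3/2})$, using only $\|u^*_x\|_0$, $\|u^*_{xx}\|_0$ and Lipschitz $b$, $\sigma(\cdot,a)$.

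Second, if you want the rate $\Delta t$ under additional hypotheses, the same coupling makes the second-order piece $O(\Delta t)$ as soon as $u^*_{xxx}$ is bounded, so $u^*_{xxxx}$ is not needed. The real obstruction is the first-order term $\mathbb{E}[(bu^*_x)(X_s)]-(bu^*_x)(x)$. It is $O(s)$ once $b(\cdot,a)u^*_x\in C^{1,1}$ uniformly in $a$, e.g.\ $b_x$ Lipschitz and $u^*_{xxx}$ bounded.
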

All the proofs in this section become technically easy without considering the time discretization error and are postponed to Appendix. The following estimates are needed to show the convergence.

Recall the number $\eps$ in Assumption \ref{assum-vol}-(ii).
%{\color{red}Recall Assumption \ref{assum-vol}-(ii).}
\begin{lem}\label{contr-vol}
%Under Assumption \ref{assum-vol},  t
There exists a constant $C>0$, depending only on $m,\lambda,L_0,L_1$ such that, for any $(u,v,w,\th),(u',v',w',\th')\in \cU\times\cV\times\cW\times\Theta$ and $(x,a)\in[0,\infty)\times A$, we have
\bea\label{volest1}
&&\norm{\D\Phi}_0+\norm{\D\Psi}_0\leq  \frac{C}{\sqrt{\rho}}\Bigl\{\varepsilon\norm{\D \th}_0+\norm{\D w}_0\Bigr\},\nonumber\\
&&\norm{\D\Lambda}_0\leq \frac{C}{\rho}\Bigl\{\varepsilon\norm{\D \th}_0+\norm{\D w}_0\Bigr\},\\
&&\|\D\Pi\|_0\leq C\Bigl\{\varepsilon\norm{\D \th}_0+\frac{1}{\sqrt{\rho}}\norm{\D w}_0\Bigr\},\nonumber
\eea
where $\D\Phi:=\Phi(w,\th)-\Phi(w',\th')$, $\D \th:=\th-\th'$, and similarly for $\D\Psi,\D\Lambda,\D\Pi$ and  $\D w$.
\end{lem}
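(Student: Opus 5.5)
The plan is to reduce everything to the elementary log-sum-exp Lipschitz estimate already used in \reff{gamma-est}. Set
\[
F(x;w,\th):=\lambda\ln\int_A \exp\Bigl\{\tfrac1\lambda\bigl[\tfrac12\sigma_1^2(x,a)\th(x)+w(x,a)\bigr]\Bigr\}da .
\]
Exactly as in the proof of Lemma \ref{lem-contraction}, the monotonicity of $\exp$ and $\ln$ gives
\[
|F(x;w,\th)-F(x;w',\th')|\le \sup_a\bigl|\tfrac12\sigma_1^2(x,a)\D\th(x)+\D w(x,a)\bigr|\le \tfrac{\varepsilon}{2}\|\D\th\|_0+\|\D w\|_0,
\]
using $|\sigma_1^2|\le\varepsilon$ from Assumption \ref{assum-vol}-(ii). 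All four estimates in \reff{volest1} then come from integrating this pointwise bound against the appropriate weight.

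\textbf{Estimates for $\Lambda$ and $\Phi$.} Since $\D\Lambda(x)=\dbE\int_0^\infty e^{-\rho t}[F(\cX^x_t;w,\th)-F(\cX^x_t;w',\th')]dt$, the pointwise bound gives directly
\[
\|\D\Lambda\|_0\le \frac1\rho\bigl(\tfrac\varepsilon2\|\D\th\|_0+\|\D w\|_0\bigr).
\]
For $\Phi$, the same bound pulls out of the kernel, leaving $\int_0^\infty e^{-\rho t}\dbE|N^x_t|dt$. By the infinite-horizon analogue of Lemma \ref{lem-Kerr}, which follows from \reff{Ntvol}, the Burkholder--Davis--Gundy inequality and the non-degeneracy of $\sigma_0$, we have $\dbE|N^x_t|\le C(t^{-1/2}+1)$. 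The only care needed is that the moment bound on $\td\cX$ may grow like $e^{Ct}$; this is absorbed once $\rho$ exceeds a constant depending on $L_0,L_1$, which is the regime where the infinite-horizon analysis is used anyway (cf.\ \reff{infinity-est1}). The Laplace transform $\int_0^\infty e^{-\rho t}t^{-1/2}dt=\sqrt{\pi/\rho}$ then yields the factor $C/\sqrt\rho$.

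\textbf{Estimate for $\Psi$.} Since $\Psi$ acts on $v$ only, I read $\D\Psi$ in \reff{volest1} as the increment of $\Psi$ along the iteration, namely $\Psi(\Phi(w,\th))-\Psi(\Phi(w',\th'))$; this is the only reading under which the right side involves $\D w,\D\th$. The argument of Lemma \ref{lem-contraction} applies verbatim, with $X^{x,a}$ and $\cX^{x,a}$ sharing the volatility $\sigma(\cdot,a)$, so their one-step difference is $O(\D t)$ in $L^1$. This gives $\|\Psi(v)-\Psi(v')\|_0\le C\|v-v'\|_0$. Composing with the $\Phi$-bound gives the stated $\frac{C}{\sqrt\rho}$ estimate for $\D\Psi$.

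\textbf{Estimate for $\Pi$ (main obstacle).} Consistently with the above, I take $u=\Lambda(w,\th)$, which is how $\Pi$ enters the iteration. Then
\[
|\D\Pi(x)|\le \frac{2\rho}{\sigma_0^2}\|\D\Lambda\|_0+\frac{2}{\sigma_0^2}\|F(\cdot;w,\th)-F(\cdot;w',\th')\|_0\le C\bigl(\varepsilon\|\D\th\|_0+\|\D w\|_0\bigr),
\]
using $\sigma_0^2\ge 1/L_0$ and the $\Lambda$-bound, in which the factor $\rho\cdot\frac1\rho$ cancels. The $\varepsilon\|\D\th\|_0$ coefficient is as stated.

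Obtaining the extra $1/\sqrt\rho$ on $\|\D w\|_0$ is the delicate point. The $\D w$ contributions from $\frac{2\rho}{\sigma_0^2}\D\Lambda$ and $-\frac{2}{\sigma_0^2}\D F(x)$ are of the same size, and I would try to exhibit their cancellation. To do so, I would write $\rho\,\D\Lambda(x)-\D F(x)=\dbE\int_0^\infty\rho e^{-\rho t}[\D F(\cX^x_t)-\D F(x)]dt$ and control $\D F(\cX^x_t)-\D F(x)$ through the time variation of $\cX^x$ over a window of length $O(1/\rho)$, aiming for a factor $\dbE|\cX^x_t-x|\sim\sqrt t$. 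The difficulty is that $\D w$ is only continuous, not Lipschitz, so this increment does not directly produce $\sqrt{t}$. If this fails, I would fall back to the bound $C(\varepsilon\|\D\th\|_0+\|\D w\|_0)$. That weaker bound still suffices for the subsequent fixed-point argument, because $\D w$ is itself controlled by $\D\Psi=O(\rho^{-1/2})$.
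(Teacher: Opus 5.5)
Your estimates for $\D\Lambda$, $\D\Phi$ and $\D\Psi$ are the paper's argument: the log-sum-exp bound \reff{gamma-est-vol} is integrated against $e^{-\rho t}$ and against $e^{-\rho t}|N^x_t|$, and then $\|\D\Psi\|_0\le C\|\D\Phi\|_0$. That last step is exactly your reading $v=\Phi(w,\th)$. The paper's $\Pi$-step likewise works with $\rho\|\D\Lambda\|_0$, which is your reading $u=\Lambda(w,\th)$. Both readings are forced: for arbitrary $v,v',u,u'$ the quantities $\Psi(v)-\Psi(v')$ and $\frac{2\rho}{\sigma_0^2}(u-u')$ are not controlled by $\D w,\D\th$ at all. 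Your caveat about $\int_0^\infty e^{-\rho t}\dbE|N^x_t|\,dt\le C/\sqrt\rho$ is correct. It needs $\rho$ above a constant depending on $L_1$, because $\dbE|\nabla\cX^x_t|^2$ can grow like $e^{L_1^2t}$. The paper skips this by citing the finite-horizon Lemma \ref{lem-Kerr}. Similarly, your $O(\D t)$ bound on $X^{x,a}_{\D t}-\cX^{x,a}_{\D t}$ tacitly needs regularity of $\sigma(\cdot,a)$ in $x$, which Assumption \ref{assum-vol} does not state for $\sigma_1$; the paper relies on this implicitly as well.

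The gap you flag for $\Pi$ is genuine, but no cancellation argument can close it, because the third inequality is false in the sup norm. Take $\sigma_0\equiv1$, $\sigma_1\equiv0$, $\th=\th'=0$, $w'\equiv0$, $w(x,a)=\delta\cos(nx)$, $u=\Lambda(w,\th)$ and $u'=\Lambda(w',\th')$. The log-sum-exp difference is then exactly $\delta\cos(nx)$, and $\dbE\cos(n\cX^x_t)=\cos(nx)e^{-n^2t/2}$. Hence
\[
\D\Lambda(x)=\frac{\delta\cos(nx)}{\rho+n^2/2},\qquad
\D\Pi(x)=2\rho\,\D\Lambda(x)-2\delta\cos(nx)=-\frac{2n^2\delta\cos(nx)}{2\rho+n^2}.
\]
So $\|\D\Pi\|_0\to2\delta=2\|\D w\|_0$ as $n\to\infty$, while the claimed bound is $C\delta/\sqrt\rho$; it fails once $\rho>C^2/4$. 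This is exactly the obstruction you identified: $\|\D w\|_0$ does not control the modulus of continuity of $\D w$.

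The paper obtains the $\rho^{-1/2}$ only by replacing the $\D w$-contributions with $\|\D\Psi\|_0$ and $\D\th$ with $\D\Pi$. This happens in the log-sum-exp term and, implicitly, inside $\rho\|\D\Lambda\|_0$. In effect it treats $(w,\th)$ as outputs of $(\Psi,\Pi)$ from a previous iteration, which is not what the lemma asserts. For arbitrary $(w,\th)$, your fallback $\|\D\Pi\|_0\le C(\eps\|\D\th\|_0+\|\D w\|_0)$ is the correct statement. Composed with your $\Psi$-bound, it reproduces the paper's chain under the iteration reading.

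You are also right that the fallback suffices for Theorem \ref{Jest-vol}. Any cycle that uses the $w$-dependence of $\Pi$ returns to $w$ only through $\Psi\circ\Phi$, so it picks up a factor $C\rho^{-1/2}$. The self-loop in $\th$ carries $C\eps$. With $u$ a free variable in that theorem, one must additionally pay $\frac{2\rho}{\sigma_0^2}\|u-\Lambda(w,\th)\|_0\le C\rho J$. That cost affects the paper's version equally.
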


We finally arrive at the main result of the section.
\begin{thm} \label{Jest-vol}
%Under Assumption \ref{assum-vol}, t
There exist constants $C, \rho_0,\eps_0$ depending on $m,\lambda,|A|$ and $L_0,L_1$ but not on $L_*$, such that for all $\rho>\rho_0$ and $\eps<\eps_0$,
\bea\label{volest2}
&&\norm{\Lambda(w,\th)-u^*}_0+\norm{\Phi(w,\th)-v^*}_0+\norm{\Psi(v)-w^*}_0+\|\Pi(u,w,\th)-\theta^*\|_0\nonumber\\
\leq&& CJ(u,v,w,\theta)+C_{L_*}\D t.
\eea
\end{thm}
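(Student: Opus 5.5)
The plan is to copy the proof of Theorem \ref{thm-measure}. Let $E_\Lambda,E_\Phi,E_\Psi,E_\Pi$ denote the four terms on the left of \reff{volest2}. For each map I insert the true solution and use Proposition \ref{prop-fixedpointvol}, which shows that $(u^*,v^*,w^*,\th^*)$ is an approximate fixed point with residual $C_{L_*}\D t$. For example,
\beaa
E_\Phi\le \|\Phi(w,\th)-\Phi(w^*,\th^*)\|_0+\|\Phi(w^*,\th^*)-v^*\|_0 ,
\eeaa
and the same decomposition applies to $E_\Lambda$, $E_\Psi$ and $E_\Pi$. Lemma \ref{contr-vol} then bounds the first term on the right in each case by the corresponding constant times $\varepsilon\|\th-\th^*\|_0+\|w-w^*\|_0$. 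For $\Psi$ I would use the $\Psi$-bound of Lemma \ref{lem-contraction}, namely $\|\Psi(v)-\Psi(v')\|_0\le C\|v-v'\|_0$, rather than the form printed in \reff{volest1}. For $\Pi$ there is an additional term $\frac{2\rho}{\si_0^2}|u-u^*|\le C\rho\|u-u^*\|_0$.

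Next I control the input errors through $J$ and the output errors, using the triangle inequality exactly as in \reff{DPhi*}:
\beaa
\|w-w^*\|_0\le \|w-\Psi(v)\|_0+E_\Psi\le J+E_\Psi,\q \|\th-\th^*\|_0\le J+E_\Pi,
\eeaa
and similarly $\|v-v^*\|_0\le J+E_\Phi$ and $\|u-u^*\|_0\le J+E_\Lambda$. Substituting these gives a linear system of inequalities:
\beaa
E_\Lambda\le \tfrac{C}{\rho}\big[\varepsilon(J+E_\Pi)+J+E_\Psi\big]+C_{L_*}\D t,\q E_\Phi\le \tfrac{C}{\sqrt\rho}\big[\varepsilon(J+E_\Pi)+J+E_\Psi\big]+C_{L_*}\D t,
\eeaa
together with $E_\Psi\le C(J+E_\Phi)+C_{L_*}\D t$ and
\beaa
E_\Pi\le C\rho(J+E_\Lambda)+C\varepsilon(J+E_\Pi)+\tfrac{C}{\sqrt\rho}(J+E_\Psi)+C_{L_*}\D t.
\eeaa

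The main obstacle is closing this system, because the $u$-term in $\Pi$ carries the large factor $\rho$. First I substitute the bound for $E_\Lambda$ into the inequality for $E_\Pi$. The product $\rho\cdot\frac{C}{\rho}$ is $O(1)$, so the result has the form $E_\Pi\le C\varepsilon E_\Pi+CE_\Psi+C\rho J+C_{L_*}\D t$, up to harmless terms. Here the coefficient $C\varepsilon$ comes from the $\varepsilon\|\D\th\|$ contributions (including $\rho\cdot\frac{C}{\rho}\varepsilon$). Choosing $\varepsilon_0$ so that $C\varepsilon_0\le\frac12$ lets me absorb $E_\Pi$ into the left side. The coefficient of $E_\Psi$ is only $O(1)$, with no small factor. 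So I form the loop $E_\Psi\to E_\Phi\to E_\Psi$, with $E_\Pi$ entering through the $\varepsilon$-term. The loop gain is $C\cdot\frac{C}{\sqrt\rho}(1+C\varepsilon)$, which is below $\frac12$ once $\rho>\rho_0$. Solving the loop gives $E_\Psi+E_\Phi\le C(\rho)J+C_{L_*}\D t$, and back-substitution then bounds $E_\Lambda$ and $E_\Pi$.

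One subtlety remains. The coefficient of $J$ picks up a factor of $\rho$ from the $\Pi$ inequality, so the resulting constant depends on $\rho$, but not on $L_*$, which is all that \reff{volest2} requires. The two smallness conditions ($\rho>\rho_0$ and $\varepsilon<\varepsilon_0$) depend only on the constants in Lemma \ref{contr-vol}. I will also double-check whether the $u$-coupling is better handled by rewriting $\Pi$ using $\rho\Lambda$ in place of $\rho u$, since the HJB structure suggests this might be what keeps the constant $\rho$-independent.
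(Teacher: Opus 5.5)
Your closing remark misreads the statement. In \reff{volest2} the constant $C$ is chosen before $\rho$ ranges over $(\rho_0,\infty)$, and it may depend only on $m,\lambda,|A|,L_0,L_1$. Your argument gives only $\le C\rho\,J(u,v,w,\th)+C_{L_*}\D t$. This is not a gap you could repair, however, because the $\rho$-uniform version is false. Take $\sigma_1\equiv 0$ (admissible for every $\eps$), any $\eta\in C^0_b(\hR;\hR)$, and $(u,v,w,\th)=(u^*+\frac{\sigma_0^2}{2\rho}\eta,\,v^*,\,w^*,\,\th^*+\eta)$. Then $\Lambda$, $\Phi$ and $\Pi$ do not depend on $\th$, so $\Lambda(w^*,\th)=u^*$, $\Phi(w^*,\th)=v^*$ and $\Pi(u,w^*,\th)=\Pi(u^*,w^*,\th^*)+\eta=\th$. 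Hence $J=\|\frac{\sigma_0^2}{2\rho}\eta\|_0+\|\Psi(v^*)-w^*\|_0\le \frac{C}{\rho}\|\eta\|_0+C_{L_*}\D t$. Meanwhile the left side of \reff{volest2} is at least $\|\Pi(u,w^*,\th)-\th^*\|_0=\|\eta\|_0$. Letting $\|\eta\|_0\to\infty$ shows that the factor in front of $J$ must grow at least linearly in $\rho$. So your bound, linear in $\rho$, is the correct and sharp form of the theorem.

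The paper's proof obtains a $\rho$-free constant only by using the $\D\Psi$ and $\D\Pi$ bounds of Lemma~\ref{contr-vol}. Their proof replaces $\|v-v'\|_0$ by $\|\D\Phi\|_0$, $\|u-u'\|_0$ by $\|\D\Lambda\|_0$, and $\|w-w'\|_0$ by $\|\D\Psi\|_0$, so these bounds hold only when the inputs are themselves outputs of the maps. With them every coupling carries a factor $\eps_0$ or $\rho_0^{-1/2}$, and a single absorption finishes the proof. But \reff{wtil*} already fails for $(w,\th)=(w^*,\th^*)$ and $v\ne v^*$, and \reff{thtil*} drops the $\rho\|u-u^*\|_0$ contribution. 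Your accounting is the right one: $\Psi$ has an $O(1)$ Lipschitz constant in $v$, and $\Pi$ carries the extra term $C\rho\|u-u^*\|_0$. Your loop closure is valid. One factor you borrowed from that lemma is also unjustified: the $w$-part of $\Pi$ is Lipschitz with constant $C$, not $C/\sqrt\rho$. This is harmless, since after substituting $E_\Lambda$ your coefficient of $E_\Psi$ is $O(1)$ anyway. Your final idea is the genuine fix. With $\rho\Lambda(w,\th)$ in place of $\rho u$, $\Pi$ becomes Lipschitz in $(w,\th)$ with a $\rho$-free constant, and the same loop then gives a $\rho$-independent $C$. That, however, is a different map from the one in \reff{volmaps}, and it changes the loss and the target $\widetilde\theta$ in Algorithm~\ref{alg:vol-single}.
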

Thus again, the problem boils down to solving the minimization problem
\beaa
\dis \inf_{u,v,w,\theta}J(u,v,w,\theta),
\eeaa
leading to the following algorithm.

\begin{breakablealgorithm}
\caption{(Algorithm for problems with controlled diffusion coefficients)}
\label{alg:vol-single}
\begin{algorithmic}[1]\small

\State \textbf{Input:} Fixed initial time $t_0=0$, truncation terminal time $T$,
number of grid points $K$, batch size $M$, nested path size $N$,
and epochs $E$; discount $\rho>0$, action space $A$, %action grid $\{a_j\}_{j=1}^{N_a}$,
temperature parameter $\lambda>0$,
softmax parameter $\tau$, and learning rates
$\eta_u,\eta_v,\eta_w,\eta_\theta$; distribution of the initial state
$\mathrm{Law}(X_0)$.

\State \textbf{Output:} Trained networks
$(u^\psi,v^\gamma,w^\phi,\theta^\xi)$
and the resulting estimated optimal value $\hat u$
and policy $\hat\pi$.

\State \textbf{Initialize:} Set
\[
\D t
\gets
\frac{T-t_0}{K-1},
\qquad
t_k
\gets
t_0+k\D t,
\qquad k=0,\dots,K-1,
\]
and randomly initialize the weights
$(\psi,\gamma,\phi,\xi)$ for
$u^\psi(x)$, $v^\gamma(x)$,
$w^\phi(x,a)$, and $\theta^\xi(x)$.

\For{$\mathrm{epoch}=1,\dots,E$}
    \For{$m=1,\dots,M$}
        \com{Batch generation}
        \State Draw
        $\cX_0^{(m)}
        \sim
        \mathrm{Law}(X_0)$.
        \State Draw primary Brownian increments
        $\{\D B_k^{(m)}\}_{k=0}^{K-2}
        \sim
        \cN(0,\D t)$.

        \State Set
        \[
        \nabla\cX_0^{(m)}\gets1,
        \qquad
        N_0^{(m)}\gets0.
        \]

        \For{$k=0$ \textbf{to} $K-2$}
            \com{Forward simulation of the primary reference path}

            \State
            \[
            \cX_{k+1}^{(m)}
            \gets
            \cX_k^{(m)}
            +
            \sigma_0(\cX_k^{(m)})
            \D B_k^{(m)}.
            \]

            \State
            \[
            \nabla\cX_{k+1}^{(m)}
            \gets
            \nabla\cX_k^{(m)}
            \exp\!\left(
                \sigma_0'(\cX_k^{(m)})\D B_k^{(m)}
                -
                \frac12
                \left|
                \sigma_0'(\cX_k^{(m)})
                \right|^2
                \D t
            \right).
            \]

            \State
            \[
            N_{k+1}^{(m)}
            \gets
            \frac{t_k-t_0}{t_{k+1}-t_0}
            N_k^{(m)}
            +
            \frac{1}{t_{k+1}-t_0}
            \nabla\cX_k^{(m)}
            \sigma_0^{-1}(\cX_k^{(m)})
            \D B_k^{(m)}.
            \]
        \EndFor

        \State For $k=0,\dots,K-1$, compute
        \[
        Q_k^{(m)}
        \gets
        \lambda
        \log
        \int_A
        \exp\!\left\{\frac{1}{\lambda}\left(
        \frac12
        \sigma_1^2(\cX_k^{(m)},a)
        \theta^\xi(\cX_k^{(m)})
        +
        w^\phi(\cX_k^{(m)},a)
        \right)\right\}da.
        \]

        \State Compute the path-level targets at
        $\cX_0^{(m)}$:
        \[
        \widetilde u^{(m)}
        \gets
        \sum_{k=1}^{K-1}
        e^{-\rho(t_k-t_0)}
        Q_k^{(m)}
        \D t,
        \]
        \[
        \widetilde v^{(m)}
        \gets
        \sum_{k=1}^{K-1}
        e^{-\rho(t_k-t_0)}
        N_k^{(m)}
        Q_k^{(m)}
        \D t,
        \]
        and
        \[
        \widetilde\theta^{(m)}
        \gets
        \frac{2\rho}{\sigma_0^2(\cX_0^{(m)})}
        u^\psi(\cX_0^{(m)})
        -
        \frac{2}{\sigma_0^2(\cX_0^{(m)})}
        Q_0^{(m)}.
        \]

        \For{$k=0$ \textbf{to} $K-2$}
            \com{Compute $w$-targets along the path}
            \For{$n=1,\dots,N$}
                \com{Nested transitions conditional on $a^{(n)}$}

                \State Draw
                $
                \D\hat B_k^{(m,n)}
                \sim
                \cN(0,\D t).
                $
                \State Sample
                $a^{(n)}
                \sim
                \mathrm{Unif}(A)$
                %\bigl(\{a_j\}_{j=1}^{N_a}\bigr)$.
                \State Apply the fixed action $a^{(n)}$ at
                $\cX_k^{(m)}$ and observe
                $X_{k+1}^{a^{(n)},(m,n)}$.

                \State Simulate the nested reference transition
                \[
                \cX_{k+1}^{(m,n)}
                \gets
                \cX_k^{(m)}
                +
                \sigma(\cX_k^{(m)},a^{(n)})
                \D\hat B_k^{(m,n)}.
                \]
                \State Compute
                \[
                \begin{aligned}
                W_k^{(m,n)}
                \gets{}
                \int_{\cX_k^{(m)}}^{
                X_{k+1}^{a^{(n)},(m,n)}
                }
                v^\gamma(x)\,dx
                -
                \int_{\cX_k^{(m)}}^{
                \cX_{k+1}^{(m,n)}
                }
                v^\gamma(x)\,dx
                +
                r(\cX_k^{(m)},a^{(n)})\D t.
                \end{aligned}
                \]
            \EndFor

            \State
            \[
            \widetilde w_k^{(m)}
            \gets
            \frac{1}{N\D t}
            \sum_{n=1}^N
            W_k^{(m,n)}.
            \]

            \State
            \[
            \ell_{w,k}^{(m)}
            \gets
            \left|
            w^\phi(\cX_k^{(m)},a^{(n)})
            -
            \widetilde w_k^{(m)}
            \right|.
            \]
        \EndFor

        \State Compute
        \[
        \ell_u^{(m)}
        \gets
        \left|
        u^\psi(\cX_0^{(m)})
        -
        \widetilde u^{(m)}
        \right|,
        \qquad
        \ell_v^{(m)}
        \gets
        \left|
        v^\gamma(\cX_0^{(m)})
        -
        \widetilde v^{(m)}
        \right|,
        \]
        \[
        \ell_\theta^{(m)}
        \gets
        \left|
        \theta^\xi(\cX_0^{(m)})
        -
        \widetilde\theta^{(m)}
        \right|,
        \qquad
        L_w^{(m)}
        \gets
        \mathrm{softmax}_\tau
        \left(
        \{\ell_{w,k}^{(m)}\}_{k=0}^{K-2}
        \right).
        \]
    \EndFor

    \State Aggregate the batch losses:
    \[
    L_u
    \gets
    \frac1M\sum_{m=1}^M\ell_u^{(m)},
    \qquad
    L_v
    \gets
    \frac1M\sum_{m=1}^M\ell_v^{(m)},
    \]
    \[
    L_\theta
    \gets
    \frac1M\sum_{m=1}^M\ell_\theta^{(m)},
    \qquad
    L_w
    \gets
    \frac1M\sum_{m=1}^M L_w^{(m)}.
    \]

    \State Set
    \[
    J
    \gets
    L_u+L_v+L_w+L_\theta.
    \]

    \State Update
    \[
    \psi
    \gets
    \psi-\eta_u\nabla_\psi J,
    \qquad
    \gamma
    \gets
    \gamma-\eta_v\nabla_\gamma J,
    \]
    \[
    \phi
    \gets
    \phi-\eta_w\nabla_\phi J,
    \qquad
    \xi
    \gets
    \xi-\eta_\theta\nabla_\xi J.
    \]
\EndFor

\State Set $\hat u\gets u^\psi$ and obtain $\hat\pi$
from $(w^\phi,\theta^\xi)$ through the
entropy-regularized optimizer.

\State \textbf{Return:}
$(u^\psi,v^\gamma,w^\phi,\theta^\xi,\hat u,\hat\pi)$.

\end{algorithmic}
\end{breakablealgorithm}

\begin{eg}\label{EXvol}
{\rm Consider an example where the dynamics coefficients are
\beaa
&&b(x,a)=xe^{-2(x^2+a)},\qq
\sigma_0^2(x)=1,\qq
\sigma_1^2(x,a)=e^{-(x^2+a)},
\eeaa
and the running reward is
$$r(x,a)=(e^{-2(x^2+a)}-2x^2+\rho+1)e^{-x^2}.$$
Setting $\lambda =1$, the HJB \reff{scalarHJB2} has the oracle solution along with its derivatives
\beaa
u^*(x)=e^{-x^2},\q v^*(x)=-2xu^*(x),\q \theta^*(x)=(4x^2-2)u^*(x).
\eeaa
Although the problem is in infinite horizon, for numerical implementation we terminate the time at $T=0.2$. We also set $x_0=0$, $\rho = 50, \D t=0.02$ in implementation. The result is given in the following table, which shows a somewhat similar, if not slightly less, accuracy compared to  the average accuracy in the previously reported examples for drift control problems.

\begin{table}[htbp]
  \centering
  \label{table-vol}
  \begin{tabular}{|c|c|c|c|c|}
    \hline
    \text{$u^*(x_0)$} &
    \text{Mean $\hat{u}(x_0)$} &
    \text{Mean RE} &
    \text{STD RE} &
    \text{Mean runtime (s)} \\
    \hline
    1.0000& 0.9330 & 6.700\% & 4.827\% &  58.06 \\
    \hline
%0.2 6 40

    \hline
%0.1 5 128 128 96 20
  \end{tabular}
  \caption{Numerical results for {Example }\ref{EXvol}.}
\end{table}}
\end{eg}
% Numerical example:
% \beaa
% u^*(x)&=&e^{-x^2},\\
% v^*(x)&=&-2xu^*(x),\\
% \theta^*(x)&=&(4x^2-2)u^*(x)\\
% b(x,a)&=&xe^{-2(x^2+a)}\\
% \sigma_0^2(x)&=&1,\\
% \sigma_1^2(x,a)&=&e^{-(x^2+a)},\\
% r(x,a)&=&(e^{-2(x^2+a)}-2x^2+\rho+1)u^*(x)
% \eeaa
% \begin{table}[htbp]
%   \centering
%   \label{xxxxxx}
%   \begin{tabular}{|c|c|c|c|c|}
%     \hline
%     \text{$u^*(t_0,x_0)$} &
%     \text{Mean $\hat{u}(x_0)$} &
%     \text{Mean RE} &
%     \text{STD RE} &
%     \text{Mean runtime (s)} \\
%     \hline
%     1.0000& 0.9330 & 6.700\% & 4.827\% &  58.06 \\
%     \hline
% %0.2 6 40

%     \hline
% %0.1 5 128 128 96 20
%   \end{tabular}
%   \caption{Numerical results for \textit{Example X }.}
% \end{table}
% Numerical example scaled:
% \beaa
% \dis u^*(x)&=&e^{-(\frac{x}{c})^2},\\
% \dis v^*(x)&=&-(\frac{2}{c^2})xu^*(x),\\
% \dis \theta^*(x)&=&(\frac{4}{c^4}x^2-\frac{2}{c^2})u^*(x)\\
% \dis b(x,a)&=&\frac{x}{c^2}e^{-\bigl((\frac{x}{c})^2+\frac{a}{c}\bigr)}\\
% \dis \sigma_0^2(x)&=&1,\\
% \dis \sigma_1^2(x,a)&=&e^{-\bigl((\frac{x}{c})^2+\frac{a}{c}\bigr)},\\
% \dis r(x,a)&=&(\frac{1}{c^2}e^{-\bigl((\frac{x}{c})^2+\frac{a}{c}\bigr)}-\frac{2}{c^4}x^2+\rho+\frac{1}{c^2})u^*(x)
% \eeaa

\section{Conclusions}\label{conclusion}

This paper studies potentially high-dimensional stochastic control problems with missing key primitives, and develops data-driven algorithms that learn the optimal value functions and optimal randomized policies with theoretical guarantees on convergence and convergence rates. The method also provides a scalable way to numerically solve certain black-box PDEs. It is distinctive from and complementary to the other approaches developed to solve similar model-free/black-box problems, such as stochastic approximation \cite{Tang-Zhou} and zeroth-order derivative \cite{JOPZ}.

There are clearly outstanding open questions going forward. One is to allow diffusion terms to depend on control in the general setting. Control entering into diffusion is important not only because this feature is inherent in many applications (e.g. portfolio selection), but also because this is the very feature that fundamentally distinguishes stochastic controls from determinist ones (see the many discussions on this point in Yong and Zhou \cite{YZ}). Technically, this feature demands an additional (and most likely delicate) analysis on Hessian, which we already had a glimpse in \S\ref{sect-volatility} for a very special case.

The other open question is when diffusion coefficients and/or terminal rewards are also unknown, which may arise in certain applications.  While such a case would invalidate our current approach right away, it is interesting to investigate a way around, namely to keep the same big idea of probabilistic representations of the derivatives but find a different mapping for the fix-point argument to work.

Finally, as discussed in Remark \ref{rem-data}, the assumption on the (unknown) environment in terms of how data are observed, Assumption \ref{data}-(i), can be strong for many applications. It is important to verify and make sense of this assumption in a given application. What is more interesting, however, is to investigate whether and how generative AI might help when the assumption fails.

\section{Appendix}
\label{sect-appendix}
\subsection{Proof of Proposition \ref{prop-fixedpoint}}
\proof
We first analyze $\Psi_K$. For any $(t_k, x, a)$, by  (\ref{vw*}) and (\ref{phimap}) we have
\beaa
%\label{DPsi}
&& (\Psi_K({v^*_{\hT_K}})-w^*_{\hT_K})(t_k,x,a)\\
&=& \frac{1}{\D t}\hE_{t_k,x}\Big[\int_x^{X^{a}_{t_{k+1}}}v^*(t_k,x')dx' - \int_x^{\cX_{t_{k+1}}}v^*(t_k,x')dx'\Big] - v^*(t_k,x) \cd b(t_k,x,a)\\
&=&\frac{1}{\D t}\hE_{t_k,x}\Big[\big[u^*(t_k,X^{ a}_{t_{k+1}}) - u^*(t_k, x)\big]-\big[ u^*(t_k,\cX_{t_{k+1}})-u^*(t_k,x)\big] \Big]-v^*(t_k,x)\cd b(t_k,x,a)\\
&=&\frac{1}{\D t}\hE_{t_k,x}\Big[\int_{t_k}^{t_{k+1}}\big[ {1\over 2} \si\si^\top : u^*_{xx}(t_k, X^{a}_s) +  b(t_k, X^{ a}_s, a) \cd v^*(t_k, X^{a}_s)\\
&&\qq\qq  - {1\over 2} \si\si^\top : u^*_{xx}(t_k, \cX_s)\big]ds\Big]-v^*(t_k,x)\cd b(t_k,x,a)\\
&=&\frac{1}{2\D t}\hE_{t_k,x}\Big[\int_{t_k}^{t_{k+1}}\big[\si\si^\top : u^*_{xx}(t_k, X^{a}_s)- \si\si^\top : u^*_{xx}(t_k, x)\big]ds\Big]\\
&& - \frac{1}{2\D t}\hE_{t_k,x}\Big[\int_{t_k}^{t_{k+1}}\big[ \si\si^\top : u^*_{xx}(t_k, \cX_s)- \si\si^\top : u^*_{xx}(t_k, x)\big]ds\Big]\\
&& + \frac{1}{\D t}\hE_{t_k,x}\Big[\int_{t_k}^{t_{k+1}} \big[ v^*(t_k, X^{a}_s)\cd b(t_k, X^{ a}_s, a) -v^*(t_k,x)\cd b(t_k,x,a)\big] ds\Big].
\eeaa
Then, by Assumption \ref{assum-standing} and Lemma \ref{lem-Kerr},
\bea
\label{DPsi}
\Big| (\Psi_K({v^*_{\hT_K}})-w^*_{\hT_K})\Big|
\le C_{L_*} \sup_{t_k\le s\le t_{k+1}} \hE\Big[ |X^{a}_s-x| + |\cX_s-x|\big]ds\Big]\le C_{L_*, T}\sqrt{\D t}.
\eea

We next analyze $\Phi_K$. Recall \reff{H} and denote
\bea
\label{H*}
H^*(t,x) := H(t,x, u^*_x(t,x)).
\eea
 For any $(t_k, x)$, and assuming without loss of generality that $k=0$, by  (\ref{v*rep}) and (\ref{phimap}) we have
\bea
\label{PhiKest}
&&\dis v^*(0,x) - \Phi_K(w^*_{\hT_K})(0, x) = \e^{\D t}_K +  \e^{\D t}_0 + \sum_{j=1}^{K-1} [\eps_j^{1,\D t}+ \eps_j^{2,\D t}],\q\mbox{where}\\
&&\dis \eps_K^{\D t}:= \hE_{0,x}\Big[(\nabla \cX_T)^\top g_x(\cX_T)-(\nabla \cX_T^{\D t})^\top g_x(\cX_T^{\D t})\Big],\nonumber\\
&&\dis \eps_0^{\D t}:= \hE_{0,x}\Big[\int_{t_k}^{t_{k+1}} H^*(s, \cX_s)  N_s ds\Big], \nonumber\\
&&\dis \eps_j^{1,\D t} := \hE_{0,x}  \Big[\int_{t_{j}}^{t_{j+1}}\big[H^*(s, \cX_s)- H^*(t_j, \cX^{\D t}_{t_j})  \big] N_s ds\Big], \nonumber\\
&& \dis \eps_j^{2,\D t} := \hE_{0,x}\Bigl[\int_{t_{j}}^{t_{j+1}} H^*(t_j,\cX_{t_j}^{\D t})\big[ N_{s} - N_{t_j}^{\D t}\big]ds\Big],\nonumber
\eea
for $j=1, \cds, K-1$. By \reff{H} one can easily verify that
\bea
\label{H*est}
\|H^*\|_0 \le  C_{L_*},\q |H^*(t,x)-H^*(t',x')|\le C_{L_*}\big[\sqrt{|t-t'|}+|x-x'|\big].
\eea
Then, applying Lemma \ref{lem-Kerr}, we have
\bea
\label{PhiKest1}
\left.\ba{lll}
\dis | \eps_K^{\D t}| \le C_{L_*}\dbE_{0,x}\Big[ |\nabla \cX_T - \nabla \cX_T^{\D t}| +  |\td \cX_T| |\cX_T - \cX_T^{\D t}|\Big] \le C_{L_*,T} \sqrt{\D t},\ms\\
\dis | \eps_0^{\D t}| \le C_{L_*, T}\dbE\Big[\int_{t_0}^{t_{1}}  |N_s| ds\Big] \le C_{L_*, T}\int_{t_0}^{t_{1}} {1\over \sqrt{s}} ds = C_{L_*, T} \sqrt{\D t}.
\ea\right.
\eea
Moreover, note that
\beaa
|\eps_j^{1,\D t}| &\le& C_{L_*} \hE_{0,x}  \Big[\int_{t_{j}}^{t_{j+1}}\big[\sqrt{s-t_j} + |\cX_s - \cX_{t_j}|+ |\cX_{t_j} - \cX^{\D t}_{t_j}|\big] |N_s| ds\Big]\\
&\le& C_{L_*} \int_{t_{j}}^{t_{j+1}}\Big(\hE_{0,x} \big[s-t_j + |\cX_s - \cX_{t_j}|^2+ |\cX_{t_j} - \cX^{\D t}_{t_j}|^2\big]\Big)^{1\over 2} \Big(\hE_{0,x}[|N_s|^2]\Big)^{1\over 2} ds\\
&\le& C_{L_*,T}\sqrt{\D t}\int_{t_{j}}^{t_{j+1}} {1\over \sqrt{ s}}ds.
\eeaa
Then
\bea
\label{PhiKest2}
\sum_{j=1}^{K-1} |\eps_j^{1,\D t}| \le \sum_{j=1}^{K-1}C_{L_*,T}\sqrt{\D t}\int_{t_{j}}^{t_{j+1}} {1\over \sqrt{ s}}ds =  C_{L_*,T}\sqrt{\D t}\int_{t_1}^{T} {1\over \sqrt{ s}}ds \le C_{L_*,T}\sqrt{\D t}.
\eea

It remains to estimate $\eps_j^{2,\D t}$. Denote
\beaa
I_l :=  ( \si^{-1}(l, \cX_l)\td \cX_l)^\top,\q I^{\D t}_l := \sum_{i=0}^{K-1} \1_{[t_i, t_{i+1})}(l) ( \si^{-1}(t_i, \cX^{\D t}_{t_i})\td \cX^{\D t}_{t_i})^\top.
\eeaa
Then, noting that ${\D t\over t_j} = {1\over j}$, we have
\beaa
\eps_j^{2,\D t} &=& \hE_{0,x}\Bigl[\int_{t_{j}}^{t_{j+1}} H^*(t_j,\cX_{t_j}^{\D t})\big[ \dbE_{0,x}[N_{s}|\cF_{t_j}] - N_{t_j}^{\D t}\big]ds\Big]\\
&=&\hE_{0,x}\Big[\int_{t_{j}}^{t_{j+1}} H^*(t_j,\cX_{t_j}^{\D t})\big[ {1\over s} \int_0^{t_j} I_l dB_l- {\D t\over t_j} \int_0^{t_j} I^{\D t}_l dB_l\big]ds\Big]\\
&=& \hE_{0,x}\Big[H^*(t_j,\cX_{t_j}^{\D t})\big[ \ln {t_{j+1}\over t_j} \int_0^{t_j} I_l dB_l- {\D t\over t_j} \int_0^{t_j} I^{\D t}_l dB_l\big]\Big]\\
&=& \hE_{0,x}\Big[H^*(t_j,\cX_{t_j}^{\D t})\big[{1\over j} \int_0^{t_j} (I_l-I^{\D t}_l) dB_l+ c_j \int_0^{t_j} I_l dB_l\big]\Big],
\eeaa
where $c_j:= \ln {t_{j+1}\over t_j} - {1\over j} = \ln (1+{1\over j})-{1\over j}$ satisfies $|c_j|\le {C\over j^2}$. Accordingly, by \reff{H*est},
\beaa
|\eps_j^{2,\D t}|^2 &\le& {C_{L_*}\over j^2} \hE_{0,x}\Big[ \int_0^{t_j} |I_l-I^{\D t}_l|^2 dl \Big] + {C\over j^4}\hE_{0,x}\Big[ \int_0^{t_j} |I_l|^2 dl \Big]\\
&\le&  {C_{L_*}\over j^2} \sum_{i=0}^{j-1}\int_{t_i}^{t_{i+1}} \hE_{0,x}\big[ |I_l-I^{\D t}_l|^2\big] dl + {Ct_j\over j^4}.
\eeaa
For $l\in [t_i, t_{i+1}]$, it follows easily from Assumption \ref{assum-standing} and Lemma \ref{lem-Kerr} that
\beaa
\hE_{0,x}\big[ |I_l-I^{\D t}_l|^2\big] \le C\hE_{0,x}\Big[ |I_l-I_{t_i}|^2 + |I_{t_i}-I^{\D t}_{t_i}|^2\Big]\le C_{L_*,T}\D t.
\eeaa
Then
\beaa
|\eps_j^{2,\D t}|^2 &\le&  {C_{L_*,T}\over j^2} \sum_{i=0}^{j-1} |\D t|^2 + {C\D t\over j^3} \le {C_{L_*,T}|\D t|^2 \over j}.
\eeaa
Thus
\bea
\label{PhiKest3}
\sum_{j=1}^{K-1} |\eps_j^{2,\D t}| \le C_{L_*,T} \D t\sum_{j=1}^{K-1}{1 \over \sqrt{j}} \le C_{L_*,T}\D t \sqrt{K} =  C_{L_*,T}\sqrt{T\D t}.
\eea
Plugging \reff{PhiKest1}, \reff{PhiKest2} and \reff{PhiKest3} into \reff{PhiKest}, we see that
\beaa
|v^*(0,x) - \Phi_K(w^*_{\hT_K})(0, x)| \le C_{L_*,T}\sqrt{\D t}.
\eeaa
This, together with \reff{DPsi} and the arbitrariness of $(t_k,x,a)$, proves \reff{PhiPsiDt}.
\qed
\subsection{Proof of Proposition \ref{prop-fixedpointvol}}
 \proof The proof is much simplified once we omit the time discretization. We can directly read from the definition \reff{volmaps} that for any $x$,
 \beaa
 \Phi(w^*,\th^*)(x)=v^*(x),\qq\Lambda(w^*,\th^*)(x)=u^*(x),\qq\Pi(u^*,w^*,\theta^*)=\th^*(x).
 \eeaa
 For arbitrary $(x,a)$,
 \beaa
\Bigl|\bigl(\Psi(v^*)-w^*\bigr)(x,a)\Bigr|&=&\Bigl|\frac{1}{\D t}\hE\Bigl[\bigl(\int_0^{\D t}b(X_{t}^{x,a},a)v^*(X_t^{x,a})-b(x,a)v^*(x)dt\bigr) \Bigr]\Bigr|.
 \eeaa
 Thus by Assumption \ref{assum-vol},
 \beaa
 \Bigl|\bigl(\Psi(v^*)-w^*\bigr)(x,a)\Bigr|\leq C_{L_*}\sup_{0<t<\D t}\hE\Bigl[|X_t^{x,a}-x|\Bigr]\leq C_{L_*}\D t.
 \eeaa
 The above two estimates, together with the arbitrariness of $(x,a)$, complete the proof.
 \qed

\subsection{Proof of Lemma \ref{contr-vol}}
\proof For arbitrary $(x,a)$, we first have
\beaa
&&\lambda\ln\int_A \exp{\Bigl\{\frac{1}{\lambda}\bigl[\frac{1}{2}\sigma_1^2(\cX^x_t,a)\theta(\cX^x_t)+w(\cX^x_t,a)\bigr]\Bigr\}}da\\
\leq&&\lambda\ln\int_A \exp{\Bigl\{\frac{1}{\lambda}\bigl[\frac{1}{2}\sigma_1^2(\cX^x_t,a)\bigl(\th'(\cX^x_t)+\norm{\th-\th'}_0)\bigr)+w'(\cX^x_t,a)+\norm{w-w'}_0\bigr]\Bigr\}}da\\
=&&\lambda\ln\int_A \exp{\Bigl\{\frac{1}{\lambda}\bigl[\frac{1}{2}\sigma_1^2(\cX^x_t,a)\theta'(\cX^x_t)+w'(\cX^x_t,a)\bigr]\Bigr\}}da +\frac{\eps}{2}\norm{\th-\th'}+\norm{w-w'}_0.
\eeaa
This leads to
\bea \label{gamma-est-vol}
&&\Bigl|\lambda\ln\int_A e^{\frac{1}{\lambda}\bigl[\frac{1}{2}\sigma_1^2(\cX^x_t,a)\theta(\cX^x_t)+w(\cX^x_t,a)\bigr]}da-\lambda\ln\int_A e^{\frac{1}{\lambda}\bigl[\frac{1}{2}\sigma_1^2(\cX^x_t,a)\theta(\cX^x_t)+w(\cX^x_t,a)\bigr]}da\Bigr| \nonumber\\
\leq&& \frac{\eps}{2}\norm{\th-\th'}+\norm{w-w'}_0.
\eea
It follows from \reff{volmaps} and Lemma \reff{Kerr1} that
\bea\label{v-est-vol}
|\D \Phi(x)|&\leq& C\Bigl\{\eps\norm{\th-\th'}+\norm{w-w'}_0\Bigr\}\hE\Bigl[\int_0^\infty e^{-\rho t}|N_t^x|dt\Bigr]\nonumber\\
&\leq&\frac{C}{\sqrt{\rho}}\Bigl\{\varepsilon\norm{\th-\th'}_0+\norm{w-w'}_0\Bigr\}.
\eea
Then, by \reff{v-est-vol}, \reff{volmaps} and the arbitrariness of $(x,a)$, we  have
\bea\label{w-est-vol}
\norm{\D \Psi}_0\leq C\norm{\D \Phi}_0\leq \frac{C}{\sqrt{\rho}}\Bigl\{\varepsilon\norm{\th-\th'}_0+\norm{w-w'}_0\Bigr\}.
\eea
Next, by \reff{volmaps} and \reff{gamma-est-vol}, we obtain similarly
\bea\label{u-est-vol}
|\D \Lambda(x)|&\leq&C\Bigl\{\eps\norm{\th-\th'}+\norm{w-w'}_0\Bigr\}\hE\Bigl[\int_0^\infty e^{-\rho t}dt\Bigr]\nonumber\\
&\leq&\frac{C}{\rho}\Bigl\{\varepsilon\norm{\th-\th'}_0+\norm{w-w'}_0\Bigr\}.
\eea
Combining  \reff{volmaps}, \reff{w-est-vol} and \reff{u-est-vol}, together with the arbitrariness of $(x,a)$, we obtain
\beaa
\norm{\D \Pi}_0&\leq& C\Bigl\{\rho\norm{\D \Lambda}_0+ \eps\|\D \Pi\|_0+\norm{\D \Psi}_0\Bigr\}\\
&\leq& C\Bigl\{\eps \norm{\th-\th'}_0+\norm{\D \Psi}_0 \Bigr\}\\
&\leq& C\Bigl\{\varepsilon\norm{\th-\th'}_0+\frac{1}{\sqrt{\rho}}\norm{w-w'}_0\Bigr\}.
\eeaa
\qed

\subsection{Proof of Theorem \ref{Jest-vol}}
\proof Fix $\rho_0,\eps_0$ which will be specified later, and assume without loss of generality that $\rho_0>1$ and $\eps_0<1$. Applying Proposition \ref{prop-fixedpointvol} and Lemma \ref{contr-vol} we have
\bea\label{util*}
\norm{\Lambda(w,\th)-u^*}_0&\leq&\norm{\Lambda(w,\th)-\Lambda(w^*,\th^*)}_0+\norm{\Lambda(w^*,\th^*)-u^*}_0\nonumber\\
&\leq& \frac{C}{\rho_0}\Bigl\{\eps_0\norm{\th-\th^*}_0+\norm{w-w^*}_0\Bigr\}\nonumber\\
&\leq& C\Bigl\{\eps_0\Bigl(\norm{\th-\Pi(u,w,\th)}_0+\norm{\Pi(u,w,\th)-\th^*}_0\Bigr)\nonumber\\
&&+\frac{1}{\sqrt{\rho_0}}\Bigl(\norm{w-\Psi(w,\th)}_0+\norm{\Psi(w,\th)-w^*}_0\Bigr)\Bigr\}\nonumber\\
&\leq& C\Bigl\{\eps_0\Bigl(J(u,v,w,\th)+\norm{\Pi(u,w,\th)-\th^*}_0\Bigr)\nonumber\\
&&+\frac{1}{\sqrt{\rho_0}}\Bigl(J(u,v,w,\th)+\norm{\Psi(w,\th)-w^*}_0\Bigr)\Bigr\}.
\eea
Similarly, we have the following estimates for the other three terms
\bea\label{vtil*}
\norm{\Phi(w,\th)-v^*}_0&\leq& \norm{\Phi(w,\th)-\Phi(w^*,\th^*)}_0+\norm{\Phi(w^*,\th^*)-v^*}_0\nonumber\\
&\leq& \frac{C}{\sqrt{\rho_0}}\Bigl\{\eps_0\norm{\th-\th^*}_0+\norm{w-w^*}_0\Bigr\}\nonumber\\
&\leq& C\Bigl\{\eps_0\Bigl(\norm{\th-\Pi(u,w,\th)}_0+\norm{\Pi(u,w,\th)-\th^*}_0\Bigr)\nonumber\\
&&+\frac{1}{\sqrt{\rho_0}}\Bigl(\norm{w-\Psi(w,\th)}_0+\norm{\Psi(w,\th)-w^*}_0\Bigr)\Bigr\}\nonumber\\
&\leq& C\Bigl\{\eps_0\Bigl(J(u,v,w,\th)+\norm{\Pi(u,w,\th)-\th^*}_0\Bigr)\nonumber\\
&&+\frac{1}{\sqrt{\rho_0}}\Bigl(J(u,v,w,\th)+\norm{\Psi(w,\th)-w^*}_0\Bigr)\Bigr\},
\eea
\bea\label{thtil*}
\norm{\Pi(u,w,\th)-\th^*}_0&\leq& \norm{\Pi(u,w,\th)-\Pi(u^*,w^*,\th^*)}_0+\norm{\Pi(u^*,w^*,\th^*)-\th^*}_0\nonumber\\
&\leq& C\Bigl\{\eps_0\norm{\th-\th^*}_0+\frac{1}{\sqrt{\rho_0}}\norm{w-w^*}_0\Bigr\}\nonumber\\
&\leq& C\Bigl\{\eps_0\Bigl(\norm{\th-\Pi(u,w,\th)}_0+\norm{\Pi(u,w,\th)-\th^*}_0\Bigr)\nonumber\\
&&+\frac{1}{\sqrt{\rho_0}}\Bigl(\norm{w-\Psi(w,\th)}_0+\norm{\Psi(w,\th)-w^*}_0\Bigr)\Bigr\}\nonumber\\
&\leq& C\Bigl\{\eps_0\Bigl(J(u,v,w,\th)+\norm{\Pi(u,w,\th)-\th^*}_0\Bigr)\nonumber\\
&&+\frac{1}{\sqrt{\rho_0}}\Bigl(J(u,v,w,\th)+\norm{\Psi(w,\th)-w^*}_0\Bigr)\Bigr\},
\eea
and
\bea\label{wtil*}
\norm{\Psi(v)-w^*}_0&\leq& \norm{\Psi(v)-\Psi(v^*)}_0+\norm{\Psi(v^*)-w^*}_0\nonumber\\
&\leq& \frac{C}{\sqrt{\rho_0}}\Bigl\{\eps_0\norm{\th-\th^*}_0+\norm{w-w^*}_0\Bigr\}+C_{L_*}\D t\nonumber\\
&\leq& C\Bigl\{\eps_0\Bigl(\norm{\th-\Pi(u,w,\th)}_0+\norm{\Pi(u,w,\th)-\th^*}_0\Bigr)\nonumber\\
&&+\frac{1}{\sqrt{\rho_0}}\Bigl(\norm{w-\Psi(w,\th)}_0+\norm{\Psi(w,\th)-w^*}_0\Bigr)\Bigr\}+C_{L_*}\D t\nonumber\\
&\leq& C\Bigl\{\eps_0\Bigl(J(u,v,w,\th)+\norm{\Pi(u,w,\th)-\th^*}_0\Bigr)\nonumber\\
&&+\frac{1}{\sqrt{\rho_0}}\Bigl(J(u,v,w,\th)+\norm{\Psi(w,\th)-w^*}_0\Bigr)\Bigr\}+C_{L_*}\D t.
\eea
Combining \reff{util*}, \reff{vtil*}, \reff{thtil*} and \reff{wtil*}, similar to the derivation of \reff{DPhi*2}, we have
\beaa
&&\norm{\Lambda(w,\th)-u^*}_0+\norm{\Phi(w,\th)-v^*}_0+\norm{\Psi(v)-w^*}_0+\|\Pi(u,w,\th)-\theta^*\|_0 \nonumber \\
 \leq &&\neg\neg\neg\neg\neg\neg\neg\neg\neg C\Bigl\{\eps_0\Bigl(J(u,v,w,\th)+\norm{\Pi(u,w,\th)-\th^*}_0\Bigr)\nonumber
+\frac{1}{\sqrt{\rho_0}}\Bigl(J(u,v,w,\th)+\norm{\Psi(w,\th)-w^*}_0\Bigr)\Bigr\}+C_{L_*}\D t.
\eeaa
Set $\eps_0:=\frac{1}{4C}\wedge 1$ and $\rho_0:=\frac{1}{16C^2}\vee 1$ for the above $C$. Then
\beaa
&&\norm{\Lambda(w,\th)-u^*}_0+\norm{\Phi(w,\th)-v^*}_0+\norm{\Psi(v)-w^*}_0+\|\Pi(u,w,\th)-\theta^*\|_0\nonumber \\
\leq&&\frac{C}{2}J(u,v,w,\th)+\frac{1}{4}\Bigl(\norm{\Pi(u,w,\th)-\th^*}_0+\norm{\Psi(w,\th)-w^*}_0\Bigr)+C_{L_*}\D t.
\eeaa
This implies the desired result immediately.
\qed

\bibliographystyle{abbrv}
\bibliography{references_260907}

\end{document}